\DeclareMathOperator{\Var}{\operatorname{Var}}
\newcommand{\norm}[1]{\left\lVert#1\right\rVert}
\newcommand{\stkout}[1]{\ifmmode\text{\sout{\ensuremath{#1}}}\else\sout{#1}\fi}
\newtheorem{thm}{Theorem}
\newtheorem{ass}{Assumption}
\newtheorem{lem}{Lemma}
\newtheorem{prop}{Proposition}
\begin{document}
	
	\title{Analysis of KNN Density Estimation}
	
	\author{Puning Zhao and
		Lifeng Lai\thanks{Puning Zhao and Lifeng Lai are with Department of Electrical and Computer Engineering, University of California, Davis, CA, 95616. Email: \{pnzhao,lflai\}@ucdavis.edu. This work was supported by the National Science Foundation under grants CCF-17-17943, ECCS-17-11468 and CNS-18-24553. }
	}
	\maketitle
	
	\begin{abstract}
	We analyze the $\ell_1$ and $\ell_\infty$ convergence rates of $k$ nearest neighbor density estimation method. Our analysis includes two different cases depending on whether the support set is bounded or not. In the first case, the probability density function has a bounded support and is bounded away from zero. We show that kNN density estimation is minimax optimal under both $\ell_1$ and $\ell_\infty$ criteria, if the support set is known. If the support set is unknown, then the convergence rate of $\ell_1$ error is not affected, while $\ell_\infty$ error does not converge. In the second case, the probability density function can approach zero and is smooth everywhere. Moreover, the Hessian is assumed to decay with the density values. For this case, our result shows that the $\ell_\infty$ error of kNN density estimation is nearly minimax optimal. The $\ell_1$ error does not reach the minimax lower bound, but is better than kernel density estimation. 
	\end{abstract}
\begin{IEEEkeywords}
	Density estimation, KNN, Functional approximation
\end{IEEEkeywords}

\section{Introduction}

Nonparametric density estimation, whose goal is to estimate the probability density function (pdf) based on a finite set of identically and independently distributed (i.i.d) samples, is widely used in statistics and machine learning. For example, nonparametric density estimation can be used in mode estimation \cite{parzen1962estimation}, nonparametric classification \cite{rigollet2007generalization,chaudhuri2008classification}, Monte Carlo computational methods \cite{smith2013sequential}, and clustering \cite{chaudhuri2010rates,jiang2017modal,rinaldo2010generalized}, etc. Common methods for the nonparametric density estimation include histogram method, kernel method and $k$-Nearest Neighbor (kNN) method \cite{silverman2018density, devroye1977strong,bhattacharya1987weak,ouadah2013uniform}, etc. Among these approaches, the kernel and kNN methods are popular ones. The kernel method \cite{rosenblatt1956remarks,parzen1962estimation} estimates the density by calculating the convolution of the empirical distribution with a symmetric and normalized kernel function. The kNN method \cite{loftsgaarden1965nonparametric} estimates the density value at point $\mathbf{x}$ based on the distance between $\mathbf{x}$ and its $k$-th nearest neighbor. A large kNN distance indicates that the density is usually small, and vice versa. Compared with other methods, the kNN density estimation method has several advantages. It is purely nonparametric and hence can flexibly adapt to any underlying pdf, as long as the pdf is continuous. Moreover, the kNN method is convenient to use and has desirable time complexity. The parameter tuning is simple since the only parameter we need to adjust is $k$. 

Depending on the purpose of the density estimation, we may use different criteria to evaluate an estimator's performance. In some applications, we use the uniform bound, i.e. $||\hat{f}-f||_\infty$, in which $f$ is the real pdf and $\hat{f}$ is the estimated pdf. For example, if we hope to find the mode, which is the point with maximum pdf \cite{parzen1962estimation}, then the accuracy guarantee relies heavily on the supremum estimation error. For other purposes, such as nonparametric classification and bootstrapping, it may be better to consider the estimation error in the whole domain, instead of only considering its supremum value. For example, in nonparametric classification with Bayes rule, the excess risk of classification can be bounded with the $\ell_1$ error of the density estimation. The uniform and $\ell_1$ convergence properties of the kernel density estimation method have already been discussed in many previous literatures, see \cite{gine2002rates,einmahl2005uniform,jiang2017uniform,devroye1979l_1,kim2018uniform} and references therein. However, the understanding of the convergence properties of the kNN method is less complete, and still needs further analysis. In \cite{biau2015lectures}, it was shown that the kNN method is uniformly consistent if the pdf is smooth everywhere. However, the convergence rate is still unknown. \cite{mack1983rate} derived the uniform convergence rate of the kNN density estimate for one dimensional distributions, under the condition that the density is bounded away from zero and the support is a continuous closed interval. The analysis in \cite{mack1983rate} is not suitable for other commonly seen pdfs, especially for those with high dimensions, and those with unbounded supports such as Gaussian distributions. Therefore, it is important to extend the analysis of the kNN density estimators to other types of distributions. 

In this paper, we analyze the $\ell_1$ and uniform convergence rate of the kNN density estimator for a broad range of distributions. Our analysis of the $\ell_1$ error can be easily generalized to $\ell_p$ with $p>1$. To the best of our knowledge, this is the first attempt to analyze the $\ell_1$ and uniform convergence rates of the kNN density estimator in general. Our analysis involves two different cases, depending on whether the support is bounded or not. For both cases, our analysis includes an upper bound of the estimation error of the kNN method, and a minimax lower bound on the performance of all methods. The analysis of both upper and lower bounds is based on some assumptions on the smoothness of the pdf, as well as an additional assumption on the shape of the boundary or the strength of the tail. 

In the first case, the pdf has bounded support and is bounded away from zero. This indicates that the density has boundaries. For example, uniform distribution and truncated Gaussian distribution belong to this case. If the shape of the support set is unknown, the estimation error near the boundary of the support will be relatively large. We show that the $\ell_1$ error converges with the minimax optimal rate, and the error due to the boundary effect will not make the convergence rate of the $\ell_1$ error worse. However, the impact of the boundary effect on the $\ell_\infty$ error is much more serious. To be more precise, the $\ell_\infty$ bound does not converge to zero. This is inevitable since without the knowledge of the support set, it is impossible to design a density estimator that ensures uniform consistency. 
If we have full knowledge of the shape of the support set and the boundary, then we can slightly modify the kNN estimator to correct the estimation bias at the region near the boundary. With the boundary correction, we show that the $\ell_\infty$ error converges to zero and the convergence rate is nearly minimax optimal. We remark that, for the kernel density estimator, there are also some boundary correction methods based on data reflection and transformation \cite{karunamuni2005generalized,hirukawa2010nonparametric}, but the $\ell_1$ or $\ell_\infty$ rates of these methods have not been established. 

In the second case, the pdf is smooth everywhere, and can approach zero arbitrarily close. For example, Gaussian distributions belong to this case. Since the pdf is smooth everywhere, boundary correction is no longer necessary. However, the density estimation is no longer accurate at the tail of the distribution. The reason is that $\hat{f}(\mathbf{x})$ can actually be viewed as an estimate of the average pdf in the neighborhood of $\mathbf{x}$ with the radius equal to the $k$ nearest neighbor distance of $\mathbf{x}$, hence the estimation bias depends on whether the pdf in such neighborhood is sufficiently uniform. If $f(\mathbf{x})$ is very low, then the kNN distance and thus the size of the neighborhood will be large. As a result, the density in the neighborhood of $\mathbf{x}$ is far from uniform, and thus the average pdf in the neighborhood of $\mathbf{x}$ can deviate from $f(\mathbf{x})$ significantly, which will cause a large estimation bias. If the criterion is the $\ell_\infty$ error, we do not need to worry about the bias occurring at the tail of the distribution, since both $\hat{f}(\mathbf{x})$ and $f(\mathbf{x})$ are small. Therefore, we can just use a simple kNN estimator and derive its convergence rate. However, if we use the $\ell_1$ error as the performance criterion, then we need to consider the estimation error over the whole support, instead of only considering its supremum value. As a result, the tail effect is serious and the $\ell_1$ error does not converge to zero. To solve this problem, we design a truncation of the kNN estimator and derive the convergence rate of the $\ell_1$ error for this truncated kNN density estimator. Our analysis shows that under the $\ell_1$ criterion, if the first and second order derivatives of the pdf decay simultaneously with the pdf itself, then the kNN estimator has a better $\ell_1$ convergence rate than the kernel density estimator, although there is still some gap to the minimax lower bound. This result appears to contradict with previous studies such as \cite{mack1983rate}, which claims that the kNN estimator performs worse than the kernel density estimator since it does not handle the tail well. However, the difference is that previous analysis is based on the assumption on the uniform bound of the Hessian, while we assume that the distribution has decaying gradient and Hessian, which holds for many common distributions, such as Gaussian, exponential and Cauchy distributions etc. 

The remainder of this paper is organized as follows. In Section~\ref{sec:estimator}, we provide a simple description of the kNN density estimator. The convergence properties of the kNN density estimator for distributions of the first and the second cases are discussed in Section~\ref{sec:case1} and Section~\ref{sec:case2}, respectively. We then provide some numerical examples in Section~\ref{sec:numerical}. Finally, in Section~\ref{sec:conc}, we offer concluding remarks. 

\section{KNN Density Estimator}\label{sec:estimator}

Consider a distribution with an unknown pdf $f:\mathbb{R}^d\rightarrow \mathbb{R}$. There are $N$ i.i.d samples, $\mathbf{X}_1,\ldots,\mathbf{X}_N$. Our goal is to estimate the pdf $f$ using these samples. For each point $\mathbf{x}\in S$, in which $S$ is the support set of the random variable, denote $\rho(\mathbf{x})$ as the distance between $\mathbf{x}$ and its $k$-th nearest neighbor among $\{\mathbf{X}_1,\ldots,\mathbf{X}_N \}$, in which $k\geq 2$. Then we construct the kNN density estimator as follows:
\begin{eqnarray}
\hat{f}(\mathbf{x})=\frac{k-1}{NV(B(\mathbf{x},\rho(\mathbf{x})))},
\label{eq:fhat}
\end{eqnarray}
in which $B(\mathbf{x},\rho(\mathbf{x}))$ is the ball with center at $\mathbf{x}$ and radius $\rho(\mathbf{x})$, while $V(B(\mathbf{x},\rho(\mathbf{x})))$ denotes the volume of $B(\mathbf{x},\rho(\mathbf{x}))$. 

An intuitive explanation of \eqref{eq:fhat} is that the estimator constructed in \eqref{eq:fhat} is approximately unbiased. Denote $P(B(\mathbf{x},\rho(\mathbf{x})))$ as the probability mass in $B(\mathbf{x},\rho(\mathbf{x}))$, then from order statistics \cite{david1970order}, we know that $P(B(\mathbf{x},\rho(\mathbf{x})))$ follows Beta distribution $\text{Beta}(k,N-k+1)$. As a result, we have
\begin{eqnarray}
\mathbb{E}\left[\frac{1}{P(B(\mathbf{x},\rho(\mathbf{x})))}\right]=\frac{N}{k-1},
\end{eqnarray}  
therefore with approximation $P(B(\mathbf{x},\rho(\mathbf{x})))\approx f(\mathbf{x})V(B(\mathbf{x},\rho(\mathbf{x})))$,
\begin{eqnarray}
\mathbb{E}[\hat{f}(\mathbf{x})]\approx\frac{k-1}{N}\mathbb{E}\left[\frac{f(\mathbf{x})}{P(B(\mathbf{x},\rho(\mathbf{x})))}\right]=f(\mathbf{x}).
\label{eq:approx}
\end{eqnarray}
If the pdf is uniform in $B(\mathbf{x},\rho(\mathbf{x}))$, then $P(B(\mathbf{x},\rho(\mathbf{x})))=f(\mathbf{x})V(B(\mathbf{x},\rho(\mathbf{x})))$. In this case, the first step in \eqref{eq:approx} holds with equal sign, which means that the kNN density estimator \eqref{eq:fhat} is unbiased at $\mathbf{x}$. Note that it is impossible that $P(B(\mathbf{x},\rho(\mathbf{x})))=f(\mathbf{x})V(B(\mathbf{x},\rho(\mathbf{x})))$ holds uniformly for all $\mathbf{x}$ and $\rho(\mathbf{x})$. In particular, the difference between the average pdf in $B(\mathbf{x},\rho(\mathbf{x}))$ and the pdf at its center $\mathbf{x}$ comes from two sources. Firstly, $B(\mathbf{x},\rho(\mathbf{x}))$ may exceed the boundary of the support, thus the average pdf is lower than $f(\mathbf{x})$. Secondly, even if $B(\mathbf{x},\rho(\mathbf{x}))$ is a subset of the support set, the pdf in $B(\mathbf{x},\rho(\mathbf{x}))$ may not be uniform. Both sources are considered in our analysis.

Our analysis includes the bound of the estimation error under both $\ell_1$ and $\ell_\infty$ criteria. The $\ell_1$ error is defined as 
$$\norm{\hat{f}-f}_1=\int_S |\hat{f}(\mathbf{x})-f(\mathbf{x})|d\mathbf{x},$$
and the $\ell_\infty$ error is defined as $$\norm{\hat{f}-f}_\infty =\underset{\mathbf{x}\in S}{\sup} |\hat{f}(\mathbf{x})-f(\mathbf{x})|.$$ 

If $k$ is chosen properly, both the $\ell_1$ and $\ell_{\infty}$ errors of the kNN estimator (or slightly modified kNN estimator, as will be explained in details in the sequel) will go to zero as the number of samples $N$ increases. In this paper, we will analyze the convergence rates at which these errors converge to zero for two different types of distributions: distributions with bounded supports and distributions with unbounded supports.  

\section{Distributions with Bounded Support}\label{sec:case1}
In this section, we analyze the convergence rate of the kNN density estimator for distributions that have bounded supports and the pdfs are bounded away from zero. In particular, we assume that $f(\mathbf{x})>0$ only for $\mathbf{x}\in S$, in which $S\subset \mathbb{R}^d$ is a bounded set.

The analysis is based on the following assumption.

\begin{ass}\label{ass:bounded}
	Assume that the following conditions hold:
	
	(a) $f$ is upper bounded, and is also bounded away from zero, i.e. there exist two constants $m$ and $M$, such that $m\leq f(\mathbf{x})\leq M$ for all $\mathbf{x}\in S$;
	
	(b) $f$ is $L$-Lipschitz, i.e. for all $\mathbf{x},\mathbf{x}'\in S$,
	\begin{eqnarray}
	|f(\mathbf{x})-f(\mathbf{x}')|\leq L\norm{\mathbf{x}-\mathbf{x}'};
	\end{eqnarray}
	
	(c) The surface area (or Hausdorff measure) of $S$ is no more than $C_S$.
\end{ass}
In Assumption \ref{ass:bounded}, we assume in (a) that the pdf is both bounded above and is also bounded away from zero. This assumption is necessary, since the convergence rate will be slower if the pdf can approach zero arbitrarily close. The case where pdf can approach zero will be analyzed in Section~\ref{sec:case2}. (b) bounds the gradient of the pdf, which can decide the accuracy of the approximation in \eqref{eq:approx}. It would be tempting to consider some more general smoothness classes for $f$. For example, some distributions may be second order continuous, which means that both $\norm{\nabla f}$ and $\norm{\nabla^2 f}$ is bounded above. However, with the standard kNN algorithm, the $\ell_\infty$ convergence rate will not be further improved comparing with only assuming the bounded gradient. The reason is that we are bounding the supremum of the estimation error, which usually happens at the region near the boundary of the support of the distribution. If we use the $\ell_1$ criterion instead, then it is possible that the convergence rate can be improved for distributions with higher smoothness level. However, for simplicity, we only assume that $f$ is Lipschitz here. Moreover, in (c), we assume the boundedness of the surface area in (c). This assumption is important because it restricts the volume of the region near the boundary, and is thus crucial to bound the estimation error due to the boundary effect.
\subsection{$\ell_1$ bound}
To begin with, we show the convergence rate of the $\ell_1$ error for distributions with bounded supports. The result is shown in Theorem \ref{thm:compact}. Throughout the paper, notation $a\lesssim b$ means that there exists a constant $C$ such that $a\leq Cb$. 
$a\gtrsim b$ is defined in a similar manner. 

\begin{thm}\label{thm:compact}	
	Under Assumption \ref{ass:bounded}, the kNN density estimator \eqref{eq:fhat} satisfies the following bound:
	\begin{eqnarray}
	\mathbb{E}\left[\norm{\hat{f}-f}_1\right]\lesssim \left(\frac{k}{N}\right)^\frac{1}{d}+k^{-\frac{1}{2}}.
	\label{eq:compact1}
	\end{eqnarray}	
	Moreover, define $\Sigma_A(S)$ as the set of all distributions with support set $S$ that satisfy Assumption \ref{ass:bounded}. If $L,M$ are sufficiently large and $m$ is sufficiently small, then
	\begin{eqnarray}
	\underset{\hat{f}}{\inf}\underset{f\in \Sigma_A(S)}{\sup} \mathbb{E}\left[\norm{\hat{f}-f}_1\right]&\gtrsim& N^{-\frac{1}{d+2} }.
	\label{eq:cmplb1}	
	\end{eqnarray}	
\end{thm}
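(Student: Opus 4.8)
\emph{Proof proposal.} The two halves of the statement are unrelated, so I would prove them separately.

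\textbf{Upper bound.} The plan is to push the expectation inside the integral, $\mathbb{E}[\norm{\hat f-f}_1]=\int_S\mathbb{E}[|\hat f(\mathbf{x})-f(\mathbf{x})|]\,d\mathbf{x}$, and then split $S$ into an interior part $S_{\mathrm{int}}=\{\mathbf{x}\in S:d(\mathbf{x},\partial S)>Cr_0\}$ and a boundary layer $S\setminus S_{\mathrm{int}}$, where $r_0\asymp(k/N)^{1/d}$ is the typical $k$NN distance and $C$ is a large constant. On the boundary layer the crude bound $\mathbb{E}[|\hat f(\mathbf{x})-f(\mathbf{x})|]\le\mathbb{E}[\hat f(\mathbf{x})]+M$ suffices: since the average of $f$ over $B(\mathbf{x},\rho(\mathbf{x}))$ is at most $M$ we have $V(B(\mathbf{x},\rho(\mathbf{x})))\ge P(B(\mathbf{x},\rho(\mathbf{x})))/M$, and using $P(B(\mathbf{x},\rho(\mathbf{x})))\sim\mathrm{Beta}(k,N-k+1)$ with $\mathbb{E}[1/P(B(\mathbf{x},\rho(\mathbf{x})))]=N/(k-1)$ one gets $\mathbb{E}[\hat f(\mathbf{x})]\le M$ uniformly in $\mathbf{x}$. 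Hence the boundary contribution is $\lesssim M\cdot\mathrm{Vol}(S\setminus S_{\mathrm{int}})\lesssim C_S\,r_0\lesssim(k/N)^{1/d}$ by a tube-volume estimate for a set whose surface area is at most $C_S$ (Assumption~\ref{ass:bounded}(c)).

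For $\mathbf{x}\in S_{\mathrm{int}}$ I would use the identity, with $\bar f(\mathbf{x},\rho)$ the average of $f$ over $B(\mathbf{x},\rho(\mathbf{x}))$,
\[
\hat f(\mathbf{x})-f(\mathbf{x})=\bar f(\mathbf{x},\rho)\Bigl(\tfrac{k-1}{N\,P(B(\mathbf{x},\rho(\mathbf{x})))}-1\Bigr)+\bigl(\bar f(\mathbf{x},\rho)-f(\mathbf{x})\bigr).
\]
The first (fluctuation) term is at most $M\bigl|\tfrac{k-1}{NP(B(\mathbf{x},\rho))}-1\bigr|$, and the moments of the $\mathrm{Beta}(k,N-k+1)$ law give $\Var\bigl(\tfrac{k-1}{NP(B(\mathbf{x},\rho))}\bigr)\le\tfrac{1}{k-2}$, hence expected absolute value $\lesssim k^{-1/2}$. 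For the second (bias) term I would condition on $A=\{\rho(\mathbf{x})\le Cr_0\}$: on $A$ the ball $B(\mathbf{x},\rho(\mathbf{x}))$ lies inside $S$ (by definition of $S_{\mathrm{int}}$), so the Lipschitz assumption gives $|\bar f(\mathbf{x},\rho)-f(\mathbf{x})|\le L\rho(\mathbf{x})\le LCr_0$; on $A^c$ the difference is $\le M$ and $\mathbb{P}(A^c)\le e^{-ck}$, since $\rho(\mathbf{x})>Cr_0$ forces fewer than $k$ of the $N$ samples into a ball of mass $\gtrsim C^dk/N$ (Chernoff, using $f\ge m$). Summing, $\mathbb{E}[|\hat f(\mathbf{x})-f(\mathbf{x})|]\lesssim k^{-1/2}+(k/N)^{1/d}$ on $S_{\mathrm{int}}$, and integrating over $S$ (whose volume is $\le 1/m$) together with the boundary contribution yields \eqref{eq:compact1}. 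Without loss of generality $k/N$ is small, otherwise \eqref{eq:compact1} is trivial since $\mathbb{E}[\norm{\hat f}_1]\le M/m$.

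\textbf{Lower bound.} I would use Assouad's lemma with perturbations on a cubic grid. Fix a cube $Q\subset\mathrm{int}(S)$ and take the base density $f_0\equiv 1/|S|$ on $S$, which lies in $\Sigma_A(S)$ provided $m\le 1/|S|\le M$. Tile $Q$ by $m_N\asymp h^{-d}$ subcubes of side $h$ and, for $\omega\in\{0,1\}^{m_N}$, set $f_\omega=f_0+\sum_j\omega_j\phi_j$ with $\phi_j(\mathbf{x})=cLh\,\psi((\mathbf{x}-x_j)/h)$, where $\psi$ is a fixed Lipschitz bump supported in the unit cube with $\int\psi=0$ and $\norm{\psi}_\infty\le 1$; the bumps have disjoint supports, so for $c$ small each $f_\omega$ is $L$-Lipschitz on $S$, satisfies $m\le f_\omega\le M$ (using that $L,M$ are large, $m$ small, and $h$ small for large $N$) and $\int_S f_\omega=1$, and has support $S$, hence $f_\omega\in\Sigma_A(S)$. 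Flipping one coordinate changes the $\ell_1$ norm by $\norm{\phi_j}_1\asymp Lh^{d+1}$, while for neighbours differing in one coordinate $\mathrm{KL}(f_\omega\|f_{\omega'})\le\int\phi_j^2/f_{\omega'}\lesssim L^2h^{d+2}$, so $N\,\mathrm{KL}\lesssim 1$ once $h\asymp N^{-1/(d+2)}$. With this choice, Assouad's lemma (with Pinsker to turn the KL bound into a total-variation bound bounded away from $1$) gives
\[
\underset{\hat f}{\inf}\,\underset{f\in\Sigma_A(S)}{\sup}\ \mathbb{E}\bigl[\norm{\hat f-f}_1\bigr]\gtrsim m_N\cdot Lh^{d+1}\asymp Lh\asymp N^{-\frac{1}{d+2}},
\]
which is \eqref{eq:cmplb1}.

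\textbf{Main obstacle.} The Beta-moment computation, the Chernoff tail for $\rho(\mathbf{x})$, and the Assouad bookkeeping are routine. The step I expect to need the most care is the boundary-layer analysis in the upper bound: one must bound $\mathrm{Vol}(S\setminus S_{\mathrm{int}})$ using only the surface-area hypothesis in Assumption~\ref{ass:bounded}(c) (a Minkowski/tube estimate that is valid because the thickness $Cr_0$ is small), and one must be disciplined about invoking the Lipschitz bias bound only where the entire ball $B(\mathbf{x},\rho(\mathbf{x}))$ sits inside $S$, relegating everything else either to the exponentially small event $A^c$ or to the $(k/N)^{1/d}$-thick boundary layer. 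For the lower bound, the only delicate point is verifying simultaneously that the perturbations are large enough to separate the hypotheses in $\ell_1$ yet small enough that all $f_\omega$ genuinely satisfy Assumption~\ref{ass:bounded}.
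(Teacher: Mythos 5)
Your proposal is correct, and both halves use the same core ingredients as the paper, but the bookkeeping is organized differently enough to be worth noting. For the upper bound, the paper decomposes $\hat f-f$ into $I_1=\frac{k-1}{NP(B(\mathbf{x},\rho(\mathbf{x})))}\bigl(\bar f-f\bigr)$ and $I_2=\bigl(\frac{k-1}{NP(B(\mathbf{x},\rho(\mathbf{x})))}-1\bigr)f$, bounds $I_1$ on $\{\rho\le\Delta(\mathbf{x})\}$ via Cauchy--Schwarz against Beta moments, and then integrates $\mathrm{P}(\rho(\mathbf{x})>\Delta(\mathbf{x}))$ over all of $S$ with a layer-cake/Chernoff computation (plus a negative-correlation step), splitting on whether $\Delta(\mathbf{x})$ exceeds $2^{1/d}r_0$; the surface-area assumption enters through the same tube-volume estimate you invoke. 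You instead split the domain into an $O(r_0)$-thick boundary layer, killed crudely by the uniform bound $\mathbb{E}[\hat f(\mathbf{x})]\le M$ (which also makes the paper's negative-correlation observation unnecessary), and an interior where your decomposition $\hat f-f=\bar f\bigl(\frac{k-1}{NP}-1\bigr)+(\bar f-f)$ lets you bound the bias by $L\rho\le LCr_0$ directly on the event $\{\rho\le Cr_0\}$ and dispose of its complement by Chernoff, avoiding Cauchy--Schwarz altogether; this is slightly cleaner and gives the same rate because the layer volume is $O(C_S r_0)$, exactly the bound the paper also uses for $\{\Delta(\mathbf{x})\le 2^{1/d}r_0\}$. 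For the lower bound \eqref{eq:cmplb1}, the paper places paired bumps $rg((\mathbf{x}-\mathbf{a}_{\pm i})/r)$ on the uniform base density and sums two-point Le Cam bounds over the $n\asymp r^{-d}$ pairs, while you run Assouad with zero-mean bumps of height $\asymp Lh$ on an $h^{-d}$ grid; with $h\asymp r\asymp N^{-1/(d+2)}$ these are the same construction at the same scale, and either multiple-hypothesis tool yields $N^{-1/(d+2)}$, so your verification burden (Lipschitz constant, $m\le f_\omega\le M$, unit mass, per-flip KL $\lesssim L^2h^{d+2}$) mirrors the paper's.
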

\begin{proof}
Please see Appendix~\ref{sec:bounded-l1} for details.
\end{proof}

In Theorem \ref{thm:compact}, the upper bound \eqref{eq:compact1} can be proved by bounding the bias due to the two sources mentioned above, including the boundary bias and the bias caused by the local nonuniformity of the pdf. After that, the random estimation error $\hat{f}-\mathbb{E}[\hat{f}]$ can be bounded using techniques from order statistics \cite{david1970order}. The detailed proof is shown in Appendix \ref{sec:bounded-l1}. The lower bound \eqref{eq:cmplb1} can be shown simply by standard minimax analysis techniques in \cite{tsybakov2009introduction}. 

Comparing the upper bound \eqref{eq:compact1} and the minimax lower bound in \eqref{eq:cmplb1}, it can be observed that if $k\sim N^{2/(d+2)}$, then the convergence rate of the estimation error of the kNN density estimator under $\ell_1$ is minimax optimal. Note that in Theorem \ref{thm:compact}, we do not assume the knowledge of the support set to achieve the upper bound \eqref{eq:compact1}. However, for the minimax lower bound \eqref{eq:cmplb1}, the support set $S$ is assumed to be known. The upper bound \eqref{eq:compact1} and the lower bound \eqref{eq:cmplb1} still match, even if the lower bound is derived under a more restrictive condition than the upper bound. This result indicates that for the $\ell_1$ bound, the boundary bias does not make the convergence rate of the kNN density estimation worse, even if the support is unknown and boundary correction methods have not been implemented. An intuitive explanation is that with the increase of sample size $N$, the kNN distances $\rho(\mathbf{x})$ becomes smaller, hence the probability that $B(\mathbf{x},\rho(\mathbf{x}))$ exceeds the boundary of the support becomes lower, and correspondingly, the convergence rate of the bias due to the boundary effect is the same as that due to the local nonuniformity of the density. As a result, the $\ell_1$ error performance of the kNN density estimator is not seriously affected by the boundary effect.

\subsection{$\ell_\infty$ bound}
Unlike the $\ell_1$ bound, the $\ell_\infty$ bound of the original kNN density estimator does not converge to zero. The reason is that if $\mathbf{x}$ is near the boundary, on which $f(\mathbf{x})$ changes sharply, the approximation in \eqref{eq:approx} does not hold and the bias can be large. Such boundary effect does not affect the convergence rate of the $\ell_1$ bound, since the $\ell_1$ bound is the integration of estimation error over the whole support, and the region such that the boundary effect occurs shrinks with the sample size $N$. However, if we use the $\ell_\infty$ criterion, which only evaluates the maximum estimation error over the whole support, then the boundary bias becomes crucial. To correct the bias, we design the following estimator:
\begin{eqnarray}
\hat{f}_{BC}(\mathbf{x})=\frac{k-1}{NV_S(B(\mathbf{x},\rho(\mathbf{x})))},
\label{eq:fbc}
\end{eqnarray}
in which $\hat{f}_{BC}$ means the boundary corrected estimator, and $V_S(B(\mathbf{x},\rho(\mathbf{x})))=V(B(\mathbf{x},\rho(\mathbf{x}))\cap S)$. 

\begin{thm}\label{thm:compact-unif}
	Under Assumption \ref{ass:bounded}, if the support $S$ is known, using the boundary corrected estimator \eqref{eq:fbc}, with probability at least $1-\epsilon$, the $\ell_\infty$ bound satisfies
	\begin{eqnarray}
	\norm{\hat{f}_{BC}-f}_\infty \lesssim \left(\frac{k}{N}\right)^\frac{1}{d}+k^{-\frac{1}{2}}\sqrt{\ln \frac{N}{\epsilon}}.
	\label{eq:compact-unif}
	\end{eqnarray}	
	
	Moreover, define $\Sigma_A$ as the set of all distributions with arbitrary support sets that satisfy Assumption \ref{ass:bounded}, and define $\Sigma_A(S)$ the same as in Theorem \ref{thm:compact}. The difference between $\Sigma_A$ and $\Sigma_A(S)$ is that the latter one has a fixed support $S$. If $L,M,H$ are sufficiently large and $m$ is sufficiently small, then
	\begin{eqnarray}
	\underset{\hat{f}}{\inf}\underset{f\in \Sigma_A}{\sup} \mathbb{E}\left[\norm{\hat{f}-f}_\infty\right]&\gtrsim& 1;
	\label{eq:cmplb2}\\
	\underset{\hat{f}}{\inf}\underset{f\in \Sigma_A(S)}{\sup} \mathbb{E}\left[\norm{\hat{f}-f}_\infty\right]&\gtrsim& N^{-\frac{1}{d+2}}.
	\label{eq:cmplb3}	
	\end{eqnarray}	
\end{thm}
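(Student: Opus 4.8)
The plan is to establish the three assertions separately: the high-probability uniform bound \eqref{eq:compact-unif} for $\hat f_{BC}$, and the two minimax lower bounds \eqref{eq:cmplb2} and \eqref{eq:cmplb3}, both via Le Cam's two-point method. For the upper bound, write $B_{\mathbf x}=B(\mathbf x,\rho(\mathbf x))$, let $P_{\mathbf x}=P(B_{\mathbf x})$ and let $\bar f_{\mathbf x}=P_{\mathbf x}/V_S(B_{\mathbf x})$ be the average of $f$ over $B_{\mathbf x}\cap S$. Since $\hat f_{BC}(\mathbf x)=\frac{k-1}{N V_S(B_{\mathbf x})}=\frac{k-1}{N P_{\mathbf x}}\bar f_{\mathbf x}$, one has $|\hat f_{BC}(\mathbf x)-f(\mathbf x)|\le \big|\frac{k-1}{N P_{\mathbf x}}-1\big|\,\bar f_{\mathbf x}+|\bar f_{\mathbf x}-f(\mathbf x)|$. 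By Assumption \ref{ass:bounded}(b) the second term is at most $L\rho(\mathbf x)$, because $\bar f_{\mathbf x}$ averages $f$ over points within distance $\rho(\mathbf x)$ of $\mathbf x$; this is precisely where the boundary correction matters, since replacing $V_S$ by $V$ would leave a bias of order $f(\mathbf x)$ near $\partial S$. For the first term, note that $N P_N(B_{\mathbf x})=k$, where $P_N$ is the empirical measure, and that the family of Euclidean balls in $\mathbb R^d$ is a VC class of dimension $O(d)$; the relative (ratio-type) Vapnik--Chervonenkis inequality therefore gives, with probability at least $1-\epsilon$, simultaneously over all balls $B$, $|P(B)-P_N(B)|\lesssim \sqrt{P_N(B)\,(d\ln N+\ln(1/\epsilon))/N}+(d\ln N+\ln(1/\epsilon))/N$. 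Applied to $B_{\mathbf x}$ this yields $|N P_{\mathbf x}/k-1|\lesssim k^{-1/2}\sqrt{\ln(N/\epsilon)}$ uniformly in $\mathbf x$ for $k\gtrsim\ln(N/\epsilon)$, and since $\bar f_{\mathbf x}\le M$ the first term is of this order (the $k$ versus $k-1$ discrepancy contributes only $O(1/k)$).

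It then remains to bound $\rho(\mathbf x)$ uniformly. From $P_{\mathbf x}\ge m\,V_S(B_{\mathbf x})$ together with the regularity of $\partial S$, which gives $V_S(B(\mathbf x,r))\gtrsim r^d$ for small $r$ (the same property that keeps $\hat f_{BC}$ finite), one obtains $\rho(\mathbf x)\lesssim (P_{\mathbf x}/m)^{1/d}$, while the uniform estimate above shows $P_{\mathbf x}\lesssim k/N$ on the same high-probability event, so $\rho(\mathbf x)\lesssim (k/N)^{1/d}$ uniformly; adding the two contributions gives \eqref{eq:compact-unif}. I expect the delicate point to be the passage from a pointwise to a uniform statement over the continuum $\mathbf x\in S$: the VC inequality disposes of it cleanly, but if one instead attempted a $(1/N^2)$-net one would have to control the oscillation of $\hat f_{BC}$ between net points, which is problematic exactly where $\rho(\mathbf x)$ is small and $V_S(B_{\mathbf x})$ tiny, so the main obstacle is really to organize the argument so as to avoid that.

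For \eqref{eq:cmplb2} the support is not fixed, so I would use two densities with \emph{different} supports. Let $f_0\equiv 1$ on $S_0=[0,1]^d$; let $\mathbf z$ lie at unit distance from $[0,1]^d$, put $S_1=[0,1]^d\cup B(\mathbf z,r)$ with $r=(v_d N^2)^{-1/d}$, where $v_d$ is the volume of the unit ball, and set $f_1=1-N^{-2}$ on $[0,1]^d$ and $f_1=1$ on $B(\mathbf z,r)$. Both densities are piecewise constant, hence Lipschitz with any constant, and lie in $\Sigma_A$ once $M\ge 1$, $m$ is small and $C_S$ is large. They satisfy $\norm{f_0-f_1}_\infty\ge 1$ (evaluate at $\mathbf z$), whereas $\mathrm{TV}(P_0,P_1)=N^{-2}$, hence $\mathrm{TV}(P_0^{\otimes N},P_1^{\otimes N})\le N^{-1}\to 0$. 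Le Cam's inequality then gives $\inf_{\hat f}\sup_{f\in\Sigma_A}\mathbb E[\norm{\hat f-f}_\infty]\gtrsim 1$.

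For \eqref{eq:cmplb3} the support $S$ is fixed, so I would perturb \emph{inside} $S$. Fix $\mathbf x_1$ in the interior of $S$ and a compactly supported Lipschitz $\psi$ with $\int\psi=0$ and $\max\psi=1$, and take $f_0\equiv (m+M)/2$ on $S$ and $f_1=f_0+h\,\psi((\cdot-\mathbf x_1)/w)$ with $h\asymp w$ small enough that the Lipschitz budget ($h\lesssim Lw$) and $f_1\in[m,M]$ hold and the support of the bump lies in $S$. Then $\norm{f_0-f_1}_\infty=h$ and $\mathrm{KL}(P_0\|P_1)\lesssim m^{-1}\int(f_0-f_1)^2\lesssim m^{-1}h^2 w^d$, so $\mathrm{KL}(P_0^{\otimes N}\|P_1^{\otimes N})\lesssim N h^2 w^d\asymp N w^{d+2}$; choosing $w\asymp N^{-1/(d+2)}$ makes this $O(1)$, and Le Cam combined with Pinsker gives $\inf_{\hat f}\sup_{f\in\Sigma_A(S)}\mathbb E[\norm{\hat f-f}_\infty]\gtrsim h\asymp N^{-1/(d+2)}$. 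This log-free rate is what is claimed and is consistent with \eqref{eq:compact-unif}: optimizing the latter over $k\asymp N^{2/(d+2)}(\ln N)^{d/(d+2)}$ gives order $N^{-1/(d+2)}(\ln N)^{1/(d+2)}$, so the two bounds match up to a logarithmic factor, and closing the gap would require an Assouad-type construction with many bumps, which I do not pursue.
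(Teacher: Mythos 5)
Your argument is correct in substance, and for the upper bound \eqref{eq:compact-unif} it takes a genuinely different route from the paper. The paper discretizes: it covers $S$ by $n\lesssim r^{-d}$ balls of radius $r=\min\{(k/N)^{2/d},k^{-1/2}\}$, proves concentration of $P(B(\mathbf{a}_i,\rho(\mathbf{a}_i)))$ around $(k-1)/N$ at each center via a Chernoff/KL computation (its Lemma \ref{lem:Delta}) with a union bound over the $n$ centers, and then transfers to arbitrary $\mathbf{x}$ by separately controlling the oscillations $|\hat{f}(\mathbf{x})-\hat{f}(\mathbf{a}_i)|$ and $|f(\mathbf{x})-f(\mathbf{a}_i)|$ --- exactly the step you identify as delicate. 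You instead invoke the ratio-type VC inequality uniformly over all Euclidean balls, which yields $|NP(B(\mathbf{x},\rho(\mathbf{x})))/k-1|\lesssim k^{-1/2}\sqrt{\ln(N/\epsilon)}$ simultaneously for every $\mathbf{x}$ (for $k\gtrsim\ln(N/\epsilon)$; the paper likewise needs $k/\ln N\to\infty$), dispensing with the net; this is cleaner but imports an empirical-process tool, whereas the paper's argument is elementary and self-contained. Your decomposition into the ratio term times $\bar{f}_{\mathbf{x}}\le M$ plus the Lipschitz bias $L\rho(\mathbf{x})$ parallels the paper's treatment. One caveat you share with the paper: bounding $\rho(\mathbf{x})\lesssim(k/N)^{1/d}$ needs $V_S(B(\mathbf{x},\rho))\gtrsim\rho^d$, an inner-regularity property of $S$ that Assumption \ref{ass:bounded}(c) (an upper bound on surface area) does not literally provide; the paper uses the even blunter bound $P(B(\mathbf{a}_i,\rho))\ge mv_d\rho^d$ at the corresponding step, so you are no worse off, but it deserves a flag. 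For \eqref{eq:cmplb2} your construction (a remote ball carrying mass $N^{-2}$) is the same Le Cam idea as the paper's (shrinking the interval by $1/N$); note that both require the $\ell_\infty$ loss to be read as a supremum over a common domain (e.g.\ all of $\mathbb{R}^d$ with $f$ extended by zero) rather than over the support of the true $f$ only. For \eqref{eq:cmplb3} the paper omits the proof, and your two-point bump argument supplies it at the claimed rate $N^{-1/(d+2)}$; just replace $f_0\equiv(m+M)/2$ by $f_0\equiv 1/V(S)$ so that $f_0$ integrates to one (admissible since $m$ may be taken small and $M$ large).
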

\begin{proof}
	Please see Appendix~\ref{sec:bounded-unif} for details.
\end{proof}

In Theorem \ref{thm:compact-unif}, \eqref{eq:compact-unif} provides an upper bound of the boundary corrected kNN density estimator \eqref{eq:fbc}. For the proof of \eqref{eq:compact-unif}, we use the following steps. Firstly, we construct some grid points in the support. Then we find the uniform bound of estimation error among these grid points. Finally, we generalize the uniform bound among finite number of grid points to the whole space. We let the number of grid points increase with the number of samples, so that the extra estimation error due to the generalization is not large. The detailed proof is shown in Appendix \ref{sec:bounded-unif}. Moreover, \eqref{eq:cmplb2} and \eqref{eq:cmplb3} provide the minimax lower bound of the $\ell_\infty$ error with unknown and known support set, respectively. \eqref{eq:cmplb2} can be shown by simply using Le Cam's lemma \cite{tsybakov2009introduction}, while \eqref{eq:cmplb3} can be proved easily by standard minimax analysis \cite{tsybakov2009introduction}. We provide a simple proof of \eqref{eq:cmplb2} at the end of Appendix \ref{sec:bounded-unif}, and omit the detailed proof of \eqref{eq:cmplb3} for simplicity. According to \eqref{eq:cmplb2}, if the support set $S$ is unknown, then it is not possible to construct an estimator with the $\ell_\infty$ error converging to zero. If the support set is known, then the minimax lower bound becomes \eqref{eq:cmplb3}. Comparing with \eqref{eq:compact-unif}, it can be observed that the kNN density estimator with boundary correction \eqref{eq:fbc} is minimax rate optimal. 

We remark that the convergence rate derived in Theorem \ref{thm:compact-unif} appears to be slower than the result in \cite{mack1983rate}. In particular, \cite{mack1983rate} assumes that the second order derivative of $f$ exists and is bounded, then its eq.(k2) and eq.(7) show that it is possible to select an appropriate $k$, so that the convergence rate can be made faster. However, the analysis in \cite{mack1983rate} did not take the boundary effect into consideration. In fact, using similar techniques in Theorem \ref{thm:compact-unif}, we can show that the uniform convergence rate of the kNN density estimator for distributions with bounded support does not improve even if the second order derivative of $f$ exists and is bounded, since the boundary bias is actually dominant in this case.

\section{KNN Density Estimator for Distributions with Unbounded Support}\label{sec:case2}
In this section, we investigate the $\ell_1$ and uniform convergence of the kNN density estimator for distributions that are smooth everywhere and have unbounded support. For these distributions, the pdf can approach zero arbitrarily close in its tail, at which kNN distances are usually large and the approximation in \eqref{eq:approx} no longer holds, i.e. the average pdf in the neighborhood of $\mathbf{x}$ can be far away from $f(\mathbf{x})$ at the tail of the distribution. As a result, the density estimation at the tails is hard. Unlike the case with bounded support, the assumptions for deriving $\ell_1$ and $\ell_\infty$ bounds are slightly different, hence we state the assumptions separately in Theorem \ref{thm:unbounded} and Theorem \ref{thm:unbounded-unif}.

\subsection{$\ell_1$ bound}
Now we analyze the convergence rate of the $\ell_1$ error. To begin with, we show that the $\ell_1$ error of the original kNN estimator defined in \eqref{eq:fhat} is actually infinite. Recall that $\mathbf{X}_i$, $i=1,\ldots, N$ are the samples for density estimation. Define $R$ as their maximum distance to $\mathbf{x}=0$, i.e.
\begin{eqnarray}
R=\max_{i=1,\ldots, N}\{\norm{\mathbf{X}_i} \}.
\end{eqnarray}
Then for all $\mathbf{x}$ such that $\norm{\mathbf{x}}>R$, we have $\rho(\mathbf{x})<\norm{\mathbf{x}}+R$, since the distance of all the samples can not be more than $\norm{\mathbf{x}}+R$ away from $\mathbf{x}$. Hence
\begin{eqnarray}
\int \hat{f}(\mathbf{x})d\mathbf{x}\geq\int_{\norm{\mathbf{x}}>R} \hat{f}(\mathbf{x})d\mathbf{x}\geq \frac{k-1}{Nv_d}\int_{\norm{\mathbf{x}}>R} \frac{1}{(\norm{\mathbf{x}}+R)^d}d\mathbf{x}=\infty.
\end{eqnarray}
The above result shows that the $\ell_1$ error of the original kNN density estimator is always infinite, and is thus not suitable for distributions with tails. In fact, the estimated pdf does not decay sufficiently fast with the pdf itself. As a result, the estimation error at the tail distribution is serious. To improve the performance of the kNN density estimator, we design a truncated estimator as following:
\begin{eqnarray}
\hat{f}(\mathbf{x})=\left\{
\begin{array}{ccc}
\frac{k-1}{NV(B(\mathbf{x},\rho(\mathbf{x})))} & \text{if} & \rho(\mathbf{x})\leq a\\
\frac{n(\mathbf{x},a)}{NV(B(\mathbf{x},a))} &\text{if} &\rho(\mathbf{x})>a,
\end{array}
\right.
\label{eq:ft}
\end{eqnarray} 

in which
\begin{eqnarray}
n(\mathbf{x}, a) = \sum_{i=1}^N \mathbf{1}(\mathbf{X}_i\in B(\mathbf{x},a))
\end{eqnarray}
is the number of samples in $B(\mathbf{x},a)$. 

This new estimator is designed such that if the kNN distance $\rho(\mathbf{x})$ is larger than a threshold $a$, then the estimated value will be replaced by counting the number of samples that falls in a ball with radius $a$. Intuitively, this design ensures that the distances from $\mathbf{x}$ to all samples involved in the density calculation are no more than $a$, therefore, it avoids the kNN distance from being too large, and can thus reduce the estimation bias. The choice of $a$ will be provided in the sequel. We now bound the convergence rate of the $\ell_1$ error of kNN density estimation. The results are summarized in Theorem \ref{thm:unbounded}.

\begin{thm}\label{thm:unbounded}
	Assume that there exist four constants $C_b$, $C_c$, $C_d$ and $\beta \in (0,1]$, such that
	
	(a) $f(\mathbf{x})\leq 1$;
	
	(b) The gradient of pdf satisfies
	\begin{eqnarray}
	\norm{\nabla f(\mathbf{x})}\leq C_b f(\mathbf{x})\left(1+\ln \frac{1}{f(\mathbf{x})}\right);
	\end{eqnarray}
	
	(c) The Hessian of pdf satisfies
	\begin{eqnarray}
	\norm{\nabla^2 f(\mathbf{x})}_{op}\leq C_c f(\mathbf{x})\left(1+\ln \frac{1}{f(\mathbf{x})}\right),
	\end{eqnarray}
	in which $	\norm{\cdot}_{op}$ denotes the operator norm;
	
	(d) For any $t>0$,
	\begin{eqnarray}
	\text{P}(f(\mathbf{X})<t)\leq C_d t^\beta.
	\end{eqnarray}
	
	If we set $a\sim N^{-\frac{\beta'^2}{d\beta'^2+1}}$ and $k\sim N^{\frac{2\beta'}{d\beta'^2+1}}$, in which $\beta'=\min\{\beta,1/2 \}$, then
	\begin{eqnarray}
	\mathbb{E}\left[\norm{\hat{f}-f}_1\right]\lesssim N^{-\min\left\{\frac{\beta}{d\beta^2 +1}, \frac{2}{d+4} \right\}}\ln^2 N.
	\label{eq:unbounded}
	\end{eqnarray}
\end{thm}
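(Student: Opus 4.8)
The plan is to decompose $\mathbb{E}[\|\hat{f}-f\|_1]$ into a bias term and a stochastic fluctuation term, and to handle the tail of the distribution separately from the bulk by exploiting the tail condition (d). Concretely, for each $\mathbf{x}$ write $\hat{f}(\mathbf{x})-f(\mathbf{x}) = (\hat{f}(\mathbf{x})-\mathbb{E}[\hat{f}(\mathbf{x})\mid\text{structure}]) + (\text{bias}(\mathbf{x}))$, but since the truncation makes a clean conditional-expectation split awkward, I would instead split the domain into the region $\mathcal{A} = \{\mathbf{x} : f(\mathbf{x}) \geq t_0\}$ for a threshold $t_0$ to be tuned (roughly $t_0 \sim a^{?}$ or a polynomial in $N$), and its complement $\mathcal{A}^c$. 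On $\mathcal{A}^c$ we do not try to be accurate: by condition (d) the Lebesgue measure of $\mathcal{A}^c$ is controlled (combine $\mathrm{P}(f(\mathbf{X})<t_0)\leq C_d t_0^\beta$ with $f\leq 1$ to bound $\mathrm{vol}(\mathcal{A}^c)$), so $\int_{\mathcal{A}^c} f \,d\mathbf{x} \leq t_0 \,\mathrm{vol}(\mathcal{A}^c) \lesssim t_0^{1+\beta}$ is small, and $\int_{\mathcal{A}^c}\mathbb{E}[\hat{f}]\,d\mathbf{x}$ must be bounded using the truncated form of the estimator — this is exactly why the truncation at radius $a$ was introduced, since it makes $\int \hat{f}\,d\mathbf{x}$ finite and, more importantly, gives $\hat{f}(\mathbf{x}) \leq (k-1)/(NV(B(\mathbf{x},a)))$ pointwise so the contribution of faraway regions is integrable and geometrically controlled.

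Next, on the bulk region $\mathcal{A}$ I would bound the bias $|\mathbb{E}[\hat f(\mathbf x)] - f(\mathbf x)|$ following the argument behind Theorem~\ref{thm:compact}, but now tracking the dependence on the local density $f(\mathbf{x})$. The key geometric input is that for $\mathbf{x}$ with $f(\mathbf{x})$ not too small, the kNN distance $\rho(\mathbf{x})$ concentrates around the value $r$ solving $f(\mathbf{x})v_d r^d \approx k/N$, i.e. $\rho(\mathbf{x}) \sim (k/(Nf(\mathbf{x})))^{1/d}$, with high probability, and one shows $\rho(\mathbf{x}) \leq a$ on this event so the truncation is inactive there. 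Then the bias from local nonuniformity of $f$ in $B(\mathbf{x},\rho(\mathbf{x}))$ is, via a second-order Taylor expansion, of order $\|\nabla^2 f(\mathbf{x})\|_{op}\rho(\mathbf{x})^2$ plus a curvature-of-gradient lower-order term; using condition (c), $\|\nabla^2 f\|_{op} \lesssim f(\mathbf{x})(1+\ln(1/f(\mathbf{x})))$, so the relative bias $|\mathbb{E}[\hat f(\mathbf x)]-f(\mathbf x)|/f(\mathbf x) \lesssim (1+\ln\tfrac{1}{f(\mathbf x)})\rho(\mathbf{x})^2 \sim (1+\ln\tfrac1{f(\mathbf x)})(k/(Nf(\mathbf x)))^{2/d}$. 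Multiplying by $f(\mathbf{x})$ and integrating over $\mathbf{x}$, the factor $f(\mathbf{x})^{1-2/d}$ times the log must be integrated against $d\mathbf{x}$; here condition (d) is reused through the layer-cake / co-area estimate $\int g(f(\mathbf{x}))\,d\mathbf{x} \lesssim \int g(t)\, d(t^\beta)$-type bound to turn the spatial integral into a one-dimensional integral in the density level $t$, producing the $N^{-\beta/(d\beta^2+1)}$ branch (with a $\ln N$ or $\ln^2 N$ loss from the logarithmic factors and from the $\beta'=\min\{\beta,1/2\}$ clipping). Condition (b) enters analogously when bounding the gradient-induced first-order bias term and when controlling the truncation-region estimator.

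Finally, for the stochastic part $\mathbb{E}\int_{\mathcal{A}}|\hat{f}(\mathbf{x})-\mathbb{E}[\hat{f}(\mathbf{x})]|\,d\mathbf{x}$, I would use the order-statistics representation: conditioned on the local density, $\hat{f}(\mathbf{x})$ is (up to the nonuniformity already absorbed into the bias) a scaled inverse Beta$(k,N-k+1)$ variable, whose relative standard deviation is $\Theta(k^{-1/2})$; so $\mathbb{E}|\hat{f}(\mathbf{x})-\mathbb{E}\hat{f}(\mathbf{x})| \lesssim k^{-1/2} f(\mathbf{x})$ on the event where truncation is inactive, and on the complementary small-probability event one falls back on the crude pointwise bound $\hat{f}(\mathbf{x}) \leq (k-1)/(NV(B(\mathbf{x},a)))$, whose integral is finite and small. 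Integrating $k^{-1/2}f(\mathbf{x})$ over $\mathbf{x}\in\mathcal{A}$ gives $\lesssim k^{-1/2}$, and substituting the prescribed $k\sim N^{2\beta'/(d\beta'^2+1)}$ yields a term of order $N^{-\beta'/(d\beta'^2+1)}$, which is dominated by the bias branch when $\beta \leq 1/2$ and gives the $N^{-2/(d+4)}$ branch when $\beta > 1/2$ (so that $\beta'=1/2$). Balancing the bias branch $N^{-\beta/(d\beta^2+1)}$ against the stochastic branch and the tail-region contribution $t_0^{1+\beta}$ (choosing $t_0$ to make this match) produces the stated exponent $\min\{\beta/(d\beta^2+1), 2/(d+4)\}$ with the $\ln^2 N$ factor. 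The main obstacle I anticipate is the careful bookkeeping of the truncation region: one must show that the region where $\rho(\mathbf{x}) > a$ overlaps $\mathcal{A}$ only with negligible probability (so the clean Beta-based analysis applies on the bulk), while simultaneously verifying that on $\mathcal{A}^c$ the truncated estimator's integrated expectation is genuinely small rather than merely finite — this requires quantitative control of how $n(\mathbf{x},a)$ behaves for $\mathbf{x}$ deep in the tail, and is the step where conditions (b)-(d) must be combined most delicately, including getting the logarithmic powers right.
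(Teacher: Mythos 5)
Your bulk-region plan tracks the paper's own proof closely: the same bias/variance split (the paper's $I_1$/$I_2$), concentration of $\rho(\mathbf{x})$ showing the truncation is inactive when $f(\mathbf{x})\gtrsim k/(Nv_da^d)$ (via Chernoff, $\mathrm{P}(\rho(\mathbf{x})>a)\le e^{-(1-\ln 2)k}$), a Taylor bound with the decaying Hessian giving relative bias $(1-\ln f(\mathbf{x}))(k/(Nf(\mathbf{x})))^{2/d}$, a layer-cake integration of $f^{1-2/d}(1-\ln f)$ via assumption (d), and the Beta-distribution fluctuation $\lesssim k^{-1/2}f(\mathbf{x})$. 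The genuine gap is in the tail. First, a factual slip: for an unbounded support $\mathrm{vol}(\{f<t_0\})=\infty$, so your claim that (d) together with $f\le 1$ bounds $\mathrm{vol}(\mathcal{A}^c)$, and the resulting estimate $\int_{\mathcal{A}^c}f\,d\mathbf{x}\le t_0\,\mathrm{vol}(\mathcal{A}^c)\lesssim t_0^{1+\beta}$, is unfounded (the correct and sufficient bound is simply $\int_{\mathcal{A}^c}f\,d\mathbf{x}=\mathrm{P}(f(\mathbf{X})<t_0)\le C_dt_0^\beta$). More seriously, your control of $\int_{\mathcal{A}^c}\mathbb{E}[\hat f\,]$ rests on the claim that the truncation gives $\hat f(\mathbf{x})\le (k-1)/(NV(B(\mathbf{x},a)))$ pointwise, making the tail contribution ``integrable and geometrically controlled.'' That inequality only holds on the truncated branch $\rho(\mathbf{x})>a$; on the branch $\rho(\mathbf{x})\le a$ it reverses, and $\hat f$ can be arbitrarily large there. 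Even where it holds, it is a constant bound, and a constant integrated over an infinite-volume tail diverges, so this pointwise bound alone cannot close the argument. You flag exactly this as ``the main obstacle,'' but the proposal does not supply the idea that resolves it.

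What the paper uses here, and what is missing from your plan, is a bound on $\mathbb{E}[\hat f(\mathbf{x})]$ in the deep tail that \emph{decays with} $f(\mathbf{x})$, so that assumption (d) can be applied to integrate it. Concretely, from the gradient-decay assumption (b) one solves the comparison ODE $g'=C_bg(1-\ln g)$ to get $f_+(\mathbf{x},a)\le e\,f^{e^{-C_ba}}(\mathbf{x})$, hence $\mathbb{E}[\hat f(\mathbf{x})\mathbf{1}(\rho(\mathbf{x})>a)]\le P(B(\mathbf{x},a))/V(B(\mathbf{x},a))\lesssim f^{e^{-C_ba}}(\mathbf{x})$; and a Chernoff computation on $\mathrm{P}(\rho(\mathbf{x})<t^{-1/d})$ combined with this growth estimate shows, for $f(\mathbf{x})<1/N$, that $\frac{k-1}{Nv_d}\mathbb{E}[\rho^{-d}(\mathbf{x})\mathbf{1}(\rho(\mathbf{x})\le a)]\lesssim 2^{-k}Nf(\mathbf{x})$, i.e.\ the event that the untruncated branch fires deep in the tail is so unlikely that it contributes a quantity proportional to $f(\mathbf{x})$. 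Both bounds are then integrable over the tail by (d), yielding the $N^{-\beta+\delta}$ contribution. Relatedly, your single threshold $t_0$ (treated as a free parameter) hides the intermediate regime $1/N\lesssim f(\mathbf{x})\lesssim k/(Nv_da^d)$, where truncation fires with non-negligible probability and the bias is $\sim f(\mathbf{x})(1-\ln f(\mathbf{x}))a^2$; integrating it gives the term $(k/N)^\beta a^{2-\beta d}\ln N$, which, together with $(Na^d)^{-\beta}\ln N$ and $N^{-1/2}a^{-d/2}$, is what actually dictates the prescribed choices of $a$ and $k$ and the exponent $\beta/(d\beta^2+1)$ when $\beta\le 1/2$. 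Without the tail lemma and this intermediate-regime bookkeeping, the stated rate does not follow from the plan as written.
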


\begin{proof}
	Please refer to Appendix \ref{sec:unbounded-l1} for details.
\end{proof}

In assumption (a), we set the maximum $f(\mathbf{x})$ to be $1$ just for convenience. (b) and (c) assume that the first and second order derivatives decay simultaneously with $f(\mathbf{x})$. This holds for many common distributions. For example, for Gaussian distributions $f\sim \exp(-c\norm{\mathbf{x}}^2)$, both (b) and (c) are satisfied. These two assumptions ensure that the bias of the kNN density estimator is not too large. (d) restricts the tail strength of the distribution. A smaller $\beta$ indicates that the tail is stronger. We assume that $\beta\leq 1$, since if $\beta>1$, it can be proved that the support set is bounded, while here we hope to analyze distributions with unbounded support. In fact, from assumption (b) and (c), it can be shown that $f(\mathbf{x})>0$ everywhere, and thus the support must be unbounded. Now we provide some examples of distributions satisfying assumption (d). For one or two dimensional Gaussian distributions, assumption (d) is satisfied for $\beta=1$. For Gaussian distributions with higher dimensions, assumption (d) is satisfied for $\beta$ arbitrarily close to $1$. For Cauchy distributions, assumption (d) is satisfied with $\beta=1/2$. For $t_n$ distributions, assumption (d) is satisfied with $\beta=n/(n+1)$. Moreover, if a distribution has finite moments up to infinite order, i.e. $\mathbb{E}[\norm{\mathbf{X}}^\alpha]<\infty$ for all $\alpha>0$, then assumption (d) holds for all $\beta<1$.

For the proof of Theorem \ref{thm:unbounded}, we bound $\mathbb{E}[|\hat{f}(\mathbf{x})-f(\mathbf{x})|]$ separately depending on the value of $f(\mathbf{x})$. If $f(\mathbf{x})$ is sufficiently large, then with high probability, $\rho(\mathbf{x})\leq a$. As a result, the kNN estimator is not truncated. On the other hand, if the pdf is low, then the probability that kNN estimator is truncated is not negligible. We bound $\mathbb{E}[|\hat{f}(\mathbf{x})-f(\mathbf{x})|]$ using different methods. The detailed proof is shown in Appendix \ref{sec:unbounded-l1}.

Now we show the minimax lower bound of the $\ell_1$ error.
\begin{thm}\label{thm:unbounded-lb}
	Define $\Sigma_B$ as the set of all functions that satisfy assumption (a)-(d) in Theorem \ref{thm:unbounded}, if $C_b$, $C_c$, $C_d$ are sufficiently large, then
	\begin{eqnarray}
	\underset{\hat{f}}{\inf}\underset{f\in \Sigma_B}{\sup} \mathbb{E}\left[\norm{\hat{f}-f}_1\right]\gtrsim N^{-\min\left\{\beta,\frac{2}{d+4} \right\}}.
	\end{eqnarray}
\end{thm}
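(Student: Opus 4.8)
The plan is to prove the two terms of the lower bound separately, by exhibiting two ``hard'' subfamilies inside $\Sigma_B$ and applying Assouad's lemma (see \cite{tsybakov2009introduction}). Since $\inf_{\hat f}\sup_{f\in\Sigma_B}\mathbb{E}\|\hat f-f\|_1$ dominates the same quantity over any subfamily, taking the larger of the two resulting bounds gives $\gtrsim\max\{N^{-\beta},N^{-2/(d+4)}\}=N^{-\min\{\beta,2/(d+4)\}}$, which is exactly the claim.

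For the $N^{-2/(d+4)}$ term I would use the classical bump construction for second-order smooth densities. Fix a base density $f_0$ that equals a constant $\Theta(1)\le 1/2$ on the unit ball and decays exponentially outside it, so that $f_0$ satisfies (a)--(d) with room to spare (an exponential tail gives $\mathrm{P}(f_0(\mathbf X)<t)\lesssim t(\ln(1/t))^{d-1}\ll t^\beta$ for small $t$, and $C_d$ large handles larger $t$). Partition the unit ball into $m\sim N^{d/(d+4)}$ congruent cubes of side $w\sim N^{-1/(d+4)}$, place in each a rescaled fixed smooth bump $\omega_j$ of height $h\sim N^{-2/(d+4)}$ supported in that cube, and set $f_\tau=f_0+\sum_j\tau_j\omega_j$ for $\tau\in\{-1,1\}^m$. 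Then $\|f_\tau-f_{\tau'}\|_1=2\|\omega\|_1\, d_H(\tau,\tau')$ with $\|\omega\|_1\sim h w^d\sim N^{-(d+2)/(d+4)}$, while the per-coordinate divergence satisfies $N\,\mathrm{KL}(f_\tau,f_{\tau'})\lesssim N h^2 w^d\sim 1$ (a small constant after shrinking $h$ by a constant factor). Since $f_0=\Theta(1)$ on the perturbed region, conditions (b), (c) only require $\|\nabla f_\tau\|$ and $\|\nabla^2 f_\tau\|_{op}$ to be bounded, and $\|\nabla\omega\|\sim h/w\to 0$, $\|\nabla^2\omega\|_{op}\sim h/w^2\sim 1$, so all $f_\tau\in\Sigma_B$ once $C_b,C_c,C_d$ are large enough; Assouad's lemma then yields a lower bound of order $m\|\omega\|_1\sim N^{-2/(d+4)}$.

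For the $N^{-\beta}$ term the construction must exploit the tail condition (d). Here I would take a base density $f_0$ that is $\Theta(1)$ on an $O(1)$ ``bulk'' region, then interpolates smoothly down to a plateau of height $h_0\sim\epsilon/N$ over a region $V_0$ of volume $\sim N^{1-\beta}$ (so the plateau carries mass $h_0 V_0\sim\epsilon N^{-\beta}\to 0$), and finally decays through a fast (exponential) tail beyond $V_0$. This makes (d) hold for every $t>0$: for $t\gtrsim h_0$ the dominant contribution to $\mathrm{P}(f_0(\mathbf X)<t)$ is the plateau mass $\epsilon N^{-\beta}\le C_d t^\beta$ provided $\epsilon^{1-\beta}\le C_d$, and for $t<h_0$ the exponential tail again gives $\mathrm{P}(f_0(\mathbf X)<t)\lesssim t(\ln(1/t))^{d-1}\ll t^\beta$. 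Inside the plateau I would fit $m\sim N^{1-\beta}$ disjoint bumps $\omega_j$ of height $\sim\epsilon/N$ and $O(1)$ width $v_0$, and set $f_\tau=f_0+\sum_j\tau_j\omega_j$. Then $\|f_\tau-f_{\tau'}\|_1=2\|\omega\|_1\, d_H(\tau,\tau')$ with $\|\omega\|_1\sim\epsilon/N$, and $N\,\mathrm{KL}(f_\tau,f_{\tau'})\lesssim N h_0^2 v_0/h_0\sim N h_0\sim\epsilon$. Since $f_\tau\sim 1/N$ on the bumps with $\|\nabla\omega\|,\|\nabla^2\omega\|_{op}\sim 1/N$ while $f_\tau(1+\ln(1/f_\tau))\sim(\ln N)/N$, conditions (b), (c) hold for $C_b,C_c$ large, (a) is immediate, and (d) is preserved up to enlarging $C_d$ because the bumps shift the tail mass by at most $m\|\omega\|_1\sim\epsilon N^{-\beta}$. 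Assouad's lemma then gives a lower bound of order $m\|\omega\|_1\sim N^{-\beta}$, and combining the two families finishes the proof.

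The main obstacle is the explicit construction of the base density $f_0$ for the $N^{-\beta}$ part: it must simultaneously be a genuine probability density, satisfy the decaying-gradient and decaying-Hessian bounds (b)--(c) \emph{everywhere} (including across the bulk-to-plateau-to-tail transitions, where one must control $\nabla f_0$ and $\nabla^2 f_0$ relative to the shrinking value of $f_0$ itself), and have its level set $\{f_0\approx 1/N\}$ of the prescribed volume $\sim N^{1-\beta}$ while keeping (d) tight yet valid for all $t>0$. Once such an $f_0$ is constructed, verifying $f_\tau\in\Sigma_B$ for every $\tau$ and the bookkeeping in Assouad's lemma are routine.
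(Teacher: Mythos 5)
Your construction is essentially the paper's own: it also proves the two terms separately, obtaining $N^{-\beta}$ from a base density with value $\sim 1/N$ on a region of volume $\sim N^{1-\beta}$ perturbed by $\sim N^{1-\beta}$ unit-width bumps of height $\sim 1/N$, and obtaining $N^{-2/(d+4)}$ from the standard smoothness construction on the $\Theta(1)$ bulk (which it merely cites), the only real differences being that the paper runs Varshamov--Gilbert plus Fano instead of Assouad and that it pairs each bump with a negative copy, $+\frac{1}{N}g(\mathbf{x}-\mathbf{a}_i)-\frac{1}{N}g(\mathbf{x}-\mathbf{a}_{-i})$, so that every $f_\mathbf{v}$ integrates to one --- a normalization step you should add (zero-integral or paired bumps), though it changes none of your scalings. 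The obstacle you flag, constructing the base density through the bulk-to-plateau-to-tail transitions while keeping (a)--(d), is left at exactly the same level of detail in the paper ("$f_0$ is designed such that $f_0$ satisfies assumptions (a)--(d)"), so your proposal matches the paper's argument in both substance and rigor.
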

\begin{proof}
	Please refer to Appendix \ref{sec:minimax} for details.
\end{proof}


Comparing Theorem \ref{thm:unbounded} with Theorem \ref{thm:unbounded-lb}, we observe that if $\beta>\min\{1/2,2/d\}$, then we can let $k\sim N^{4/(4+d)}$ and the convergence rate of the $\ell_1$ error matches the minimax lower bound up to a logarithmic factor. With a smaller $\beta$, there exists some gap between the upper bound and the lower bound, indicating that it is still possible to improve the convergence rate.

Despite that the truncated kNN density estimator \eqref{eq:ft} has some gap to the minimax optimal rate for small $\beta$, we would like to note that the performance of the truncated kNN density estimator \eqref{eq:ft} is better than the kernel density estimator for distributions with heavy tails. To be more precise, we have the following Proposition.
\begin{prop}\label{prop}
	For a kernel density estimator
	\begin{eqnarray}
	\hat{f}(\mathbf{x})=\frac{1}{Nh^d}\sum_{i=1}^N K\left(\frac{\mathbf{X}_i-\mathbf{x}}{h}\right),
	\end{eqnarray}
	in which $K(\cdot)$ is supported on $B(\mathbf{0},1)$, $\int K(\mathbf{u})d\mathbf{u}=1$ and $K(\mathbf{u})\leq K_m$ for some constant $K_m$. If $C_b$, $C_c$, $C_d$ are sufficiently large, then
	\begin{eqnarray}
	\underset{h}{\inf}\underset{f\in \Sigma_B}{\sup}\mathbb{E}\left[\norm{\hat{f}-f}_1\right]\gtrsim N^{-\min\left\{\frac{2\beta}{2+d\beta},\frac{2}{d+4} \right\}}.
	\label{eq:kde}
	\end{eqnarray}
\end{prop}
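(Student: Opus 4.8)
The plan is to obtain the bound as the maximum of two lower bounds. The first is free: kernel estimators form a subclass of all estimators, so Theorem~\ref{thm:unbounded-lb} already gives $\inf_h\sup_{f\in\Sigma_B}\mathbb{E}[\norm{\hat f-f}_1]\gtrsim N^{-\min\{\beta,2/(d+4)\}}$. The second, which is the real content, is the kernel‑specific estimate $\inf_h\sup_{f\in\Sigma_B}\mathbb{E}[\norm{\hat f-f}_1]\gtrsim N^{-2\beta/(2+d\beta)}$; intuitively, a fixed bandwidth cannot be small enough to resolve a heavy tail and large enough to damp the $O(h^2)$ smoothing bias at the same time. Since $2\beta/(2+d\beta)\leq\beta$, the larger of $N^{-2\beta/(2+d\beta)}$ and $N^{-\min\{\beta,2/(d+4)\}}$ equals $N^{-\min\{2\beta/(2+d\beta),2/(d+4)\}}$, so the two inequalities together give the claim; only the second one requires work.

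To prove the second inequality I would fix one heavy‑tailed target and test every bandwidth against it. Take $f^*(\mathbf{x})=\lambda^{-d}c_\alpha(1+\norm{\mathbf{x}/\lambda}^2)^{-\alpha/2}$ with $\alpha=d/(1-\beta)$ (the case $\beta=1$ needs only Theorem~\ref{thm:unbounded-lb}), with $c_\alpha$ the normalizing constant and $\lambda$ a large constant so that $f^*\leq 1$; one checks directly that $f^*\in\Sigma_B$ once $C_b,C_c,C_d$ are large enough, that $\mathbf{0}$ is a strict local maximum with $\nabla^2 f^*(\mathbf{0})$ negative definite, and that the level annulus $T_t=\{\mathbf{x}:t/2\leq f^*(\mathbf{x})\leq t\}$ has volume $\asymp t^{\beta-1}$ for small $t$ while $\text{P}(f^*(\mathbf{X})<t)\asymp t^\beta$. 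I would then bound $\mathbb{E}[\norm{\hat f-f^*}_1]$ below by the larger of a tail term and a bias term. For the tail term, put $t\asymp(Nh^d)^{-1}$: for $\mathbf{x}\in T_t$ the ball $B(\mathbf{x},h)$ stays inside $\{f^*\lesssim t\}$ (since $h$ is small compared with $\norm{\mathbf{x}}\asymp t^{-1/\alpha}$), so its probability mass is $\lesssim th^d$, the expected sample count there is $O(1)$, and $\hat f(\mathbf{x})=0$ with probability bounded below by a constant; hence $\mathbb{E}|\hat f(\mathbf{x})-f^*(\mathbf{x})|\gtrsim f^*(\mathbf{x})\asymp t$, and integrating over $T_t$ gives $\mathbb{E}[\norm{\hat f-f^*}_1]\gtrsim t\,\text{vol}(T_t)\asymp t^\beta\asymp(Nh^d)^{-\beta}$. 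For the bias term, Jensen gives $\mathbb{E}[\norm{\hat f-f^*}_1]\geq\norm{K_h*f^*-f^*}_1$, and a second‑order Taylor expansion shows $(K_h*f^*)(\mathbf{x})-f^*(\mathbf{x})=\tfrac{h^2}{2}\text{tr}(\Sigma_K\nabla^2 f^*(\mathbf{x}))+o(h^2)$ uniformly, with $\Sigma_K=\int K(\mathbf{v})\mathbf{v}\mathbf{v}^\top d\mathbf{v}$ (nondegenerate if $K\geq 0$, and a nonzero mean of $K$ only adds a larger $O(h)$ term); since $\nabla^2 f^*$ has definite sign on a fixed ball around $\mathbf{0}$, this yields $\norm{K_h*f^*-f^*}_1\gtrsim h^2$ for $h$ below a constant, and $\gtrsim 1$ for large $h$ (where $K_h*f^*$ is nearly flat while $f^*$ keeps a constant mass on $\{f^*\geq s_0\}$).

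Combining the two bounds with $h_*:=N^{-\beta/(2+d\beta)}$ finishes the second inequality: for $h\geq h_*$ the bias term gives $\gtrsim\min\{h^2,1\}\geq\min\{h_*^2,1\}=N^{-2\beta/(2+d\beta)}$, and for $h<h_*$ (so $Nh^d<N^{2/(2+d\beta)}$) the tail term gives $\gtrsim(Nh^d)^{-\beta}\geq N^{-2\beta/(2+d\beta)}$, with the degenerate range $Nh^d=O(1)$ handled by running the same zero‑count argument on the bulk of $f^*$. I expect the tail term to be the main obstacle: one must pick the scaling $t\asymp(Nh^d)^{-1}$, verify the level‑set volume estimate $\text{vol}(T_t)\asymp t^{\beta-1}$ for the constructed $f^*$, and argue carefully that a ball of radius $h$ around a point of $T_t$ cannot reach the high‑density core, so that the expected count there stays $O(th^d)$ — this is exactly the mechanism that turns the tail exponent $\beta$ into the rate exponent $2\beta/(2+d\beta)$. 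The bias estimate and the appeal to Theorem~\ref{thm:unbounded-lb} are comparatively routine.
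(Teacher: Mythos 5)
Your proposal follows the same skeleton as the paper's proof: (i) the free bound from Theorem~\ref{thm:unbounded-lb}, since a kernel estimator is one particular estimator; (ii) a kernel-specific bound $\gtrsim (Nh^d)^{-\beta}+h^2$ built from a tail term (a zero-sample-count event in $B(\mathbf{x},h)$ forces $\hat f(\mathbf{x})=0$, contributing $\gtrsim(Nh^d)^{-\beta}$) and a smoothing-bias term via Jensen; (iii) optimization over $h$ and the max of the two bounds. Where you genuinely differ is in the witnesses. The paper lets the adversarial density depend on $h$: for the tail it uses a plateau of height $1/(Nv_dh^d)$ on a ball of radius $(Nv_dh^d)^{(1-\beta)/d}$, so that $P(B(\mathbf{x},h))=1/N$ exactly and the zero-count probability is $(1-1/N)^N\to e^{-1}$, and for the $h^2$ bias it merely asserts that a suitable $f$ exists. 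You instead fix a single polynomial-tail density $f^*\asymp\norm{\mathbf{x}}^{-d/(1-\beta)}$ and use it against every bandwidth, reading the tail term off the level annulus at height $t\asymp(Nh^d)^{-1}$ (volume $\asymp t^{\beta-1}$) and the bias off the curvature at the mode. This is a stronger and cleaner statement (one density witnesses the bound for all $h$), it supplies the bias detail the paper omits, and your handling of the edge cases ($Nh^d=O(1)$, $\beta=1$) is correct; the price is the level-set geometry and the check that $B(\mathbf{x},h)$ cannot reach the high-density core, which you correctly identified as the main work and which goes through since $h\le h_*\to 0$ in the regime where the tail bound is invoked.

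One caveat is a genuine (though repairable) gap relative to the statement as written: your small-$h$ bias bound needs the kernel's second moment matrix $\Sigma_K=\int K(\mathbf{v})\mathbf{v}\mathbf{v}^\top d\mathbf{v}$ to be nondegenerate, which you justify only under $K\geq 0$; the proposition assumes only $\int K=1$, $K\leq K_m$ and compact support, so higher-order kernels with negative parts are admissible, and for such $K$ your $C^\infty$ density gives $\norm{K_h\star f^*-f^*}_1=O(h^4)$, so the $\gtrsim h^2$ step fails and with it the bound on the range $h\geq h_*$. The fix does not change your architecture: since $\sup_f$ sits inside $\inf_h$, in the bias step the density may depend on $h$ and $K$ — e.g.\ perturb $f^*$ on a fixed ball by disjoint mean-zero bumps of width $\asymp h$ and height $\asymp h^2$ (these stay in $\Sigma_B$ because only two derivatives are constrained, with $C_b,C_c$ large), and any admissible $K\neq\delta$ then incurs $\ell_1$ smoothing error $\gtrsim h^2$. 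This $h$-dependent construction is what the paper's omitted step implicitly relies on; alternatively, adding the standard hypothesis $K\geq 0$ makes your fixed-$f^*$ argument complete as written (with a routine compactness argument covering the intermediate range of constant-order $h$).
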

\begin{proof}
Please refer to Appendix \ref{sec:kde} for details.
\end{proof}

In \eqref{eq:kde}, we take the supremum over all distributions satisfying assumptions (a)-(d) in Theorem \ref{thm:unbounded}, and take the infimum over all possible $h$. The rate in the right hand side of \eqref{eq:kde} indicates the theoretical limit such that the kernel density estimator can not perform better than this limit for any bandwidth $h$. This can be proved by analyzing the bias and the random error separately. Note that $\mathbb{E}[\hat{f}(\mathbf{x})]=f\star K_h$, in which $\star$ denotes convolution and $K_h(\cdot)=K(\cdot/h)/h^d$. The convolution will induce roughly $h^2$ bias. We also provide a lower bound of the random error. The detailed proof is shown in Appendix \ref{sec:kde}. 

Comparing \eqref{eq:kde} with \eqref{eq:unbounded}, it can be observed that if $\beta\geq 1/2$, then the truncated kNN density estimator and the kernel density estimator have the same convergence rate and are both minimax optimal, except a logarithm factor. For distributions with heavy tails such that $\beta<1/2$, the truncated kNN density estimator performs better than the kernel density estimator. For high dimensional random variables, such difference is more obvious, since from \eqref{eq:unbounded} and \eqref{eq:kde}, if $2/d< \beta< 1/2$, then the truncated kNN estimator is minimax rate optimal, while the kernel density estimator is not optimal. In some previous literatures such as \cite{mack1983rate}, it was believed that the kNN performs worse than kernel density estimator for distributions with heavy tails. However, the previous analysis is based on the uniform bound of Hessian, while in our assumption (b) and (c), the gradient and Hessian also decay with the pdf. As a result, the comparison between these two estimators are reversed due to the difference of assumptions. We provide an intuitive explanation of the reason why the kNN estimator has a better convergence rate than the kernel density estimator as following. In the tail of the distribution, the kNN distances are large, while for the kernel density estimation, the kernel size is constant all over $\mathbb{R}^d$. As a result, comparing with the kernel density estimator, the kNN method has a larger bias but smaller variance at the tail of the distribution. If the pdf only has bounded Hessian without decaying, than the larger bias of the kNN method is more obvious. However, under our assumption, the Hessian decays with roughly the same rate as the pdf $f$, hence the bias will not increase much, and thus the kNN method achieves a better tradeoff between bias and variance than the kernel density estimator.

\subsection{$\ell_\infty$ bound}

We now analyze the uniform convergence rate of the kNN density estimator. For the uniform convergence rate, we only care about the maximum estimation error. As a result, truncation is not necessary, hence we just use the simple kNN density estimator \eqref{eq:fhat}. The result is shown in Theorem \ref{thm:unbounded-unif}.
\begin{thm}\label{thm:unbounded-unif}
	Suppose $f$ satisfies assumption (a), (b) and (c) in Theorem \ref{thm:unbounded}, and the following additional assumption
	\begin{eqnarray}
	\mathcal{N}\left(\{\mathbf{x}|f(\mathbf{x})>m\},r \right)\leq \frac{\mathcal{N}_0}{m^\gamma r^d},
	\label{eq:additional}
	\end{eqnarray}
	for some $\gamma>0$ and all $m>0$, in which $\mathcal{N}$ denotes the covering number. Then with probability at least $1-\epsilon$,
	\begin{eqnarray}
	\underset{\mathbf{x}}{\sup}|\hat{f}(\mathbf{x})-f(\mathbf{x})|\lesssim \left\{
	\begin{array}{ccc}
	\left(\frac{k}{N}\right)^\frac{2}{d}+k^{-\frac{1}{2}}\sqrt{\ln \frac{N}{\epsilon}} &\text{if} & d>2\\
	\frac{k}{N}\ln^d N+k^{-\frac{1}{2}}\sqrt{\ln \frac{N}{\epsilon}} &\text{if} & d=1,2.
	\end{array}
	\right.
	\end{eqnarray}
\end{thm}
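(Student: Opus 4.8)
The plan is to reproduce, in the unbounded setting, the net-and-transfer strategy behind Theorem~\ref{thm:compact-unif} — control $\hat f$ on a deterministic finite net of the region where the estimate can be accurate, then extend the uniform bound to all of $\mathbb{R}^{d}$ — together with a separate crude bound on $\hat f$ over the tail. The engine is a local regularity estimate distilled from (b) and (c): dividing (b) by $f$ gives $\|\nabla\ln f\|\le C_b(1+\ln(1/f))$, so $\ln(1+\ln(1/f))$ is $C_b$-Lipschitz; consequently, on any ball $B(\mathbf{x},r)$ with $r\lesssim 1/(1+\ln(1/f(\mathbf{x})))$ the density $f$ stays within a constant factor of $f(\mathbf{x})$, and hence by (c) one has $\sup_{B(\mathbf{x},r)}\|\nabla^{2}f\|_{op}\lesssim f(\mathbf{x})(1+\ln(1/f(\mathbf{x})))$. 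Since $B(\mathbf{x},r)$ is symmetric about $\mathbf{x}$, the linear Taylor term of $f$ integrates to zero over it, so $|P(B(\mathbf{x},r))-f(\mathbf{x})v_d r^{d}|\lesssim \sup_{B(\mathbf{x},r)}\|\nabla^{2}f\|_{op}\,r^{d+2}\lesssim f(\mathbf{x})(1+\ln(1/f(\mathbf{x})))r^{d+2}$; equivalently, the average density over $B(\mathbf{x},r)$ differs from $f(\mathbf{x})$ only by a relative error $O((1+\ln(1/f(\mathbf{x})))r^{2})$, which is the sole source of bias here since there is no boundary.

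Next, fix a threshold $m\sim (k/N)\ln^{d}N$ — the level below which the natural $k$NN radius $\rho(\mathbf{x})\approx(k/(Nv_d f(\mathbf{x})))^{1/d}$ is no longer small enough for the correction factor $\exp(C_b\rho(\mathbf{x})(1+\ln(1/f(\mathbf{x}))))$ to be $O(1)$ — and split $\mathbb{R}^{d}=\{f>m\}\cup\{f\le m\}$. On $\{f>m\}$ I take an $r$-net with $r=N^{-C}$; by \eqref{eq:additional} its size is $\le\mathcal{N}_0 m^{-\gamma}r^{-d}=e^{O(\ln N)}$. At each net point, writing $U:=P(B(\mathbf{x},\rho(\mathbf{x})))\sim\mathrm{Beta}(k,N-k+1)$ we have $\hat f(\mathbf{x})=\frac{k-1}{NU}\,\bar f(\mathbf{x})$ with $\bar f(\mathbf{x})$ the average density over $B(\mathbf{x},\rho(\mathbf{x}))$; Beta concentration (with failure probability $\epsilon/|\text{net}|$, so $\ln(1/\delta)\lesssim\ln(N/\epsilon)$) gives $\frac{k-1}{NU}=1+O(k^{-1/2}\sqrt{\ln(N/\epsilon)})$, and the regularity estimate gives $\bar f(\mathbf{x})=f(\mathbf{x})(1+O((1+\ln(1/f(\mathbf{x})))\rho(\mathbf{x})^{2}))$, whence $|\hat f(\mathbf{x})-f(\mathbf{x})|\lesssim (1+\ln(1/f(\mathbf{x})))f(\mathbf{x})^{1-2/d}(k/N)^{2/d}+k^{-1/2}\sqrt{\ln(N/\epsilon)}$. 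Maximizing the first term over $f(\mathbf{x})\in(m,1]$ gives a constant times $(k/N)^{2/d}$ when $d>2$ (the map $t\mapsto t^{1-2/d}(1+\ln(1/t))$ is bounded on $(0,1]$ by a constant depending only on $d$) and at most a constant times $(k/N)\ln N$ when $d\le 2$ (the maximum then sits at $t=m$, with $\ln(1/m)\lesssim\ln N$). Transferring from the net to all of $\{f>m\}$ perturbs $\hat f$ only by a relative $O(r/\rho_{\min})$ — $\rho$ is $1$-Lipschitz and $\rho\gtrsim(k/N)^{1/d}$ there — and $f$ by a relative $O(r\ln N)$, both negligible for $r=N^{-C}$.

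On $\{f\le m\}$ it suffices to show $\sup_{f(\mathbf{x})\le m}\hat f(\mathbf{x})\lesssim (k/N)\ln^{d}N$ with probability $\ge 1-\epsilon/2$, since there $|\hat f-f|\le\hat f+f\le\hat f+m$. Write $\{f\le m\}=\bigcup_{j\ge 0}S_j$ with $S_j=\{2^{-j-1}m<f\le 2^{-j}m\}$ (assumptions (b),(c) force $f>0$, so the shells exhaust the tail); each $S_j\subset\{f>2^{-j-1}m\}$, which by \eqref{eq:additional} has an $r_0$-net of size $\lesssim 2^{j\gamma}m^{-\gamma}r_0^{-d}$. For a net point $\mathbf{z}$, the event that $\hat f>C(k/N)\ln^{d}N$ somewhere in $B(\mathbf{z},r_0)$ forces $n(\mathbf{z},r_0+r_1)\ge k$ with $r_1\sim1/\ln N$; the regularity estimate bounds $\sup_{B(\mathbf{z},r_0+r_1)}f$ by $2^{-j}m$ up to an exponential correction that is beaten by $2^{-j}$ because $r_1\to 0$, giving $\mathbb{E}[n(\mathbf{z},r_0+r_1)]\lesssim 2^{-j/2}k$, so a Chernoff bound yields $\mathrm{P}(n(\mathbf{z},r_0+r_1)\ge k)\le (c')^{k}2^{-jk/2}$ for a small constant $c'$. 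Summing over the net and over $j\ge 0$ — the factor $2^{j\gamma}$ from the net sizes is dominated by $2^{-jk/2}$ once $k>2\gamma$ — the total failure probability is $\le\mathrm{poly}(N)\,(c')^{k}\le\epsilon/2$ for suitable constants and $k\gtrsim\ln(N/\epsilon)$. Intersecting with the net event of the previous paragraph, the two contributions combine to $(k/N)^{2/d}+k^{-1/2}\sqrt{\ln(N/\epsilon)}$ when $d>2$ (as $(k/N)\ln^{d}N\ll(k/N)^{2/d}$) and to $(k/N)\ln^{d}N+k^{-1/2}\sqrt{\ln(N/\epsilon)}$ when $d\le 2$ (the tail term dominating the bias), which is the claim.

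I expect the tail stage to be the main obstacle: $\hat f$ must be bounded uniformly over the \emph{non-compact} set $\{f\le m\}$, in particular over the ``moderately deep'' part where $f(\mathbf{x})$ is already tiny but $\rho(\mathbf{x})$ is not yet forced to be large, and it is precisely there that assumption \eqref{eq:additional} is indispensable — it keeps the shell net sizes polynomially bounded so that the union bound survives. A secondary subtlety is tracking the $\ln(1/f)$ factors through Taylor's theorem in the regularity estimate: they are what produce the threshold $m\sim(k/N)\ln^{d}N$ and the $\ln^{d}N$ factor for $d\le 2$, while being harmless for $d>2$.
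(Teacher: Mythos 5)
Your proposal is sound, and its first stage (net points on $\{f>m\}$ with $m\sim (k/N)\ln^d N$, Beta/Binomial concentration at the net points with a $\ln(N/\epsilon)$ price from the union bound, the Taylor bias $f(\mathbf{x})(1+\ln(1/f(\mathbf{x})))\rho^2(\mathbf{x})$ driven by the decaying gradient/Hessian, then a transfer to the whole region using the Lipschitzness of $\rho$) is essentially the paper's argument, up to the immaterial choice of a much finer net radius $N^{-C}$ in place of $\min\{(k/N)^{2/d},k^{-1/2}\}$. Where you genuinely diverge is the tail $\{f\le m\}$: you cover it by dyadic shells $S_j=\{2^{-j-1}m<f\le 2^{-j}m\}$, invoke \eqref{eq:additional} to get polynomially sized nets for each shell, and beat the $2^{j\gamma}$ growth of the net sizes by the Chernoff decay $2^{-jk/2}$ (needing $k>2\gamma$ and $k\gtrsim\ln(N/\epsilon)$), the key technical point being that the $f_+$ enlargement factor $\exp(O((j+\ln N)/\ln N))$ is absorbed by $2^{-j/2}$. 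The paper instead avoids covering the tail altogether: it takes a union bound over the $N$ \emph{samples}, bounding the probability of the event that some $\mathbf{X}_j$ lies near the low-density region and simultaneously has $(k-1)$-NN distance below $h\sim 1/\ln N$; on the complement, every tail point deterministically satisfies $\rho(\mathbf{x})\ge h/2$, which caps $\hat f$ by $O((k/N)\ln^d N)$ without using \eqref{eq:additional} in the tail at all and without any infinite union bound. Your route is more self-contained in that it reuses the same net machinery throughout, but it leans harder on the covering assumption (introducing the extra condition $k>2\gamma$) and requires the shell-by-shell bookkeeping; the paper's sample-indexed event is the more economical device. Both arguments share the same implicit requirement $k/\ln(N/\epsilon)\to\infty$, and both (yours and the paper's) silently absorb the tail cap $(k/N)\ln^d N$ into $(k/N)^{2/d}$ when $d>2$, which is legitimate only in the regime where $k/N$ decays polynomially — the intended regime $k\sim N^{4/(d+4)}$ — so this is not a defect of your proposal relative to the paper.
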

\begin{proof}
	Please refer to Appendix \ref{sec:unbounded-unif} for details.
\end{proof}
In Theorem \ref{thm:unbounded-unif}, we do not have the assumption (d) in Theorem \ref{thm:unbounded}. Actually, the tail strength does not affect the uniform convergence rate, since the $\ell_\infty$ bound only cares about the supremum error. However, we impose another assumption on the regularity of $\{\mathbf{x}|f(\mathbf{x})\geq m\}$. This additional assumption is actually very weak and is satisfied by almost all pdfs. 

The proof of Theorem \ref{thm:unbounded-unif} can be divided into two parts. Firstly, in the region with high pdf, the uniform convergence rate can be bounded using similar techniques as is used in the proof of Theorem \ref{thm:compact-unif}, which involves constructing some grid points, finding the uniform bound in the grid points, and then generalizing to the overall uniform bound over the whole region. However, since the support is unbounded, such technique can not be simply generalized to the whole space $\mathbb{R}^d$, especially to the region with low density, since the number of grid points will be infinite, and thus the related union bound does not work. Hence, we provide the uniform bound of kNN estimator by finding the lower bound of the kNN distances. 

The corresponding minimax lower bound is shown in Theorem \ref{thm:unbounded-unif-lb}.
\begin{thm}\label{thm:unbounded-unif-lb}
	Define $\Sigma_C$ as the set of all functions that satisfy assumption (a)-(c) in Theorem \ref{thm:unbounded} and the additional assumption \eqref{eq:additional}, then
	\begin{eqnarray}
	\underset{\hat{f}}{\inf}\underset{f\in \Sigma_C}{\sup} \mathbb{E}\left[\norm{\hat{f}-f}_\infty\right]\gtrsim N^{-\frac{2}{d+4} }.
	\end{eqnarray}
\end{thm}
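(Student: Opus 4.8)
The plan is to prove this minimax lower bound by Le Cam's two-point method \cite{tsybakov2009introduction}, using two densities that differ by a single localized smooth bump placed in the \emph{bulk} of the distribution rather than in its tail. First I would fix a reference density $f_0$ --- concretely a Gaussian with variance large enough that $\sup f_0 \le 1/2$ --- and check that $f_0 \in \Sigma_C$ with room to spare: since $\norm{\nabla f_0} \lesssim \norm{\mathbf{x}} f_0$ and $\norm{\nabla^2 f_0}_{op} \lesssim f_0(1+\norm{\mathbf{x}}^2)$, while $1+\ln(1/f_0) \gtrsim 1+\norm{\mathbf{x}}^2$, assumptions (b) and (c) of Theorem~\ref{thm:unbounded} hold (with constants depending on the variance); (a) holds by construction; and the super-level sets $\{f_0 > m\}$ are balls, so \eqref{eq:additional} holds. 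Fixing $\mathbf{x}_0=\mathbf{0}$, continuity and compactness give $f_0 \ge c_0 > 0$ on $B(\mathbf{x}_0,1)$. I would then set $f_1 = f_0 + \delta\,\psi_h$, where $\psi$ is a fixed smooth function supported in $B(\mathbf{0},1)$ with $\int\psi\,d\mathbf{u}=0$ and $\norm{\psi}_\infty=1$, $\psi_h(\mathbf{x})=\psi((\mathbf{x}-\mathbf{x}_0)/h)$, and $\delta = c_1 h^2$ for a small constant $c_1$. Then $\int f_1 = 1$, $f_1 \ge 0$ for $h$ small, and $\norm{f_1-f_0}_\infty = \delta = c_1 h^2$.

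Next I would verify $f_1 \in \Sigma_C$. Outside $B(\mathbf{x}_0,h)$ we have $f_1=f_0$, so the conditions are inherited. Inside $B(\mathbf{x}_0,h)$, for $h$ small both $f_0$ and $f_1$ lie in a fixed interval $[c_0/2,\,1]$, so $f_1(1+\ln(1/f_1))$ is bounded below by a positive constant $\lambda$; meanwhile $\norm{\nabla(\delta\psi_h)} \le c_1 h\norm{\nabla\psi}_\infty$ and $\norm{\nabla^2(\delta\psi_h)}_{op} \le c_1\norm{\nabla^2\psi}_{op}$ are bounded by absolute constants, so $\norm{\nabla f_1}$ and $\norm{\nabla^2 f_1}_{op}$ are bounded by constants on $B(\mathbf{x}_0,h)$ and (b), (c) hold there once $C_b,C_c$ exceed a constant multiple of $1/\lambda$. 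For \eqref{eq:additional}, the super-level sets of $f_1$ and $f_0$ differ only inside the fixed ball $B(\mathbf{x}_0,1)$, whose covering number at scale $r$ is $\lesssim r^{-d}$, so \eqref{eq:additional} survives after enlarging $\mathcal{N}_0$ (using $f \le 1$); and (a) holds since $f_1 \le \sup f_0 + \delta < 1$ for $h$ small. Thus $\{f_0,f_1\}\subset\Sigma_C$, provided the constants defining $\Sigma_C$ are taken large enough, in the same spirit as Theorem~\ref{thm:unbounded-lb}.

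For the distinguishability, the single-sample $\chi^2$-divergence obeys $\int(f_1-f_0)^2/f_0\,d\mathbf{x} \le c_0^{-1}\delta^2\int\psi_h^2\,d\mathbf{x} = c_0^{-1}\norm{\psi}_2^2\,\delta^2 h^d = c_0^{-1}\norm{\psi}_2^2 c_1^2\, h^{d+4}$, so by tensorization the $\chi^2$-divergence between the $N$-fold product measures is at most $\exp(c_0^{-1}\norm{\psi}_2^2 c_1^2\,N h^{d+4}) - 1$. Choosing $h = c_2 N^{-1/(d+4)}$ with $c_2$ small makes $N h^{d+4}$ a small constant, hence the total variation distance between the two product measures is at most $1/4$, say. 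Le Cam's lemma then yields $\inf_{\hat f}\sup_{f\in\{f_0,f_1\}}\mathbb{E}\left[\norm{\hat f - f}_\infty\right] \gtrsim \norm{f_0-f_1}_\infty \gtrsim \delta = c_1 c_2^2\, N^{-2/(d+4)}$, which is the claimed bound since the left side is no larger than $\inf_{\hat f}\sup_{f\in\Sigma_C}\mathbb{E}[\norm{\hat f - f}_\infty]$.

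The one genuinely delicate step is verifying (c) for $f_1$: the bump's Hessian is of order $\delta/h^2 = c_1$, which does \emph{not} vanish as $h\to 0$, so $f_1$ cannot be treated as a negligible perturbation of $f_0$. This is exactly why the bump must sit where $f_0$ is bounded below --- there the constraint $\norm{\nabla^2 f}_{op}\le C_c f(1+\ln(1/f))$ has a right-hand side bounded away from zero and can absorb extra curvature of order $c_1$ once $c_1$ is small relative to $C_c$ --- whereas in the tail $f$ is tiny and cannot support a bump of relative curvature $\Theta(1)$. This structural fact is also why a single fixed bump suffices: a multi-bump Fano/Assouad construction would only recover the extra $\sqrt{\ln N}$ factor present in the upper bound of Theorem~\ref{thm:unbounded-unif}, which the present statement does not claim. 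Everything else --- the Gaussian estimates for $f_0$, the $\chi^2$ tensorization, and Le Cam's inequality --- is routine.
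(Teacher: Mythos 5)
Your proposal is correct and follows essentially the same route as the paper: a two-point Le Cam argument with a smooth bump of width $r\sim N^{-1/(d+4)}$ and amplitude $r^2$ placed in a region where the reference density is bounded below, so that the perturbed density stays in $\Sigma_C$ and the $N$-sample divergence stays bounded. Your use of a single zero-mean bump with a $\chi^2$/total-variation bound, rather than the paper's pair of offsetting bumps with the KL form of Le Cam's lemma, and your more careful verification of membership in $\Sigma_C$, are only cosmetic differences.
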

\begin{proof}
	Please see Appendix \ref{sec:unbounded-unif-lb} for the detailed proof.
\end{proof}

We observe that if $d\geq 2$, with a proper selection of $k$, i.e. $k\sim N^{4/(d+4)}$, the upper bound of the kNN density estimator \eqref{eq:fhat} nearly matches the minimax lower bound. If $d=1$, then the upper bound does not match the minimax lower bound. 

\section{Numerical Examples}\label{sec:numerical}

In this section, we provide several numerical experiments to illustrate  the theoretical results derived in this paper. Our simulation has three parts. 

In the first part, we show the convergence plots of the $\ell_1$ and $\ell_\infty$ estimation errors of the boundary corrected kNN density estimator \eqref{eq:fbc} for uniform distributions, which is a typical example of distributions with bounded support. In the simulation, $k$ is selected to minimize the $\ell_1$ and $\ell_\infty$ error. The optimal growth rate of $k$ determined by Theorem \ref{thm:compact} and \ref{thm:compact-unif} are the same, i.e. $k\sim N^\frac{2}{d+2}$ can optimize both $\ell_1$ and $\ell_\infty$ rate. Therefore, we use this rate in the simulation. This part is shown in Figure~\ref{fig:compare} (a) and (b). 

In the second part, we show the convergence plots for Gaussian distributions, which is an example of distributions with unbounded support, as is shown in Figure~\ref{fig:compare}  (c) and (d). Note that according to Theorem \ref{thm:unbounded} and Theorem \ref{thm:unbounded-unif}, the optimal growth rate of $k$ that optimizes the $\ell_1$ and $\ell_\infty$ errors is different if $d=1$. For simplicity, we select $k$ that only optimizes $\ell_\infty$. For the first and the second part, for each $k$ and each sample size $N$, our simulation involves the following steps.

(1) Generate $N$ i.i.d samples according to a distribution, such as the standard Gaussian distribution;

(2) Find a region on which the probability mass of the distribution is sufficiently close to $1$. For example, for one dimensional standard Gaussian distribution, this region can be $[-5,5]$. Then divide the region into grids of size $0.01$;

(3) For each grid point, estimate its pdf value using the kNN density estimation method, and find its difference with the true value. Calculate the average and the maximum of such difference over all grids, in which the former one can be used as an estimate of the $\ell_1$ error by multiplying an appropriate factor, while the latter one can be used as an estimate of the $\ell_\infty$ error;

(4) Repeat (1)-(3) for $T=5000$ times, and find the average $\ell_1$ and $\ell_\infty$ error.

In the third part, we compare the $\ell_1$ error of the kNN density estimator and the kernel density estimator for two heavy tailed distributions. One is the Cauchy distribution, $f_1(x)=1/(\pi (1+x^2 ))$, and the other one is $f_2(x)=(|x|+1)^{-2/3}/4$. In our experiment, if the dimension is higher than $1$, then the high dimensional distribution is just the simple joint of i.i.d one dimensional distributions. The parameters for both methods are tuned optimally in the simulation. In Fig.\ref{fig:compare} (e) and (f), we plot the ratio between the $\ell_1$ error of the truncated kNN \eqref{eq:ft} and the kernel density estimators. If the ratio is lower than $1$, then the kNN method performs better than the kernel density estimator, and vice versa.
\begin{figure}[h!]
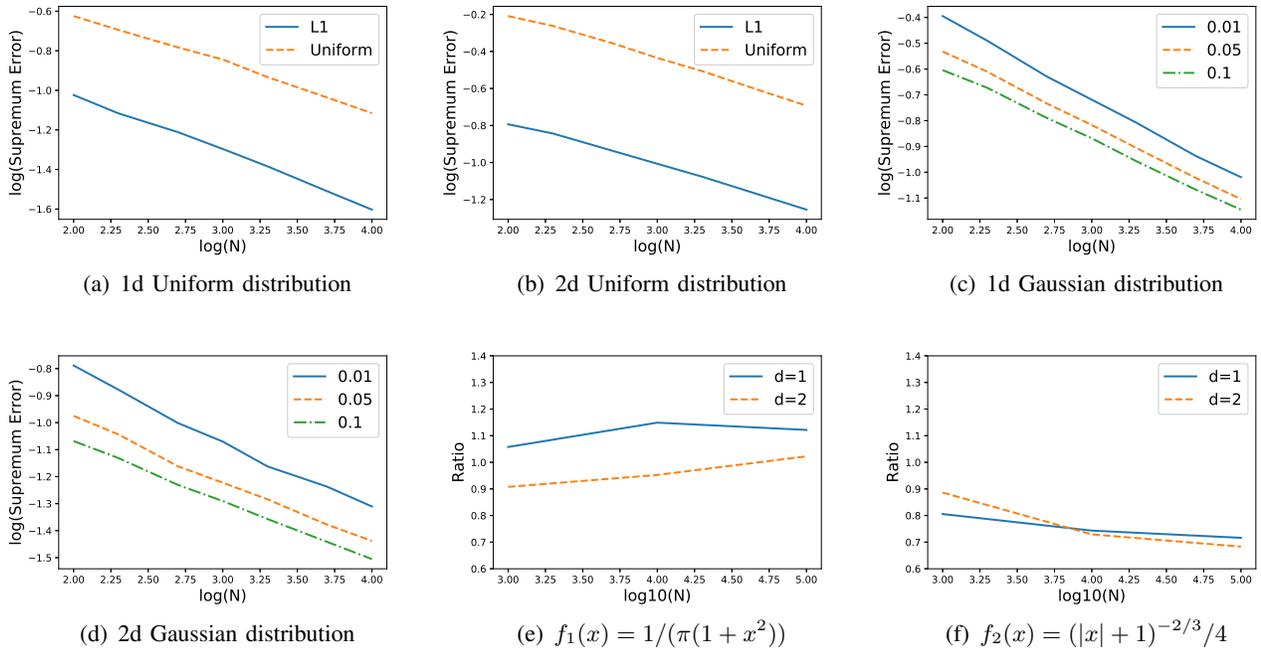

	\begin{center}
		\subfigure[1d Uniform distribution]{\includegraphics[width=0.32\linewidth]{../code/Uniform.eps}}	
		\subfigure[2d Uniform distribution]{\includegraphics[width=0.32\linewidth]{../code/Uniform2.eps}}				
		\subfigure[1d Gaussian distribution]{\includegraphics[width=0.32\linewidth]{../code/Gaussian1.eps}}	
		\subfigure[2d Gaussian distribution]{\includegraphics[width=0.32\linewidth]{../code/Gaussian2.eps}}				
		\subfigure[$f_1(x)=1/(\pi (1+x^2 ))$]{\includegraphics[width=0.32\linewidth]{../code/Cauchy_compare.eps}}	
		\subfigure[$f_2(x)=(|x|+1)^{-2/3}/4$]{\includegraphics[width=0.32\linewidth]{../code/newtype_compare.eps}}				
		\caption{Numerical simulation results of kNN density estimation. (a) and (b) show the convergence plot of the $\ell_1$ and $\ell_\infty$ estimation errors with respect to $N$ for one and two dimensional uniform distributions. (c) and (d) correspond to one and two dimensional Gaussian distributions. In this case, $k\sim N^{2/3}$. (e) and (f) compare the truncated kNN method with the kernel density estimator for two types of heavy tailed distributions. In (e), $f(x)=1/(\pi(1+x^2))$. In (f), $f(x)=(|x|+1)^{-2/3}/4$. The vertical axis is the ratio between the $\ell_1$ error of the kNN method and the $\ell_1$ error of the kernel method. }\label{fig:compare}
	\end{center}
\end{figure}

We further list the empirical and theoretical convergence rates in Table \ref{tab:convergence}. In Table \ref{tab:convergence}, the empirical convergence rates are the negative slopes of the curves in Fig. \ref{fig:compare}(a)-(d), and the theoretical convergence rates are the results in Theorem \ref{thm:compact}, \ref{thm:compact-unif}, \ref{thm:unbounded} and \ref{thm:unbounded-unif}. For simplicity, we only show the exponents in Table \ref{tab:convergence}, and ignore the logarithm factor. To be more precise, we fill $\delta$ in the table if the convergence rate is $\mathcal{\tilde{O}}(N^{-\delta})$.

\begin{table}[h]
	\centering
	\begin{tabular}{c|c|c|c|c}
		\hline
		\multirow{2}{*}{Case}& \multicolumn{2}{c}{$L_1$ error} & \multicolumn{2}{|c}{$L_\infty$ error}\\
		\cline{2-5}
		&Empirical &Theoretical & Empirical & Theoretical\\
		\hline
		Uniform distribution with $d=1$ &0.33&0.33&0.30&0.33\\
		Uniform distribution with $d=2$ & 0.25 & 0.25 & 0.26 & 0.25\\
		Gaussian distribution with $d=1$ & 0.39 & 0.40 & 0.30 & 0.33\\
		Gaussian distribution with $d=2$ & 0.31 & 0.33 & 0.30 & 0.33\\
		\hline
	\end{tabular}
\caption{Empirical and theoretical convergence rates of density estimation}\label{tab:convergence}
\end{table}

The results in Figure~\ref{fig:compare} (a)-(d) and Table~\ref{tab:convergence} show that the empirical convergence rates of the kNN density estimator \eqref{eq:fhat}, the boundary corrected one \eqref{eq:fbc} or the truncated one \eqref{eq:ft} agree with the theoretical analysis in general. From Figure~\ref{fig:compare} (e), it can be observed that for Cauchy distributions, the kNN method appears to perform slightly worse than the kernel density estimator, since the ratio is slightly above $1$. The Cauchy distribution satisfies assumption (d) in Theorem \ref{thm:unbounded} with $\beta=1/2$. According to Theorem \ref{thm:unbounded} and Proposition \ref{prop}, the convergence rates of these two methods are nearly the same. Hence, it is natural to observe some practical differences between the performance of these two estimators. If the tail is heavier, then the performance of the kNN method becomes obviously better than the kernel density estimator. The distribution in Figure~\ref{fig:compare} (f) satisfies assumption (d) in Theorem \ref{thm:unbounded} with $\beta=1/3$. Our theoretical analysis in Theorem \ref{thm:unbounded} and Proposition \ref{prop} indicates that the convergence rate of the truncated kNN estimator is faster than the kernel density estimator under this $\beta$. This can be observed in the curves in Figure~\ref{fig:compare} (f), in which the ratios are all below $1$ and are decaying with the increase of sample size $N$.

\section{Conclusion}\label{sec:conc}

In this paper, we have analyzed the convergence property of the estimation errors of the kNN density estimator under $\ell_1$ and $\ell_\infty$ criteria. The analysis is conducted for two types of distributions, including those with bounded support and the densities bounded away from zero, and those with unbounded support. We have shown the following results:

Firstly, for distributions with bounded support, the kNN density estimator is optimal under the $\ell_1$ criterion, even if the support set is unknown. If we use $\ell_\infty$ as the criterion, then the knowledge of the support is necessary, and a proper boundary correction technique is needed. Without the precise knowledge of the boundary, no estimator is uniformly consistent. After boundary correction, the kNN density estimator is minimax optimal under the $\ell_\infty$ criterion.

Secondly, for distributions with unbounded support, the $\ell_\infty$ bound is nearly minimax optimal. However, under the $\ell_1$ criterion, the original kNN estimator does not work, since the estimated pdf does not decay sufficiently fast with the real pdf. Therefore, a proper truncation is needed. We have derived the convergence rate of such truncated kNN density estimator, as well as the corresponding minimax lower bound. The comparison of these two bounds shows that the kNN density estimator has some gap to the minimax lower bound. However, we remark that the $\ell_1$ convergence rate is better than that of the kernel density estimator. This appears to conflict with previous works, but the previous works only assume the uniform bound of Hessian. If the gradient and Hessian of the pdf do not decay, then the bias at the tail is indeed large. However, for many common distributions, the Hessian decays simultaneously with the pdf. We have compared the convergence rates of these two methods for distributions with decaying Hessian, and have shown that the kNN density estimator with a proper truncation actually performs better.

\appendices

\section{Proof of Theorem~\ref{thm:compact}}\label{sec:bounded-l1}

Recall that
\begin{eqnarray}
\hat{f}(\mathbf{x})=\frac{k-1}{NV(B(\mathbf{x},\rho(\mathbf{x})))}.
\end{eqnarray}
We decompose the estimation error as
\begin{eqnarray}
\hat{f}(\mathbf{x})-f(\mathbf{x})&=&\left[\frac{k-1}{NV(B(\mathbf{x},\rho(\mathbf{x})))}-\frac{k-1}{NP(B(\mathbf{x},\rho(\mathbf{x})))}f(\mathbf{x})\right]+\left[\frac{k-1}{NP(B(\mathbf{x},\rho(\mathbf{x})))}-1\right]f(\mathbf{x})\nonumber\\
&:=& I_1+I_2.
\end{eqnarray}
Therefore
\begin{eqnarray}
\mathbb{E}[|\hat{f}(\mathbf{x})-f(\mathbf{x})|]\leq \mathbb{E}[|I_1|]+\mathbb{E}[|I_2|].
\end{eqnarray}
\textbf{Bound of $I_1$.}
\begin{eqnarray}
\mathbb{E}[|I_1|]=\mathbb{E}\left[\left|\frac{k-1}{NV(B(\mathbf{x},\rho(\mathbf{x})))}-\frac{k-1}{NP(B(\mathbf{x},\rho(\mathbf{x})))}f(\mathbf{x})\right|\right].
\end{eqnarray}
Denote $\Delta(\mathbf{x})$ as the distance from $\mathbf{x}$ to the boundary of $S$, i.e. for all $\mathbf{x}\in S$,
\begin{eqnarray}
\Delta(\mathbf{x}) = \inf\{\norm{\mathbf{x}-\mathbf{u}}|\mathbf{u}\in \partial S \},
\end{eqnarray}
in which $\partial S$ is the boundary of $S$. If $\rho(\mathbf{x})\leq \Delta(\mathbf{x})$, then $B(\mathbf{x},\rho(\mathbf{x}))\subset S$. Since $f$ is Lipschitz,
\begin{eqnarray}
|P(B(\mathbf{x},\rho(\mathbf{x})))-f(\mathbf{x})V(B(\mathbf{x},\rho(\mathbf{x})))|\leq L\rho(\mathbf{x})V(B(\mathbf{x},\rho(\mathbf{x}))).
\end{eqnarray} 
Hence for sufficiently large $k$,
\begin{eqnarray}
&&\mathbb{E}\left[\left|\frac{k-1}{NV(B(\mathbf{x},\rho(\mathbf{x})))}-\frac{k-1}{NP(B(\mathbf{x},\rho(\mathbf{x})))}f(\mathbf{x})\right|\mathbf{1}(\rho(\mathbf{x})\leq \Delta(\mathbf{x}))\right]\nonumber\\
&= & \frac{k-1}{N}\mathbb{E}\left[\frac{1}{P(B(\mathbf{x},\rho(\mathbf{x})))}\left|\frac{P(B(\mathbf{x},\rho(\mathbf{x})))-f(\mathbf{x})V(B(\mathbf{x},\rho(\mathbf{x})))}{V(B(\mathbf{x},\rho(\mathbf{x})))}\right|\mathbf{1}(\rho(\mathbf{x})\leq \Delta(\mathbf{x}))\right]\nonumber\\
&\leq& \frac{k-1}{N}\left(\mathbb{E}\left[\frac{1}{P^2(B(\mathbf{x},\rho(\mathbf{x})))}\right]\right)^\frac{1}{2}(\mathbb{E}[L^2 \rho^2(\mathbf{x})\mathbf{1}(\rho(\mathbf{x})\leq \Delta(\mathbf{x}))])^\frac{1}{2}\nonumber\\
&\overset{(a)}{\leq} &\frac{k-1}{N}\left(\frac{N(N-1)}{(k-1)(k-2)}\right)^\frac{1}{2}(\mathbb{E}[L^2 \rho^2(\mathbf{x})\mathbf{1}(\rho(\mathbf{x})\leq \Delta(\mathbf{x}))])^\frac{1}{2}\nonumber\\
&\overset{(b)}{\leq}& \sqrt{\frac{(k-1)(N-1)}{N(k-2)}}\left(\frac{L^2}{(mv_d)^\frac{2}{d}}\mathbb{E}[P^\frac{2}{d}(B(\mathbf{x},\rho(\mathbf{x})))]\right)^\frac{1}{2}\nonumber\\
&\overset{(c)}{\lesssim} &\left(\frac{k}{N}\right)^\frac{1}{d}.
\label{eq:I11}
\end{eqnarray}

Here, (a) uses the fact that $P(B(\mathbf{x},\rho(\mathbf{x})))$ follows Beta distribution $\text{Beta}(k,N-k+1)$. For (b), note that since the pdf is lower bounded by $m$, we have $P(B(\mathbf{x},\rho(\mathbf{x})))\geq mv_d\rho^d(\mathbf{x})$ if $\rho(\mathbf{x})\leq \Delta(\mathbf{x})$. For (c),  we use the following fact
\begin{eqnarray}
\mathbb{E}[P^\frac{2}{d}(B(\mathbf{x},\rho(\mathbf{x})))]&=&\frac{1}{\mathbb{B}(k,N-k+1)}\int u^\frac{2}{d} u^{k-1}(1-u)^{N-k}du\nonumber\\
&=&\frac{\Gamma\left(k+\frac{2}{d}\right)\Gamma(N+1)}{\Gamma\left(N+\frac{2}{d}+1\right)\Gamma(k)}\nonumber\\
&\lesssim &\left(\frac{k}{N}\right)^\frac{2}{d},
\end{eqnarray}
in which $\Gamma(x) = \int_0^\infty t^{x-1} e^{-t}dt$ and $\mathbb{B}(x,y) =\int t^{x-1} (1-t)^{y-1} dt$ are Gamma and Beta functions, respectively. 

If $\rho(\mathbf{x})>\Delta(\mathbf{x})$, since $m\leq f(\mathbf{x})\leq M$, 
\begin{eqnarray}
&&\mathbb{E}\left[\left|\frac{k-1}{NV(B(\mathbf{x},\rho(\mathbf{x})))}-\frac{k-1}{NP(B(\mathbf{x},\rho(\mathbf{x})))}f(\mathbf{x})\right|\mathbf{1}(\rho(\mathbf{x})>\Delta(\mathbf{x})\right]\nonumber\\
&\leq &\mathbb{E}\left[\frac{k-1}{NP(B(\mathbf{x},\rho(\mathbf{x})))} M\mathbf{1}(\rho(\mathbf{x})>\Delta(\mathbf{x}))\right]\nonumber\\
&\overset{(a)}{\leq} & M\mathbb{E}\left[\frac{k-1}{NP(B(\mathbf{x},\rho(\mathbf{x})))}\right]\text{P}(\rho(\mathbf{x})>\Delta(\mathbf{x}))\nonumber\\
&\lesssim & \text{P}(\rho(\mathbf{x})>\Delta(\mathbf{x})),
\label{eq:I12}
\end{eqnarray}
in which (a) holds because $1/P(B(\mathbf{x},\rho(\mathbf{x})))$ and $\mathbf{1}(\rho(\mathbf{x})>\Delta(\mathbf{x}))$ are negatively correlated.

Combining \eqref{eq:I11} and \eqref{eq:I12}, we have
\begin{eqnarray}
\mathbb{E}[|I_1|]\lesssim \left(\frac{k}{N}\right)^\frac{1}{d}+\text{P}(\rho(\mathbf{x})>\Delta(\mathbf{x})).
\label{eq:I1}
\end{eqnarray}

\textbf{Bound of $I_2$.}

Note that $\mathbb{E}[1/P(B(\mathbf{x},\rho(\mathbf{x})))]=N/(k-1)$, thus $\mathbb{E}[I_2]=0$. Therefore
\begin{eqnarray}
\mathbb{E}[I_2^2]&=&\Var[I_2]\nonumber\\
&=&\left(\frac{k-1}{N}\right)^2 f^2(\mathbf{x})\left(\mathbb{E}\left[\frac{1}{P^2(B(\mathbf{x},\rho(\mathbf{x})))}\right]-\left(\mathbb{E}\left[\frac{1}{P(B(\mathbf{x},\rho(\mathbf{x})))}\right]\right)^2\right)\nonumber\\
&=& \left(\frac{k-1}{N}\right)^2 f^2(\mathbf{x})\left(\frac{N(N-1)}{(k-1)(k-2)}-\frac{N^2}{(k-1)^2}\right)\nonumber\\
&=&f^2(\mathbf{x})\frac{N-k-1}{N(k-2)}.
\label{eq:I2}
\end{eqnarray}

Therefore $\mathbb{E}[|I_2|]\lesssim 1/\sqrt{k}$. Combining this with the bound of $I_1$ in \eqref{eq:I1}, we have 
\begin{eqnarray}
\mathbb{E}[|\hat{f}(\mathbf{x})-f(\mathbf{x})|]\lesssim \left(\frac{k}{N}\right)^\frac{1}{d}+\text{P}(\rho(\mathbf{x})>\Delta(\mathbf{x}))+k^{-\frac{1}{2}}.
\end{eqnarray}

Now integrate the above result over $\mathbf{x}\in S$. Define
\begin{eqnarray}
r_0=\left(\frac{k-1}{mv_dN}\right)^\frac{1}{d},
\end{eqnarray}
then $P(B(\mathbf{x},r_0))\geq (k-1)/N$. Hence, if $\Delta(\mathbf{x})>r_0$, 
\begin{eqnarray}
P(B(\mathbf{x},\Delta(\mathbf{x})))\geq mv_d\Delta^d(\mathbf{x})=mv_dr_0^d\left(\frac{\Delta(\mathbf{x})}{r_0}\right)^d=\frac{k-1}{N}\left(\frac{\Delta(\mathbf{x})}{r_0}\right)^d.
\end{eqnarray}
Then
\begin{eqnarray}
&&\int \text{P}(\rho(\mathbf{x})>\Delta(\mathbf{x}))\mathbf{1}(\Delta(\mathbf{x})>2^\frac{1}{d}r_0)d\mathbf{x}\nonumber\\
&\overset{(a)}{\leq}& \int \exp(-NP(B(\mathbf{x},\Delta(\mathbf{x}))))\left(\frac{eNP(B(\mathbf{x},\Delta(\mathbf{x})))}{k-1}\right)^{k-1} f(\mathbf{x})\mathbf{1}(\Delta(\mathbf{x})>2^\frac{1}{d}r_0)d\mathbf{x}\nonumber\\
&\overset{(b)}{\leq} & \int\exp\left[-(k-1)\left(\frac{\Delta(\mathbf{x})}{r_0}\right)^d\right]\left(e\left(\frac{\Delta(\mathbf{x})}{r_0}\right)^d\right)^{k-1}\mathbf{1}(\Delta(\mathbf{x})>2^\frac{1}{d}r_0)d\mathbf{x}\nonumber\\
&\overset{(c)}{\leq}& \int \exp\left[-(1-\ln 2)(k-1)\left(\frac{\Delta(\mathbf{x})}{r_0}\right)^d\right]d\mathbf{x}\nonumber\\
&\overset{(d)}{\leq}& V(S)\mathbb{E}\left[\exp\left[-(1-\ln 2)(k-1)\left(\frac{\Delta(U)}{r_0}\right)^d\right]\right]\nonumber\\
&=&V(S) \int_0^1 \text{P}\left(\exp\left[-(1-\ln 2)(k-1)\left(\frac{\Delta(U)}{r_0}\right)^d\right]>t\right)dt\nonumber\\
&=&V(S)\int_0^1 \text{P}\left(\Delta(U)<\left(\frac{\ln \frac{1}{t}}{(1-\ln 2)(k-1)}\right)^\frac{1}{d}r_0\right)dt\nonumber\\
&\overset{(e)}{\leq} & C_S\int_0^2 \frac{\ln^\frac{1}{d} t}{(1-\ln 2)^\frac{1}{d}(k-1)^\frac{1}{d}} r_0dt\nonumber\\
&=&\frac{C_S\Gamma\left(1+\frac{1}{d}\right)r_0}{(1-\ln 2)^\frac{1}{d}(k-1)^\frac{1}{d}}.
\end{eqnarray}

For (a), note that $\rho(\mathbf{x})>\Delta(\mathbf{x})$ is equivalent to the event that the number of samples in $B(\mathbf{x},\Delta(\mathbf{x}))$ is less than $k$. Therefore the probability can be bounded using Chernoff's inequality:
\begin{eqnarray}
\text{P}(\rho(\mathbf{x})>\Delta(\mathbf{x}))&=&\text{P}(n(\mathbf{x},\Delta(\mathbf{x}))<k-1)\nonumber\\
&\leq& \exp(-NP(B(\mathbf{x},\Delta(\mathbf{x}))))\left(\frac{eNP(B(\mathbf{x},\Delta(\mathbf{x})))}{k-1}\right)^{k-1},
\end{eqnarray}
in which $n(\mathbf{x},\Delta(\mathbf{x}))$ is the number of samples in $B(\mathbf{x},\Delta(\mathbf{x}))$, which follows a Binomial distribution with parameter $N$ and $P(B(\mathbf{x},\Delta(\mathbf{x})))$.

(b) uses the fact that $e^{-t}(et/(k-1))^{k-1}$ is monotonically increasing for $t>k-1$. (c) holds because $t-1-\ln t\geq (1-\ln 2)t$ for $t\geq 2$. In (d), $V(S)$ is the volume of the support $S$, and $U$ is a random variable following a uniform distribution in $S$. In (e), $C_S$ is the constant in Assumption \ref{ass:bounded} (c), which refers to the surface area of the support $S$. In addition,
\begin{eqnarray}
\int \text{P}(\rho(\mathbf{x})>\Delta(\mathbf{x}))\mathbf{1}(\Delta(\mathbf{x})\leq 2^\frac{1}{d} r_0)d\mathbf{x}\leq \int \mathbf{1}(\Delta(\mathbf{x})\leq 2^\frac{1}{d}r_0)d\mathbf{x}\leq 2^\frac{1}{d}r_0C_S.
\end{eqnarray}

Hence 
\begin{eqnarray}
\mathbb{E}\left[\norm{\hat{f}-f}_1\right]&\lesssim& \left(\frac{k}{N}\right)^\frac{1}{d}+\int \text{P}(\rho(\mathbf{x})>\Delta(\mathbf{x}))d\mathbf{x}+k^{-\frac{1}{2}}\nonumber\\
&\sim & \left(\frac{k}{N}\right)^\frac{1}{d}+k^{-\frac{1}{2}}.
\end{eqnarray}

The proof of the upper bound is now complete. The lower bound can be proved simply by standard minimax analysis in \cite{tsybakov2009introduction}. We now provide a simple proof. Find $2n$ points $\mathbf{a}_{i}$, $i=-n,-n+1,\ldots, -1, 1,\ldots, n$, such that $B(\mathbf{a}_i, r)\in S$ for any $i$, and $\norm{\mathbf{a}_j-\mathbf{a}_i}\geq 2r$ for any $j\neq i$. For $\mathbf{v}\in \mathbb{R}^d$, let
\begin{eqnarray}
f_\mathbf{v}(\mathbf{x}) = f_0(\mathbf{x})+v_i r g\left(\frac{\mathbf{x}-\mathbf{a}_i}{r}\right)-v_i r g\left(\frac{\mathbf{x}-\mathbf{a}_i}{r}\right),
\end{eqnarray}
in which
\begin{eqnarray}
f_0(\mathbf{x}) = 1/V(S)
\end{eqnarray}
is the pdf of the uniform distribution in support $S$ and 
\begin{eqnarray}
g(\mathbf{u}) = 1-\norm{\mathbf{u}}.
\end{eqnarray}

Then for any estimator $\hat{f}$,
\begin{eqnarray}
\underset{f\in \Sigma_A(S)}{\sup} \mathbb{E}\left[\norm{\hat{f}-f}_1\right] &\geq& \underset{\mathbf{v}\in \{-1,1\}^d}{\sup} \mathbb{E}\left[\norm{\hat{f}-f_\mathbf{v}}_1\right]\nonumber\\
&\geq &\mathbb{E}\left[\norm{\hat{f}-f_\mathbf{v}}_1\right]\nonumber\\
&=& \sum_{i=1}^n \mathbb{E}\left[\int_{B(\mathbf{a}_i,r)\cup B(\mathbf{a}_{-i}, r)} |\hat{f}-f_\mathbf{v}|d\mathbf{x}\right]\nonumber\\
&=& n\mathbb{E}\left[\int_{B(\mathbf{a}_1,r)\cap B(\mathbf{a}_{-1},r)} |\hat{f}-f_\mathbf{v}|d\mathbf{x}\right].
\end{eqnarray}
Let $\mathbf{v}_1 = (1,\ldots, 1)$, and $\mathbf{v}_2 = (-1,1,\ldots, 1)$, then from Le Cam's lemma \cite{tsybakov2009introduction}, we have

\begin{eqnarray}
\mathbb{E}\left[\int_{B(\mathbf{a}_1,r)\cup B(\mathbf{a}_{-1}, r)} |\hat{f}-f_\mathbf{v}|d\mathbf{x}\right]&\geq & \frac{1}{4} \norm{f_{\mathbf{v}_1}-f_{\mathbf{v}_2}}_1 e^{-ND(f_{\mathbf{v}_1}||f_{\mathbf{v}_2})}\nonumber\\
&\gtrsim & r^{d+1} e^{-Nr^{d+2}},
\end{eqnarray}
in which $D(\cdot||\cdot)$ is the KL divergence.
Hence, with $r\sim N^{-1/(d+2)}$,
\begin{eqnarray}
\underset{f\in \Sigma_A(S)}{\sup}\mathbb{E}\left[\norm{\hat{f}-f}_1\right]\gtrsim nr^{d+1}e^{-Nr^{d+2}} \sim N^{-\frac{1}{d+2}}.
\end{eqnarray}
The proof is complete.

\section{Proof of Theorem~\ref{thm:compact-unif}}\label{sec:bounded-unif}

Since $S$ is compact, there exists a constant $\mathcal{N}_0$, such that for sufficiently small $r$, the covering number of $S$ with balls with radius $r$ is bounded by $\mathcal{N}_0/r^d$. Therefore, we use $n$ balls with radius $r$ to cover the support set $S$, in which $n\leq \mathcal{N}_0/r^d$, and
\begin{eqnarray}
r=\min\left\{\left(\frac{k}{N}\right)^\frac{2}{d},k^{-\frac{1}{2}} \right\}.
\label{eq:rdef2}
\end{eqnarray} 
Denote $\mathbf{a}_1,\ldots, \mathbf{a}_n$ as the centers of these balls. For any $\epsilon>0$, define $\Delta(N,k)$ such that 
\begin{eqnarray}
\max\left\{D\left(\frac{k-1}{N}||\frac{k-1}{N}+\Delta(N,k)\right),D\left(\frac{k-1}{N}||\frac{k-1}{N}-\Delta(N,k)\right) \right\}=\frac{1}{N}\ln \frac{2n}{\epsilon},
\label{eq:Delta}
\end{eqnarray}
in which $D(p||q)=p\ln\frac{p}{q}+(1-p)\ln\frac{1-p}{1-q}$. Then we have the following lemma:
\begin{lem}\label{lem:Delta}
	If $k/N\rightarrow 0$ as $N\rightarrow \infty$, and $n<\frac{1}{2} e^{\frac{1}{8}(k-1)\epsilon}$, then
	\begin{eqnarray}
	\Delta(N,k)\leq 4\frac{k^{\frac{1}{2}}}{N}\sqrt{\ln \frac{2n}{\epsilon}}.
	\label{eq:Deltabound}
	\end{eqnarray}
\end{lem}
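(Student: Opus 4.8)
The plan is to reduce the two–sided condition \eqref{eq:Delta} to a single inequality for the binary relative entropy and then invert a quadratic lower bound for it. Abbreviate $p=(k-1)/N$, $L=\ln(2n/\epsilon)$, and $\Delta_0=4k^{1/2}L^{1/2}/N$, so that \eqref{eq:Deltabound} is exactly the claim $\Delta(N,k)\le\Delta_0$. Since $\max\{A,B\}\ge B$, the defining equation \eqref{eq:Delta} immediately yields
$$D(p\,\|\,p-\Delta(N,k))\le\frac1N\ln\frac{2n}{\epsilon}=\frac{L}{N}.$$
Because the left–hand side is finite we must have $p-\Delta(N,k)>0$, i.e.\ $\Delta(N,k)<p$; combined with $k/N\to0$ this also gives $p+\Delta(N,k)<2(k-1)/N<1$ for large $N$, so both divergences in \eqref{eq:Delta} are well defined and the construction of $\Delta(N,k)$ is legitimate.

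Next I would record the elementary fact that for $0<q\le p<1$,
$$D(p\|q)\ge\frac{(p-q)^2}{2p}.$$
This follows by Taylor expansion of $g(\delta):=D(p\,\|\,p-\delta)$ on $[0,p)$: one has $g(0)=g'(0)=0$ and
$$g''(\delta)=\frac{p}{(p-\delta)^2}+\frac{1-p}{(1-p+\delta)^2}\ge\frac1p,$$
so $g'(\delta)\ge\delta/p$ and $g(\delta)\ge\delta^2/(2p)$ after two integrations. The same estimate shows $g$ is strictly increasing on $[0,p)$, which is the monotonicity I will exploit to invert it.

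With these pieces the argument is short. If $\Delta_0\ge p$, then $\Delta(N,k)<p\le\Delta_0$ and we are done. If $\Delta_0<p$, a direct computation gives
$$D(p\,\|\,p-\Delta_0)\ge\frac{\Delta_0^2}{2p}=\frac{16kL/N^2}{2(k-1)/N}=\frac{8k}{k-1}\cdot\frac{L}{N}>\frac{L}{N}\ge D(p\,\|\,p-\Delta(N,k)),$$
and since $\delta\mapsto D(p\,\|\,p-\delta)$ is strictly increasing on $[0,p)$ this forces $\Delta(N,k)<\Delta_0$, which is \eqref{eq:Deltabound}. The hypotheses $k/N\to0$ and $n<\tfrac12 e^{(k-1)\epsilon/8}$ enter only to keep $p$ (hence $p+\Delta$) bounded away from $1$ and to make the bound meaningful; the factor $4$ in $\Delta_0$ leaves a comfortable margin, as witnessed by the constant $8k/(k-1)\ge8$ above.

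I do not expect a genuine obstacle here. The only points that require a little care are (i) observing that \eqref{eq:Delta} already controls the $p-\Delta$ branch, since it is the larger of the two relative entropies, so no information about the $p+\Delta$ branch is needed; and (ii) treating the degenerate regime $\Delta_0\ge p$ (where $p-\Delta_0\le0$ and the divergence is $+\infty$) separately rather than trying to apply the quadratic KL bound there. If one wanted a sharper constant, one would keep the full term $g''(\delta)\ge p/(p-\delta)^2$ and integrate it exactly, but the crude bound $g''\ge1/p$ is more than sufficient for \eqref{eq:Deltabound}.
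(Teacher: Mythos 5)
Your proof is correct, and at its core it follows the same strategy as the paper — a second-order Taylor/curvature bound for $q\mapsto D\left(\frac{k-1}{N}\,\middle\|\,q\right)$ around $q=p:=\frac{k-1}{N}$, inverted by monotonicity — but your execution is genuinely cleaner and proves more. You use only the lower branch $D(p\,\|\,p-\Delta(N,k))\le\frac{1}{N}\ln\frac{2n}{\epsilon}$ (all that the max in \eqref{eq:Delta} guarantees), and you bound the curvature one-sidedly by $g''(\delta)=\frac{p}{(p-\delta)^2}+\frac{1-p}{(1-p+\delta)^2}\ge\frac{1}{p}$ on all of $[0,p)$, giving $D(p\,\|\,p-\delta)\ge\frac{\delta^2}{2p}$ with no localization. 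The paper instead lower-bounds $\partial^2 D/\partial q^2$ by $\frac{1}{8p}$ only on $(p/2,2p)$ and only for small $p$, and must spend both hypotheses of the lemma ($k/N\to 0$ and $n<\frac12 e^{\frac18(k-1)\epsilon}$, via \eqref{eq:KLb1}--\eqref{eq:KLb2}) to show $p\pm\Delta(N,k)$ stays in $(p/2,2p)$ before Taylor-expanding both branches. Your route buys a stronger statement: the lemma's hypotheses are used only for well-definedness, the potentially degenerate regime $\Delta_0\ge p$ is handled explicitly rather than excluded through those hypotheses, and the constant comes out with margin $\frac{8k}{k-1}\ge 8>1$; incidentally your curvature formula also corrects the sign slip in the paper's displayed $\frac{\partial^2 D(p\|q)}{\partial q^2}=\frac{p}{q^2}-\frac{1-p}{(1-q)^2}$, which should have a plus. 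The only edge case worth a sentence is $\ln\frac{2n}{\epsilon}=0$, where your strict inequality degenerates; it cannot occur since $\epsilon<1\le n$, and even then $\Delta(N,k)=0$ trivially.
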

\begin{proof}
	Please see Appendix \ref{sec:Delta} for the proof.
\end{proof}

Now we provide a high probability bound of $P(B(\mathbf{x},\rho(\mathbf{x})))$. Denote $n(B(\mathbf{x},\rho(\mathbf{x})))$ as the number of samples in $B(\mathbf{x},\rho(\mathbf{x}))$, and define $r_0(\mathbf{x},p)$ such that $P(B(\mathbf{x},r_0(\mathbf{x},p)))=p$. Then
\begin{eqnarray}
\text{P}\left(P(B(\mathbf{x},\rho(\mathbf{x})))\geq \frac{k-1}{N}+\Delta(N,k)\right)&=&\text{P}\left(n\left(\mathbf{x},r_0\left(\mathbf{x},\frac{k-1}{N}+\Delta(N,k)\right)\right)\leq k-1\right)\nonumber\\
&\overset{(a)}{\leq} & \exp\left[-ND\left(\frac{k-1}{N}||\frac{k-1}{N}+\Delta(N,k)\right)\right]\nonumber\\
&\overset{(b)}{\leq}& \frac{\epsilon}{2n},
\label{eq:probub}
\end{eqnarray}
in which $n\left(\mathbf{x},r_0\left(\mathbf{x},\frac{k-1}{N}+\Delta(N,k)\right)\right)$ is the number of samples in $B\left(\mathbf{x},r_0\left(\mathbf{x},\frac{k-1}{N}+\Delta(N,k)\right)\right)$. From the definition of $r_0$, we have $P(B\left(\mathbf{x},r_0\left(\mathbf{x},\frac{k-1}{N}+\Delta(N,k)\right)\right))=(k-1)/N+\Delta(N,k)$. Hence, $n\left(\mathbf{x},r_0\left(\mathbf{x},\frac{k-1}{N}+\Delta(N,k)\right)\right)$ follows Binomial distribution with parameter $N$ and $(k-1)/N+\Delta(N,k)$. Then using Chernoff inequality, we get (a). Step (b) comes from \eqref{eq:Delta}.

Using similar arguments, we can also obtain
\begin{eqnarray}
\text{P}\left(P(B(\mathbf{x},\rho(\mathbf{x})))\leq \frac{k-1}{N}-\Delta(N,k)\right)\leq \frac{\epsilon}{2n}.
\label{eq:problb}
\end{eqnarray}
Using \eqref{eq:probub} and \eqref{eq:problb}, with probability at least $1-\epsilon$, we have
\begin{eqnarray}
\left|P(B(\mathbf{a}_i,\rho(\mathbf{x})))-\frac{k-1}{N}\right|<\Delta(N,k), \forall i\in\{1,\ldots,n \}.
\label{eq:cond}
\end{eqnarray}

In the remainder of this proof, we assume \eqref{eq:cond} is satisfied. We decompose $|\hat{f}(\mathbf{x})-f(\mathbf{x})|$ as following:

\begin{eqnarray}
\underset{\mathbf{x}\in S}{\sup}|\hat{f}(\mathbf{x})-f(\mathbf{x})|&\leq& \underset{\mathbf{x}\in S}{\sup} |\hat{f}(\mathbf{x})-\hat{f}(\mathbf{a}_i)|+\underset{i}{\max}|\hat{f}(\mathbf{a}_i)-f(\mathbf{a}_i)|+\underset{\mathbf{x}\in S}{\sup}|f(\mathbf{a}_i)-f(\mathbf{x})|\nonumber\\
&:=& I_1+I_2+I_3,
\label{eq:decomp}
\end{eqnarray}
in which $\mathbf{a}_i$ is the nearest point to $\mathbf{x}$ among $\{\mathbf{a}_1,\ldots,\mathbf{a}_n \}$.

We now bound these three terms separately.

\textbf{Bound of $I_1$.}
\begin{eqnarray}
|\hat{f}(\mathbf{x})-\hat{f}(\mathbf{a}_i)|&=&\left|\frac{k-1}{NV(B(\mathbf{x},\rho(\mathbf{x})))}-\frac{k-1}{NV(B(\mathbf{a}_i,\rho(\mathbf{a}_i)))}\right|\nonumber\\
&\leq &\frac{(k-1)M}{NP(B(\mathbf{a}_i,\rho(\mathbf{a}_i)))} \left|\frac{V(B(\mathbf{a}_i,\rho(\mathbf{a}_i))}{V(B(\mathbf{x},\rho(\mathbf{x})))}-1 \right|.
\end{eqnarray}
Here, $M$ is the constant in Assumption~\ref{ass:bounded} (a), which upper bounds $f(\mathbf{x})$ for all $\mathbf{x}\in S$. If \eqref{eq:cond} is satisfied, then for sufficiently large $N$,
\begin{eqnarray}
I_1&\leq & \frac{(k-1)M}{N\left(\frac{k-1}{N}-\Delta(N,k)\right)}\left|\frac{\rho^d(\mathbf{a}_i)}{(\rho(\mathbf{a}_i)-r)^d}-1\right|\nonumber\\
&\leq & 2M\left|\frac{1}{\left(1-\frac{r}{\rho(\mathbf{a}_i)}\right)^d}-1\right|.
\end{eqnarray}
According to the definition of $r$ in \eqref{eq:rdef2}, we have
\begin{eqnarray}
\frac{r}{\rho}&\leq & \left(\frac{Mv_d}{P(B(\mathbf{x},\rho))}\right)^\frac{1}{d} r\nonumber\\
&\leq & \left(\frac{Mv_d}{\frac{k-1}{N}-\Delta(N,k)}\right)^\frac{1}{d} r\nonumber\\
&\leq & \left(\frac{Mv_d}{\frac{k-1}{N}-\Delta(N,k)}\right)^\frac{1}{d}\left(\frac{k}{N}\right)^\frac{2}{d}.
\end{eqnarray}
Therefore there exists a constant $A_1$, such that
\begin{eqnarray}
I_1\lesssim\left(\frac{k}{N}\right)^\frac{1}{d}.
\label{eq:I1-rate}
\end{eqnarray}
\textbf{Bound of $I_2$.}
For all $\mathbf{x}\in S$,
\begin{eqnarray}
|\hat{f}(\mathbf{x})-f(\mathbf{x})|\leq \left|\frac{k-1}{NV(B(\mathbf{x},\rho))}-\frac{k-1}{NP(B(\mathbf{x},\rho))}f(\mathbf{x})\right|+\left|\frac{k-1}{NP(B(\mathbf{x},\rho))}-1\right|f(\mathbf{x}).
\label{eq:I20}
\end{eqnarray}

According to \eqref{eq:Deltabound}, if $k/\ln N\rightarrow \infty$ and $k/N\rightarrow 0$, for sufficiently large $N$, when \eqref{eq:cond} holds,
\begin{eqnarray}
\left|\frac{k-1}{NP(B(\mathbf{x},\rho))}-1\right|\lesssim k^{-\frac{1}{2}}\sqrt{\ln \frac{n}{\epsilon}}.
\label{eq:b1}
\end{eqnarray}
Moreover, under \eqref{eq:cond},
\begin{eqnarray}
\left|\frac{k-1}{NV(B(\mathbf{a}_i,\rho))}-\frac{k-1}{NP(B(\mathbf{a}_i,\rho))}f(\mathbf{a}_i)\right|&=&\frac{k-1}{NP(B(\mathbf{a}_i,\rho))}\left|\frac{P(B(\mathbf{a}_i,\rho))-f(\mathbf{a}_i)V(B(\mathbf{a}_i,\rho))}{V(B(\mathbf{a}_i,\rho))}\right|\nonumber\\
&\overset{(a)}{\leq}&\frac{k-1}{NP(B(\mathbf{a}_i,\rho))} L\rho \nonumber\\
&\overset{(b)}{\leq}&\frac{k-1}{NP(B(\mathbf{a}_i,\rho))} L(mv_d)^{-\frac{1}{d}}P^\frac{1}{d}(B(\mathbf{a}_i,\rho))\nonumber\\
&\leq &L(mv_d)^{-\frac{1}{d}} \frac{k-1}{N}\frac{1}{\left(\frac{k-1}{N}-\Delta(N,k)\right)^{1-\frac{1}{d}}}\nonumber\\
&\lesssim &\left(\frac{k}{N}\right)^\frac{1}{d}.
\label{eq:b2}
\end{eqnarray}
In (a), we use the H{\"o}lder assumption:
\begin{eqnarray}
|P(B(\mathbf{a}_i,\rho))-f(\mathbf{a}_i)V(B(\mathbf{a}_i,\rho))|&=& \left|\int_{B(\mathbf{a}_i,\rho)}(f(\mathbf{x})-f(\mathbf{a}))d\mathbf{x}\right|\nonumber\\
&\leq &\left|\int_{B(\mathbf{a}_i,\rho)}L\norm{\mathbf{x}-\mathbf{a}_i} d\mathbf{x}\right|\nonumber\\
&\leq & L\rho V(B(\mathbf{a}_i,\rho)).
\label{eq:Holder}
\end{eqnarray}
(b) uses the fact that $P(B(\mathbf{a}_i,\rho))\geq mv_d\rho^d$. 

Plugging \eqref{eq:b1} and \eqref{eq:b2} into \eqref{eq:I20}, we can show that as long as \eqref{eq:cond} holds, the following result holds for all $i=1,\ldots, n$:
\begin{eqnarray}
|\hat{f}(\mathbf{a}_i)-f(\mathbf{a}_i)|\lesssim k^{-\frac{1}{2}}\sqrt{\ln \frac{n}{\epsilon}}+\left(\frac{k}{N}\right)^\frac{1}{d}.
\label{eq:I2-rate0}
\end{eqnarray}
According to \eqref{eq:additional}, the additional assumption in Theorem \ref{thm:unbounded-unif}, it is possible to let
\begin{eqnarray}
n\leq \mathcal{N}_0/r^d\leq \mathcal{N}_0\max\left\{\left(\frac{N}{k}\right)^2,k^{\frac{d}{2}} \right\}.
\label{eq:nub}
\end{eqnarray} 
Hence, from \eqref{eq:I2-rate0} and \eqref{eq:nub},
\begin{eqnarray}
|\hat{f}(\mathbf{a}_i)-f(\mathbf{a}_i)|\lesssim k^{-\frac{1}{2}}\sqrt{\ln \frac{N}{\epsilon}}+\left(\frac{k}{N}\right)^\frac{1}{d}.
\label{eq:I2-rate}
\end{eqnarray}
\textbf{Bound of $I_3$.} According to assumption (b) and the definition of $r$ in \eqref{eq:rdef2},
\begin{eqnarray}
|f(\mathbf{x})-f(\mathbf{a}_i)|\leq L\underset{i}{\min} \norm{\mathbf{x}-\mathbf{a}_i}\leq Lr \lesssim k^{-\frac{1}{2}}.
\label{eq:I3-rate}
\end{eqnarray}
Recall that \eqref{eq:I1-rate}, \eqref{eq:I2-rate} and \eqref{eq:I3-rate} are all obtained under \eqref{eq:cond}, which holds with probability at least $1-\epsilon$. Based on these three equations, and use the upper bound of $n$ in \eqref{eq:nub}, we know that there exist two constants $C_1$ and $C_2$ such that
\begin{eqnarray}
|\hat{f}(\mathbf{x})-f(\mathbf{x})|\lesssim \left(\frac{k}{N}\right)^\frac{1}{d}+k^{-\frac{1}{2}}\sqrt{\ln \frac{N}{\epsilon}}
\end{eqnarray}
holds for all $\mathbf{x}\in S$ with probability at least $1-\epsilon$. The proof is complete.
\subsection{Proof of Lemma \ref{lem:Delta}}\label{sec:Delta}
From the definition of KL divergence, we have
\begin{eqnarray}
\frac{\partial^2 D(p||q)}{\partial q^2}=\frac{p}{q^2}-\frac{1-p}{(1-q)^2}.
\end{eqnarray}
If $\frac{1}{2}p<q<2p$, and $p$ is sufficiently small, we have
\begin{eqnarray}
\frac{\partial^2 D(p||q)}{\partial q^2}\geq \frac{p}{4p^2}-\frac{1-p}{(1-2p)^2}\geq \frac{1}{8p}.
\end{eqnarray}
Here we let $p=(k-1)/N$. Since $k/N\rightarrow 0$, for sufficiently large $N$, $p$ will be sufficiently small. Therefore
\begin{eqnarray}
\frac{\partial^2}{\partial q^2} D\left(\frac{k-1}{N}||q\right)\geq\frac{N}{8(k-1)} \geq \frac{N}{8k}
\end{eqnarray}
holds for $(k-1)/(2N)<q<2(k-1)/N$. Moreover, it can be shown that $\underset{p\rightarrow 0}{\lim} D(p||\frac{1}{2}p)/p=\ln 2-1/8>1/8$, and $\underset{p\rightarrow 0}{\lim} D(p||2p)/p=1-\ln 2>1/8$. Hence for sufficiently large $N$, $k/N$ is sufficiently small, we have
\begin{eqnarray}
\min\left\{ D\left(\frac{k-1}{N}||\frac{k-1}{2N}\right), D\left(\frac{k-1}{N}||\frac{2(k-1)}{N}\right) \right\}\geq \frac{k-1}{8N}.
\label{eq:KLb1}
\end{eqnarray}
According to the condition $n<\frac{1}{2}e^{\frac{1}{8}(k-1)\epsilon}$, we have
\begin{eqnarray}
\frac{1}{N}\ln \frac{2n}{\epsilon}<\frac{k-1}{8N}.
\label{eq:KLb2}
\end{eqnarray}

Therefore, using the second order Taylor expansion,
\begin{eqnarray}
D\left(\frac{k-1}{N}||\frac{k-1}{N}+\Delta(N,k)\right)&\overset{(a)}{=} &D\left(\frac{k-1}{N}||\frac{k-1}{N}\right)+\frac{1}{2}\left. \frac{\partial^2 D\left(\frac{k-1}{N}||q\right)}{\partial q^2}\right|_{q=\xi} \Delta^2 (N,k)\nonumber\\
&\overset{(b)}{\geq} &\frac{1}{2}\underset{\frac{k-1}{2N}<q<\frac{2(k-1)}{N}}{\inf} \frac{\partial^2 D\left(\frac{k-1}{N}||q\right)}{\partial q^2} \Delta^2 (N,k)\nonumber\\
&\geq & \frac{N}{16k}\Delta^2(N,k).
\end{eqnarray}
In (a), $\xi$ is in between $(k-1)/N$ and $(k-1)/N+\Delta(N,k)$. (b) holds because \eqref{eq:KLb1}, \eqref{eq:KLb2} and the definition of $\Delta(N,k)$ in \eqref{eq:Delta} imply that $(k-1)/N+\Delta(N,k)<2(k-1)/N$ and $(k-1)/N-\Delta(N,k)>(k-1)/(2N)$.

Similarly,
\begin{eqnarray}
D\left(\frac{k-1}{N}||\frac{k-1}{N}-\Delta(N,k)\right)\geq \frac{N}{16k}\Delta^2(N,k)
\end{eqnarray} 

 also holds. According to \eqref{eq:Delta}, we have
\begin{eqnarray}
\frac{N}{16k}\Delta^2(N,k)\leq \frac{1}{N}\ln \frac{2n}{\epsilon}.
\end{eqnarray}
Thus \eqref{eq:Deltabound} holds. The proof of Lemma \ref{lem:Delta} is complete.

Now we prove the corresponding minimax lower bound of the $\ell_\infty$ bound with unknown support, and show that no method is uniformly consistent. Let the distribution be one dimensional, $f_1(x)=1$ in $(0,1)$, and $f_2(x)=N/(N-1)$ in $(0,1-1/N)$. Use Le Cam's lemma \cite{tsybakov2009introduction},
\begin{eqnarray}
\underset{f}{\inf}\underset{f\in \Sigma}{\sup}\mathbb{E}\left[\norm{\hat{f}-f}_\infty\right]\geq \frac{1}{2}\norm{f_1-f_2}_\infty e^{-ND(f_2||f_1)}\geq \frac{1}{2}e^{-N\ln \frac{N}{N-1}}\rightarrow \frac{1}{2e}\neq 0.
\end{eqnarray}

On the contrary, if the support is known, then the minimax bound for known boundary has been derived in \cite{tsybakov2009introduction}.
\section{Proof of Theorem~\ref{thm:unbounded}}\label{sec:unbounded-l1}

Define
\begin{eqnarray}
f_+(\mathbf{x},r)=\underset{\mathbf{x}'\in B(\mathbf{x},r)}{\sup} f(\mathbf{x}').
\end{eqnarray}
We have the following two lemmas.
\begin{lem}\label{lem:pdf1}
	If $r\leq 1/(C_b(1-\ln f(\mathbf{x})))$, then
	\begin{eqnarray}
	f_+(\mathbf{x},r)\leq \frac{f(\mathbf{x})}{1-C_br(1-\ln f(\mathbf{x}))}.
	\end{eqnarray}
\end{lem}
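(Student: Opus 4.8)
The plan is to reduce the claim to a one–dimensional differential inequality along the segment joining $\mathbf{x}$ to an arbitrary point $\mathbf{x}'\in B(\mathbf{x},r)$, and then close it with a Gr{\"o}nwall-type argument followed by an elementary estimate. Fix $\mathbf{x}'\in B(\mathbf{x},r)$, set $s=\norm{\mathbf{x}'-\mathbf{x}}\le r$, $\mathbf{u}=(\mathbf{x}'-\mathbf{x})/s$, and $g(t)=f(\mathbf{x}+t\mathbf{u})$ for $t\in[0,s]$. By assumption (c) of Theorem~\ref{thm:unbounded} the density is $C^2$, hence $g$ is $C^1$; from assumptions (b) and (c) one has $f>0$ everywhere, so $g>0$ on $[0,s]$; and by assumption (a), $g\le 1$, so that $\psi(t):=\ln g(t)\le 0$ and $\chi(t):=1-\psi(t)\ge 1$ throughout $[0,s]$. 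The chain rule together with assumption (b) then gives $|\psi'(t)|\le \norm{\nabla f(\mathbf{x}+t\mathbf{u})}/g(t)\le C_b(1-\ln g(t))=C_b\chi(t)$.

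Next I would integrate this. From $\psi'\le C_b\chi$ we get $\chi'=-\psi'\ge -C_b\chi$, i.e.\ $(\ln\chi)'\ge -C_b$ (legitimate since $\chi>0$); integrating over $[0,s]$ and using $s\le r$ yields
\begin{eqnarray}
1-\ln f(\mathbf{x}')=\chi(s)\ \ge\ \chi(0)e^{-C_b s}\ \ge\ (1-\ln f(\mathbf{x}))(1-C_b r),
\end{eqnarray}
where the last step uses $e^{-C_b s}\ge 1-C_b s\ge 1-C_b r$ together with $C_b r\le 1$, which follows from the hypothesis $r\le 1/(C_b(1-\ln f(\mathbf{x})))$ and $1-\ln f(\mathbf{x})\ge 1$. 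Rearranging gives $\ln f(\mathbf{x}')\le \ln f(\mathbf{x})+C_b r(1-\ln f(\mathbf{x}))$.

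Finally I would convert this additive bound into the stated rational one. Writing $x=C_b r(1-\ln f(\mathbf{x}))$, the hypothesis is precisely $x\le 1$, and for $x\in[0,1)$ the expansion $-\ln(1-x)=\sum_{n\ge 1}x^n/n$ shows $x\le -\ln(1-x)$ (the case $x=1$ is vacuous, since the asserted bound is then $+\infty$). Hence $\ln f(\mathbf{x}')\le \ln f(\mathbf{x})-\ln(1-x)$, i.e.\ $f(\mathbf{x}')\le f(\mathbf{x})/(1-C_b r(1-\ln f(\mathbf{x})))$; taking the supremum over $\mathbf{x}'\in B(\mathbf{x},r)$ proves the lemma. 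The only point that needs genuine care is maintaining $1-\ln g(t)\ge 1>0$ for every $t$ on the segment — this is exactly where assumption (a) is used — so that the logarithmic substitution and the division in the Gr{\"o}nwall step are justified and $g$ cannot blow up before reaching $\mathbf{x}'$; the rest is routine bookkeeping.
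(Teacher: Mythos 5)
Your proof is correct, but it follows a genuinely different route from the paper's. The paper argues directly on $f$: it applies the mean value theorem to write $f(\mathbf{u})=f(\mathbf{x})+\nabla^Tf(\xi)(\mathbf{u}-\mathbf{x})$, bounds $\norm{\nabla f(\xi)}$ via assumption (b) by $C_bf_+(\mathbf{x},r)\left(1+\ln\frac{1}{f(\mathbf{x})}\right)$ (using $f\le 1$ and $f_+\ge f(\mathbf{x})$ for the monotonicity of $t(1-\ln t)$ and of $-\ln t$), takes the supremum over $\mathbf{u}$ to obtain the self-referential inequality $f_+(\mathbf{x},r)\le f(\mathbf{x})+C_brf_+(\mathbf{x},r)(1-\ln f(\mathbf{x}))$, and then rearranges. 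You instead integrate a differential inequality for $\chi=1-\ln f$ along rays and close with Gr{\"o}nwall, obtaining the stronger intermediate estimate $1-\ln f(\mathbf{x}')\ge(1-\ln f(\mathbf{x}))e^{-C_bs}$, i.e. $f_+(\mathbf{x},r)\le e^{1-e^{-C_br}}f(\mathbf{x})^{e^{-C_br}}$, which you then relax to the rational form via $e^{-C_bs}\ge 1-C_br$ and $x\le-\ln(1-x)$. What each buys: the paper's argument is shorter (a few lines of algebra, no integration), while yours yields the sharper exponential-type bound along the way — essentially the same comparison-ODE bound on $f_+$ that the paper invokes separately in Case 3 of Appendix C — so your route unifies the two estimates. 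The one extra ingredient you lean on is strict positivity of $f$ along the segment (to define $\ln g$ and divide by $\chi$); you cite it as a consequence of assumptions (b)–(c), which matches the paper's own remark in Section IV and can in any case be established by the same Gr{\"o}nwall argument, so this is not a gap, merely a dependency worth making explicit. Your explicit handling of the boundary case $C_br(1-\ln f(\mathbf{x}))=1$, where the asserted bound is vacuous, is a small point the paper leaves implicit.
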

\begin{proof}
	For all $u\in B(\mathbf{x},r)$, we have
	\begin{eqnarray}
	f(\mathbf{u})&=&f(\mathbf{x})+\nabla^T f(\xi)(\mathbf{u}-\mathbf{x})\nonumber\\
	&\leq & f(\mathbf{x})+\norm{\nabla f(\xi)}r\nonumber\\
	&\leq & f(\mathbf{x})+r\underset{\mathbf{v}\in B(\mathbf{x},r)}{\sup}\left[C_bf(\mathbf{v})\left(1+\ln \frac{1}{f(\mathbf{v})}\right)\right]\nonumber\\
	&\leq &f(\mathbf{x})+C_brf_+(\mathbf{x},r)\left(1+\ln \frac{1}{f_+(\mathbf{x},r)}\right)\nonumber\\
	&\leq & f(\mathbf{x})+C_b r f_+(\mathbf{x},r)\left(1+\ln \frac{1}{f(\mathbf{x})}\right).
	\end{eqnarray}
	Taking supremum over $u\in B(\mathbf{x},r)$, we have
	\begin{eqnarray}
	f_+(\mathbf{x},r)\leq f(\mathbf{x})+C_brf_+(\mathbf{x},r)\left(1+\ln \frac{1}{f(\mathbf{x})}\right).
	\end{eqnarray}
\end{proof}

\begin{lem}\label{lem:pdf2}
	If $r<1/(2C_b(1-\ln f(\mathbf{x})))$, then
	\begin{eqnarray}
	|P(B(\mathbf{x}))-f(\mathbf{x})V(B(\mathbf{x},r))|\leq C_cr^2 V(B(\mathbf{x},r))f(\mathbf{x})\left(1+\ln \frac{1}{f(\mathbf{x})}\right).
	\label{eq:pdfapprox}
	\end{eqnarray}
\end{lem}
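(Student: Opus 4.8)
The plan is to start from the identity $P(B(\mathbf{x},r))=\int_{B(\mathbf{x},r)}f(\mathbf{y})\,d\mathbf{y}$ and Taylor-expand $f$ to second order about $\mathbf{x}$, exploiting the rotational symmetry of the ball to kill the first-order term. For each $\mathbf{y}\in B(\mathbf{x},r)$ write $f(\mathbf{y})=f(\mathbf{x})+\nabla f(\mathbf{x})^T(\mathbf{y}-\mathbf{x})+\frac{1}{2}(\mathbf{y}-\mathbf{x})^T\nabla^2 f(\xi_{\mathbf{y}})(\mathbf{y}-\mathbf{x})$, with $\xi_{\mathbf{y}}$ a point on the segment from $\mathbf{x}$ to $\mathbf{y}$, so that $\xi_{\mathbf{y}}\in B(\mathbf{x},r)$. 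Integrating over $B(\mathbf{x},r)$, the zeroth-order term contributes $f(\mathbf{x})V(B(\mathbf{x},r))$, the first-order term vanishes since $\int_{B(\mathbf{x},r)}(\mathbf{y}-\mathbf{x})\,d\mathbf{y}=\mathbf{0}$, and hence $P(B(\mathbf{x},r))-f(\mathbf{x})V(B(\mathbf{x},r))=\frac{1}{2}\int_{B(\mathbf{x},r)}(\mathbf{y}-\mathbf{x})^T\nabla^2 f(\xi_{\mathbf{y}})(\mathbf{y}-\mathbf{x})\,d\mathbf{y}$, which is the quantity to estimate.

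Next I would bound the integrand pointwise. Since $\norm{\mathbf{y}-\mathbf{x}}\le r$, we get $|(\mathbf{y}-\mathbf{x})^T\nabla^2 f(\xi_{\mathbf{y}})(\mathbf{y}-\mathbf{x})|\le r^2\norm{\nabla^2 f(\xi_{\mathbf{y}})}_{op}$, and assumption (c) of Theorem~\ref{thm:unbounded} turns the right-hand side into $C_c r^2 f(\xi_{\mathbf{y}})\left(1+\ln\frac{1}{f(\xi_{\mathbf{y}})}\right)$. Because $\xi_{\mathbf{y}}\in B(\mathbf{x},r)$ we have $f(\xi_{\mathbf{y}})\le f_+(\mathbf{x},r)$, and since $f(\mathbf{x})\le 1$ everywhere (assumption (a)) also $f_+(\mathbf{x},r)\le 1$. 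The map $t\mapsto t\left(1+\ln\frac1t\right)=t(1-\ln t)$ is nondecreasing on $(0,1]$ (its derivative is $-\ln t\ge 0$), so the integrand is bounded by $C_c r^2 f_+(\mathbf{x},r)\left(1+\ln\frac{1}{f_+(\mathbf{x},r)}\right)$, uniformly in $\mathbf{y}$.

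Finally I would invoke Lemma~\ref{lem:pdf1}. The hypothesis $r<1/(2C_b(1-\ln f(\mathbf{x})))$ forces $C_b r(1-\ln f(\mathbf{x}))<\frac12$, so Lemma~\ref{lem:pdf1} applies (its requirement $r\le 1/(C_b(1-\ln f(\mathbf{x})))$ is met) and yields $f_+(\mathbf{x},r)\le f(\mathbf{x})/(1-C_b r(1-\ln f(\mathbf{x})))\le 2f(\mathbf{x})$. Combining $f_+(\mathbf{x},r)\le\min\{2f(\mathbf{x}),1\}$ with the monotonicity of $t\mapsto t(1-\ln t)$ and the elementary estimate $2f(\mathbf{x})(1-\ln 2+\ln\frac{1}{f(\mathbf{x})})\le 2f(\mathbf{x})(1+\ln\frac{1}{f(\mathbf{x})})$ (and, in the case $2f(\mathbf{x})>1$, the bound $1\le 2f(\mathbf{x})(1+\ln\frac{1}{f(\mathbf{x})})$), one obtains $f_+(\mathbf{x},r)(1+\ln\frac{1}{f_+(\mathbf{x},r)})\le 2f(\mathbf{x})(1+\ln\frac{1}{f(\mathbf{x})})$; the factor $2$ here cancels the $\frac12$ from the Taylor remainder. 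Integrating the resulting pointwise bound $C_c r^2 f(\mathbf{x})(1+\ln\frac{1}{f(\mathbf{x})})$ over $B(\mathbf{x},r)$ then gives exactly $|P(B(\mathbf{x},r))-f(\mathbf{x})V(B(\mathbf{x},r))|\le C_c r^2 V(B(\mathbf{x},r))f(\mathbf{x})(1+\ln\frac{1}{f(\mathbf{x})})$, as claimed. I do not expect a genuine obstacle; the only points needing care are justifying the Lagrange form of the second-order Taylor expansion (assumption (c) presupposes $f\in C^2$, and since only a pointwise upper bound on the integrand is needed, measurability of $\xi_{\mathbf{y}}$ in $\mathbf{y}$ is irrelevant) and bookkeeping the constants so that the $\frac12$ and the $2$ cancel to leave precisely $C_c$ rather than $2C_c$.
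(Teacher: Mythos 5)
Your proposal is correct and follows essentially the same route as the paper's proof: write the difference as $\int_{B(\mathbf{x},r)}(f(\mathbf{u})-f(\mathbf{x}))d\mathbf{u}$, Taylor-expand so the first-order term vanishes by symmetry, bound the Hessian term via assumption (c) and $f_+(\mathbf{x},r)$, and finish with Lemma~\ref{lem:pdf1} giving $f_+(\mathbf{x},r)\leq 2f(\mathbf{x})$ so that the factor $2$ cancels the $\tfrac12$ from the remainder. Your explicit use of the monotonicity of $t\mapsto t(1-\ln t)$ on $(0,1]$ (including the case $2f(\mathbf{x})>1$) just spells out the step the paper leaves implicit in its final ``Hence \eqref{eq:pdfapprox} holds.''
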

\begin{proof}
	\begin{eqnarray}
	|P(B(\mathbf{x},r))-f(\mathbf{x})V(B(\mathbf{x},r))|&=&\left|\int_{V(B(\mathbf{x},r))}(f(\mathbf{u})-f(\mathbf{x}))d\mathbf{u}\right|\nonumber\\
	&=&\left|\int \left[\nabla^T f(\mathbf{x})(\mathbf{u}-\mathbf{x})+\frac{1}{2}(\mathbf{u}-\mathbf{x})^T \nabla^2 f(\xi)(\mathbf{u}-\mathbf{x})\right]d\mathbf{u}\right|\nonumber\\
	&\leq & \frac{1}{2}r^2 V(B(\mathbf{x},r))\underset{\mathbf{v}}{\sup}\norm{\nabla^2 f(\mathbf{v})}_{op}\nonumber\\
	&\leq & \frac{1}{2}r^2 V(B(\mathbf{x},r)) C_cf_+(\mathbf{x},r)\left(1+\ln \frac{1}{f_+(\mathbf{x},r)}\right).
	\end{eqnarray}
	According to Lemma \ref{lem:pdf1}, if $r<1/(2C_b(1-\ln f(\mathbf{x})))$, then $f_+(\mathbf{x},r)\leq 2f(\mathbf{x})$. Hence \eqref{eq:pdfapprox} holds.
\end{proof}

With Lemma \ref{lem:pdf1} and Lemma \ref{lem:pdf2}, we now prove the $\ell_1$ bound. Recall that the estimator is
\begin{eqnarray}
\hat{f}(\mathbf{x})=\left\{
\begin{array}{ccc}
\frac{k-1}{NV(B(\mathbf{x},\rho(\mathbf{x})))} &\text{if} & \rho(\mathbf{x})\leq a\nonumber\\
\frac{n(\mathbf{x},a)}{NV(B(\mathbf{x},a))} &\text{if} & \rho(\mathbf{x})>a.
\end{array}
\right.
\end{eqnarray}

Define
\begin{eqnarray}
I_1=\left\{
\begin{array}{ccc}
\frac{k-1}{NV(B(\mathbf{x},\rho(\mathbf{x})))}-\frac{k-1}{NP(B(\mathbf{x},\rho(\mathbf{x})))}f(\mathbf{x}) &\text{if} & \rho(\mathbf{x})\leq a\nonumber\\
\frac{n(\mathbf{x},a)}{NV(B(\mathbf{x},a))}-\frac{n(\mathbf{x},a)}{NP(B(\mathbf{x},a))}f(\mathbf{x})&\text{if} & \rho(\mathbf{x})>a,
\end{array}
\right.
\end{eqnarray}

and
\begin{eqnarray}
I_2=\left\{
\begin{array}{ccc}
\left(\frac{k-1}{NP(B(\mathbf{x},\rho(\mathbf{x})))}-1\right)f(\mathbf{x}) &\text{if} & \rho(\mathbf{x})\leq a\nonumber\\
\left(\frac{n(\mathbf{x},a)}{NP(B(\mathbf{x},a))}-1\right)f(\mathbf{x}) &\text{if} &\rho(\mathbf{x})>a
\end{array}.
\right.
\end{eqnarray}

With these definitions, we bound $\mathbb{E}[|I_1|]$ and $\mathbb{E}[|I_2|]$ for the following cases.

\textbf{Bound of $\mathbb{E}[|I_1|]$.} For the bound of $\mathbb{E}[|I_1|]$, our analysis considers the following three cases depending on the pdf value.

Case 1: $f(\mathbf{x})>4k/(Nv_da^d)$. In this case, $\ln f(\mathbf{x})\lesssim \ln N$. Therefore, if $a$ shrinks faster than $1/\ln N$, i.e. $a\ln N\rightarrow 0$ as $N\rightarrow \infty$, then for sufficiently large $N$,
\begin{eqnarray}
a<\min\left\{\frac{1}{2C_b(1-\ln f(\mathbf{x}))},\sqrt{\frac{1}{2C_c(1-\ln f(\mathbf{x}))}}\right\}.
\end{eqnarray}

Therefore, \eqref{eq:pdfapprox} holds for all $r\leq a$, which implies that if $r\leq a$, then
\begin{eqnarray}
|P(B(\mathbf{x},r))-f(\mathbf{x})V(B(\mathbf{x},r)))|&\leq &C_cr^2V(B(\mathbf{x},r)))f(\mathbf{x})(1-\ln f(\mathbf{x}))\nonumber\\
&\leq & \frac{1}{2}f(\mathbf{x})V(B(\mathbf{x},r)))(2C_c a^2(1-\ln f(\mathbf{x}))\nonumber\\
&\leq & \frac{1}{2}f(\mathbf{x})V(B(\mathbf{x},r)))\times 2C_c \frac{1}{2C_c(1-\ln f(\mathbf{x}))} (1-\ln f(\mathbf{x}))\nonumber\\
&=& \frac{1}{2}f(\mathbf{x})V(B(\mathbf{x},r)).
\end{eqnarray}
Hence for all $r\leq a$,
\begin{eqnarray}
P(B(\mathbf{x},r)))\geq \frac{1}{2}f(\mathbf{x})v_dr^d.
\label{eq:masslb}
\end{eqnarray}
In particular, let $r=a$, then
\begin{eqnarray}
P(B(\mathbf{x},a))\geq \frac{2k}{N},
\end{eqnarray}
and thus
\begin{eqnarray}
\text{P}(\rho(\mathbf{x})>a)&\leq& e^{-NP(B(\mathbf{x},a))}\left(\frac{eNP(B(\mathbf{x},a))}{k}\right)^k\nonumber\\
&\leq & e^{-(1-\ln 2)k}.
\label{eq:exceeda}
\end{eqnarray}
If $\rho(\mathbf{x})\leq a$, then
\begin{eqnarray}
|I_1|&=&\frac{k-1}{NP(B(\mathbf{x},\rho(\mathbf{x})))}\left|\frac{P(B(\mathbf{x},\rho(\mathbf{x})))}{V(B(\mathbf{x},\rho(\mathbf{x})))}-f(\mathbf{x})\right|\nonumber\\
&\leq & \frac{k-1}{NP(B(\mathbf{x},\rho(\mathbf{x})))}C_c\rho^2(\mathbf{x})f(\mathbf{x})(1-\ln f(\mathbf{x}))\nonumber\\
&\leq & \frac{k-1}{NP(B(\mathbf{x},\rho(\mathbf{x})))}C_cf(\mathbf{x})(1-\ln f(\mathbf{x}))\left(\frac{2P(B(\mathbf{x},\rho(\mathbf{x})))}{v_df(\mathbf{x})}\right)^\frac{2}{d}\nonumber\\
&\leq & 2^\frac{2}{d} C_c\frac{k-1}{N}(1-\ln f(\mathbf{x}))f^{1-\frac{2}{d}}(\mathbf{x})\text{P}^{\frac{2}{d}-1}(B(\mathbf{x},\rho(\mathbf{x}))).
\end{eqnarray}

If $\rho(\mathbf{x})>a$, then
\begin{eqnarray}
|I_1|&=&\left|\frac{n(\mathbf{x},a)}{V(B(\mathbf{x},a))}-\frac{n(\mathbf{x},a)}{P(B(\mathbf{x},a))}f(\mathbf{x})\right|\nonumber\\
&\leq &\frac{k-1}{V(B(\mathbf{x},a))}+\frac{k-1}{\frac{1}{2}f(\mathbf{x})v_da^d}f(\mathbf{x})\nonumber\\
&=&\frac{3(k-1)}{Nv_da^d}.
\end{eqnarray}

Hence
\begin{eqnarray}
\mathbb{E}[|I_1|]&\leq& 2^\frac{2}{d}C_c\frac{k-1}{N}f^{1-\frac{2}{d}}(\mathbf{x})(1-\ln f(\mathbf{x}))\mathbb{E}\left[P^{\frac{2}{d}-1}(B(\mathbf{x},\rho(\mathbf{x})))\right]+3\text{P}(\rho(\mathbf{x})>a)\frac{k-1}{Nv_da^d}\nonumber\\
&\leq & \left(\frac{k}{N}\right)^\frac{2}{d}f^{1-\frac{2}{d}}(\mathbf{x})(1-\ln f(\mathbf{x}))+f(\mathbf{x})\text{P}(\rho(\mathbf{x})>a)\nonumber\\
&\lesssim & \left(\frac{k}{N}\right)^\frac{2}{d}f^{1-\frac{2}{d}}(\mathbf{x})(1-\ln f(\mathbf{x}))+e^{-(1-\ln 2)k }f(\mathbf{x}),
\label{eq:I1-1}
\end{eqnarray}
in which the last step comes from \eqref{eq:exceeda}. Now we integrate \eqref{eq:I1-1} over all $\mathbf{x}$ such that $f(\mathbf{x})>4k/(Nv_da^d)$. We use the following lemma.

\begin{lem}
	(\cite{zhao2019minimax}, Lemma 6) If $\text{P}(f(\mathbf{X})<t)\leq C_d t^\beta$ for any $t>0$, then for any $p>0$ and any sequence $s_N\rightarrow 0$, 
	\begin{eqnarray}
	\int f^{1-p}(\mathbf{x})\mathbf{1}(f(\mathbf{x})>s_N)d\mathbf{x}\lesssim \left\{
	\begin{array}{ccc}
	1 &\text{if} & \beta>p\\
	\ln \frac{1}{s_N} &\text{if} & \beta=p\\
	s_N^{\beta-p} &\text{if} & \beta<p.
	\end{array}
	\right.
	\end{eqnarray}
\end{lem}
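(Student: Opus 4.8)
The plan is to recast the integral as an expectation under the density $f$ and then reduce it to a one-dimensional problem via the layer-cake formula, at which point the tail assumption $\text{P}(f(\mathbf{X})<t)\leq C_d t^\beta$ does all the work. Write $f^{1-p}(\mathbf{x})=f^{-p}(\mathbf{x})\cdot f(\mathbf{x})$, so that, with $\mathbf{X}$ a random vector having density $f$,
\[
\int f^{1-p}(\mathbf{x})\mathbf{1}(f(\mathbf{x})>s_N)\,d\mathbf{x}=\mathbb{E}\left[f^{-p}(\mathbf{X})\mathbf{1}(f(\mathbf{X})>s_N)\right]=:\mathbb{E}[Z].
\]
The key structural remark is that $Z$ is nonnegative and bounded: on the event $\{f(\mathbf{X})>s_N\}$ we have $f^{-p}(\mathbf{X})<s_N^{-p}$, hence $0\le Z\le s_N^{-p}$ almost surely.

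Next I would apply the layer-cake identity $\mathbb{E}[Z]=\int_0^{s_N^{-p}}\text{P}(Z>u)\,du$. For $u>0$, the event $\{Z>u\}$ forces both $f(\mathbf{X})>s_N$ and $f^{-p}(\mathbf{X})>u$, i.e. $f(\mathbf{X})<u^{-1/p}$; therefore $\text{P}(Z>u)\le \text{P}(f(\mathbf{X})<u^{-1/p})\le C_d u^{-\beta/p}$, and combining with the trivial bound $\text{P}(Z>u)\le 1$ gives $\text{P}(Z>u)\le\min\{1,C_d u^{-\beta/p}\}$. Let $u_0=C_d^{p/\beta}$ be the crossover point where $C_d u^{-\beta/p}=1$, and split $\int_0^{s_N^{-p}}\min\{1,C_d u^{-\beta/p}\}\,du$ at $u_0$. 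The piece on $[0,u_0]$ contributes the constant $u_0$. The piece on $[u_0,s_N^{-p}]$ equals $C_d\int_{u_0}^{s_N^{-p}}u^{-\beta/p}\,du$, and this is exactly where the three regimes separate: if $\beta>p$ the exponent $-\beta/p<-1$ makes the integral convergent even as $s_N^{-p}\to\infty$, so it is $O(1)$; if $\beta=p$ it equals $C_d\left(p\ln\frac{1}{s_N}-\ln u_0\right)=O\!\left(\ln\frac{1}{s_N}\right)$; and if $\beta<p$ it is of order $(s_N^{-p})^{1-\beta/p}=s_N^{\beta-p}$. Since $s_N\to 0$, in the latter two cases the $s_N$-dependent term dominates the constant $u_0$, which yields the three claimed bounds.

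The only point needing a word of care is the degenerate regime in which $s_N$ is not yet small enough for $s_N^{-p}>u_0$ to hold, equivalently $s_N$ bounded below by a constant depending only on $C_d$ and $\beta$. There one simply bounds $\mathbb{E}[Z]\le\int_0^{s_N^{-p}}1\,du=s_N^{-p}$, which is itself a constant and hence bounded by each of the three right-hand sides; so it suffices to prove the estimate for all sufficiently large $N$, which is all that the notation $\lesssim$ demands. I do not expect a genuine obstacle here: the substance of the argument is the reduction to $\int\min\{1,C_d u^{-\beta/p}\}\,du$, and the remaining effort is routine bookkeeping of the three cases together with a check that the implied constants depend only on $C_d,\beta,p$ and not on $N$.
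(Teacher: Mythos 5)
Your proof is correct. Note that the paper itself does not prove this statement: it is imported verbatim as Lemma 6 of the cited reference \cite{zhao2019minimax}, so there is no in-paper argument to compare against. Your route --- rewriting the integral as $\mathbb{E}\left[f^{-p}(\mathbf{X})\mathbf{1}(f(\mathbf{X})>s_N)\right]$, noting the almost-sure bound $Z\le s_N^{-p}$, applying the layer-cake formula, and bounding $\text{P}(Z>u)\le\min\{1,C_d u^{-\beta/p}\}$ via the tail assumption --- is exactly the standard way such results are established, and all three regimes ($\beta>p$, $\beta=p$, $\beta<p$) come out correctly from splitting the $u$-integral at $u_0=C_d^{p/\beta}$. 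Your handling of the degenerate range where $s_N^{-p}\le u_0$ is also fine: since $s_N\rightarrow 0$, this concerns only finitely many $N$, for which the integral is bounded by the constant $C_d^{p/\beta}$, so the asymptotic notation is unaffected. No gaps.
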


Therefore
\begin{eqnarray}
\int |I_1|\mathbf{1}\left(f(\mathbf{x})>\frac{4k}{Nv_da^d}\right) d\mathbf{x}&\lesssim & \left(\frac{k}{N}\right)^\frac{2}{d}\ln N\int f^{1-\frac{2}{d}}(\mathbf{x})\mathbf{1}\left(f(\mathbf{x})>\frac{4k}{Nv_da^d}\right) d\mathbf{x}\nonumber\\
&\lesssim &\left\{
\begin{array}{ccc}
\left(\frac{k}{N}\right)^\frac{2}{d}\ln N &\text{if} & \beta>\frac{2}{d}\nonumber\\
\left(\frac{k}{N}\right)^\beta\ln N a^{2-\beta d} &\text{if} & \beta<\frac{2}{d}\nonumber\\
\left(\frac{k}{N}\right)^\frac{2}{d}\ln^2 N &\text{if} & \beta=\frac{2}{d}.
\end{array}
\right.
\end{eqnarray}

Case 2: $1/N\leq f(\mathbf{x})\leq 4k/(Nv_da^d)$. Similar to Case 1, \eqref{eq:pdfapprox} still holds for all $r\leq a$ for sufficiently large $N$, as long as $a\ln N\rightarrow 0$. Hence \eqref{eq:masslb} still holds. If $\rho(\mathbf{x})\leq a$, then
\begin{eqnarray}
|I_1|&=&\left|\frac{k-1}{NV(B(\mathbf{x},\rho(\mathbf{x})))}-\frac{k-1}{NP(B(\mathbf{x},\rho(\mathbf{x})))}f(\mathbf{x})\right|\nonumber\\
&\leq & \frac{k-1}{NP(B(\mathbf{x},\rho(\mathbf{x})))}C_c\rho^2(\mathbf{x})f(\mathbf{x})(1-\ln f(\mathbf{x}))\nonumber\\
&\leq & \frac{k-1}{NP(B(\mathbf{x},\rho(\mathbf{x})))}C_c\rho^2(\mathbf{x})f(\mathbf{x})(1-\ln f(\mathbf{x})).
\end{eqnarray}

If $\rho(\mathbf{x})>a$, then
\begin{eqnarray}
|I_1|&=&\left|\frac{n(\mathbf{x},a)}{NV(B(\mathbf{x},a))}-\frac{n(\mathbf{x},a)}{NP(B(\mathbf{x},a))}f(\mathbf{x})\right|\nonumber\\
&\leq &\frac{n(\mathbf{x},a)}{NP(B(\mathbf{x},a))}C_ca^2 f(\mathbf{x})(1-\ln f(\mathbf{x}))\nonumber\\
&\leq& \frac{2n(\mathbf{x},a)}{Nv_d}C_ca^{2-d}(1-\ln f(\mathbf{x})).
\end{eqnarray}
Combining the bound for $\rho(\mathbf{x})\leq a$ and $\rho(\mathbf{x})>a$, we have
\begin{eqnarray}
\mathbb{E}[|I_1|]&=&\frac{k-1}{N}C_ca^2f(\mathbf{x})(1-\ln f(\mathbf{x}))\mathbb{E}\left[\frac{1}{P(B(\mathbf{x},a))}\mathbf{1}(\rho(\mathbf{x})\leq a)\right]\nonumber\\
&&\hspace{1cm}+\frac{2}{Nv_d}C_ca^{2-d}(1-\ln f(\mathbf{x}))\mathbb{E}[n(\mathbf{x},a)\mathbf{1}(\rho(\mathbf{x})>a)]\nonumber\\
&\leq & C_ca^2 f(\mathbf{x})(1-\ln f(\mathbf{x}))+\frac{2C_c}{v_d}a^{2-d}(1-\ln f(\mathbf{x}))P(B(\mathbf{x},a)).
\end{eqnarray}

Using similar steps used to show \eqref{eq:masslb}, we can also show that
\begin{eqnarray}
P(B(\mathbf{x},a))\leq \frac{3}{2} f(\mathbf{x})v_da^d.
\end{eqnarray}
Therefore
\begin{eqnarray}
\mathbb{E}[|I_1|]\lesssim f(\mathbf{x})(1-\ln f(\mathbf{x}))a^2,
\end{eqnarray}
and
\begin{eqnarray}
\int |I_1|\mathbf{1}\left(\frac{1}{N}\leq f(\mathbf{x})\leq\frac{4k}{Nv_da^d}\right) d\mathbf{x}&\lesssim & a^2\ln N\int f(\mathbf{x})\mathbf{1}\left(\frac{1}{N}\leq f(\mathbf{x})\leq\frac{4k}{Nv_da^d}\right) d\mathbf{x}\nonumber\\
&\lesssim & \left(\frac{k}{N}\right)^\beta a^{2-\beta d}\ln N.
\end{eqnarray}

Case 3: $f(\mathbf{x})<1/N$. Then
\begin{eqnarray}
\mathbb{E}[|I_1|]&=&\mathbb{E}\left[\left|\frac{k-1}{NV(B(\mathbf{x},\rho(\mathbf{x})))}-\frac{k-1}{NP(B(\mathbf{x},\rho(\mathbf{x})))}f(\mathbf{x})\right|\mathbf{1}(\rho(\mathbf{x})\leq a)\right]\nonumber\\
&& \hspace{1cm} + \mathbb{E}\left[\left|\frac{n(\mathbf{x},a)}{NV(B(\mathbf{x},a))}-\frac{n(\mathbf{x},a)}{NP(B(\mathbf{x},a))}f(\mathbf{x})\right|\mathbf{1}(\rho(\mathbf{x})>a)\right]\nonumber\\
&\leq & \frac{k-1}{Nv_d}\mathbb{E}\left[\frac{1}{\rho^d(\mathbf{x})}\mathbf{1}(\rho(\mathbf{x})\leq a)\right]+f(\mathbf{x})\mathbb{E}\left[\frac{k-1}{NP(B(\mathbf{x},\rho(\mathbf{x})))}\mathbf{1}(\rho(\mathbf{x})\leq a)\right]\nonumber\\
&&\hspace{1cm} + \frac{\mathbb{E}[n(\mathbf{x},a)]}{NV(B(\mathbf{x},a))}+\frac{\mathbb{E}[n(\mathbf{x},a)]}{NP(B(\mathbf{x},a))}f(\mathbf{x})\nonumber\\
&\leq & \frac{k-1}{Nv_d}\mathbb{E}\left[\frac{1}{\rho^d(\mathbf{x})}\mathbf{1}(\rho(\mathbf{x})\leq a)\right]+2f(\mathbf{x})+\frac{P(B(\mathbf{x},a))}{V(B(\mathbf{x},a))}.
\label{eq:I1-3-1}
\end{eqnarray}
In the last step, we use $\mathbb{E}[1/P(B(\mathbf{x},\rho(\mathbf{x})))]=N/(k-1)$ and $\mathbb{E}[n(\mathbf{x},a)]=NP(B(\mathbf{x},a))$.

Now we bound $P(B(\mathbf{x},a))$. Recall that $f_+(\mathbf{x})=\underset{\mathbf{x}'}{\sup} f(\mathbf{x}')$. According to assumption (b),
\begin{eqnarray}
\frac{\partial f_+(\mathbf{x},r)}{\partial r}&\leq& \underset{\mathbf{x}'\in B(\mathbf{x},r)}{\sup}\norm{\nabla f(\mathbf{x}')}\nonumber\\
&\leq & \underset{\mathbf{x}'\in B(\mathbf{x},r)}{\sup}C_bf(\mathbf{x})\left(1+\ln \frac{1}{f(\mathbf{x})}\right)\nonumber\\
&\leq & C_b f_+(\mathbf{x},r)\left(1+\ln \frac{1}{f_+(\mathbf{x},r)}\right).
\end{eqnarray}

Define a function $g(r)$, such that
\begin{eqnarray}
g(0)=f(\mathbf{x}),g'(r)=C_bg(r)(1-\ln g(r)),
\end{eqnarray}
then $f_+(\mathbf{x},r)\leq g(r)$. It can be solved that
\begin{eqnarray}
g(r)=e^{1-C_b r}f^{e^{-C_b r}}(\mathbf{x}).
\end{eqnarray}
Therefore 
\begin{eqnarray}
f_+(\mathbf{x},a)\leq ef^{e^{-C_ba}}(\mathbf{x}),
\end{eqnarray}
and
\begin{eqnarray}
P(B(\mathbf{x},a))\leq ef^{e^{-C_b a}}v_d a^d.
\end{eqnarray}
Thus the first term in the right hand side of \eqref{eq:I1-3-1} can be bounded by
\begin{eqnarray}
\mathbb{E}\left[\frac{1}{\rho^d(\mathbf{x})}\mathbf{1}(\rho(\mathbf{x})\leq a)\right]&\leq & \int_{\frac{1}{a^d}}^\infty \text{P}(\rho(\mathbf{x})<t^{-\frac{1}{d}})dt\nonumber\\
&\leq & \int_{\frac{1}{a^d}}^\infty \left(\frac{eNP(B(\mathbf{x},t^{-\frac{1}{d}}))}{k}\right)^k dt\nonumber\\
&\leq & \int_{\frac{1}{a^d}}^\infty \left(\frac{eN}{k}\right)^k e^k f^{ke^{-C_bt^{-\frac{1}{d}}}}(\mathbf{x})\left(\frac{v_d}{t}\right)^k dt\nonumber\\
&\leq & \int_{\frac{1}{a^d}}^\infty \left(\frac{eN}{k}\right)^k e^k f^{ke^{-C_ba}}(\mathbf{x})\left(\frac{v_d}{t}\right)^k dt\nonumber\\
&\leq &\int_{\frac{1}{a^d}}^\infty \left(\frac{eN}{k}\right)^k e^k f^{k(1-C_b a)}(\mathbf{x})\left(\frac{v_d}{t}\right)^k dt\nonumber\\
&\leq & \left(\frac{eN}{k}\right)^k e^k f^{k(1-C_ba)}(\mathbf{x})v_d^k \frac{1}{k-1}(a^d)^{k-1}\nonumber\\
&=&\left(\frac{e^2 N}{k} f^{1-C_b a\frac{k}{k-1}}(\mathbf{x}) v_da^d \right)^{k-1} f(\mathbf{x})\frac{e^2 N}{k} v_d.
\end{eqnarray}

For arbitrarily small $\delta$, since $f(\mathbf{x})<1/N$ and $a\rightarrow 0$ as $N\rightarrow \infty$, for sufficiently large $N$, we have
\begin{eqnarray}
f^{1-C_ba\frac{k}{k-1}}(\mathbf{x})\leq \left(\frac{1}{N}\right)^{1-\delta}.
\end{eqnarray}
Therefore
\begin{eqnarray}
\left(\frac{e^2 N}{k} f^{1-C_b a\frac{k}{k-1}}(\mathbf{x}) v_da^d \right)^{k-1}\leq \left(\frac{N^\delta a^d}{k}\right)^{k-1}.
\end{eqnarray}
Pick $\delta$ such that $N^\delta a^d<k/2$ for sufficiently large $N$, then
\begin{eqnarray}
\mathbb{E}\left[\frac{1}{\rho^d(\mathbf{x})}\mathbf{1}(\rho(\mathbf{x})\leq a)\right]\leq 2^{-k} f(\mathbf{x})\frac{e^2 N}{v_d}.
\end{eqnarray}
Hence
\begin{eqnarray}
\mathbb{E}[|I_1|]\leq \frac{k-1}{Nv_d}2^{-k} f(\mathbf{x})\frac{e^2 N}{k}v_d+2f(\mathbf{x})+ef^{e^{-C_ba}}(\mathbf{x}),
\end{eqnarray}
and for arbitrarily small $\delta>0$,
\begin{eqnarray}
\int |I_1|\mathbf{1}\left( f(\mathbf{x})<\frac{1}{N}\right)d\mathbf{x}\lesssim N^{-\beta+\delta}.
\end{eqnarray}

\textbf{Bound of $\mathbb{E}[|I_2|]$.}
\begin{eqnarray}
&&\mathbb{E}[|I_2|]= \mathbb{E}\left[\left|\frac{k-1}{NP(B(\mathbf{x},\rho(\mathbf{x})))}-1\right|f(\mathbf{x})\mathbf{1}(\rho(\mathbf{x})\leq a) \right]\nonumber\\&&+\mathbb{E}\left[\left|\frac{n(\mathbf{x},a)}{NP(B(\mathbf{x},a))}-1\right|f(\mathbf{x})\mathbf{1}(\rho(\mathbf{x})>a)\right].
\end{eqnarray}
For the first term, \eqref{eq:I2} still holds, which yields
\begin{eqnarray}
\mathbb{E}\left[\left|\frac{k-1}{NP(B(\mathbf{x},\rho(\mathbf{x})))}-1\right|f(\mathbf{x})\mathbf{1}(\rho(\mathbf{x})\leq a) \right]\lesssim f(\mathbf{x})k^{-\frac{1}{2}}.
\end{eqnarray}
Therefore it remains to bound the second term. We consider different cases depending on the value of $f(\mathbf{x})$.

Case 1: $f(\mathbf{x})>4k/(Nv_da^d)$. Then from \eqref{eq:exceeda}, we have
\begin{eqnarray}
\mathbb{E}\left[\left|\frac{n(\mathbf{x},a)}{NP(B(\mathbf{x},a))}-1\right|f(\mathbf{x})\mathbf{1}(\rho(\mathbf{x})>a)\right]&\leq& \left( \frac{k-1}{Nv_da^d}+1\right)f(\mathbf{x})\text{P}(\rho(\mathbf{x})>a)\nonumber\\
&\leq & 2f(\mathbf{x})e^{-(1-\ln 2)k}.
\end{eqnarray}

Case 2: $1/(Na^d)\leq f(\mathbf{x})\leq 4k/(Nv_da^d)$. Note that now the lower threshold of case 2 is different than that used in the proof of the bound of $\mathbb{E}[|I_1|]$. Then
\begin{eqnarray}
\mathbb{E}\left[\left|\frac{n(\mathbf{x},a)}{NP(B(\mathbf{x},a))}-1\right|f(\mathbf{x})\mathbf{1}(\rho(\mathbf{x})>a)\right]&\leq & \sqrt{\mathbb{E}\left[\left(\frac{n(\mathbf{x},a)}{NP(B(\mathbf{x},a))}-1\right)^2\right]} f(\mathbf{x})\nonumber\\
&=&\frac{1}{NP(B(\mathbf{x},a))}\sqrt{\Var[n(\mathbf{x},a)]}f(\mathbf{x})\nonumber\\
&\leq & \frac{1}{\sqrt{NP(B(\mathbf{x},a))}}f(\mathbf{x})\nonumber\\
&\leq & \frac{f(\mathbf{x})}{\sqrt{\frac{1}{2}Nf(\mathbf{x})v_da^d}} f(\mathbf{x})\nonumber\\
&\lesssim & \frac{f^{\frac{1}{2}}(\mathbf{x})}{N^\frac{1}{2} a^\frac{d}{2}}.
\end{eqnarray}

Case 3: $f(\mathbf{x})<1/(Na^d)$. Then
\begin{eqnarray}
\mathbb{E}\left[\left|\frac{n(\mathbf{x},a)}{NP(B(\mathbf{x},a))}-1\right|f(\mathbf{x})\mathbf{1}(\rho(\mathbf{x})>a)\right]&\leq& \mathbb{E}\left[\frac{n(\mathbf{x},a)}{NP(B(\mathbf{x},a))}+1\right]f(\mathbf{x})\nonumber\\
&=& 2f(\mathbf{x}).
\end{eqnarray}

Therefore
\begin{eqnarray}
\int \mathbb{E}[|I_2|]d\mathbf{x}&\lesssim& k^{-\frac{1}{2}}\int f(\mathbf{x})d\mathbf{x}+N^{-\frac{1}{2}}a^{-\frac{d}{2}}\int f^{\frac{1}{2}}(\mathbf{x})\mathbf{1}\left(\frac{1}{Na^d}\leq f(\mathbf{x})\leq \frac{4k}{Nv_da^d}\right)d\mathbf{x}\nonumber\\
&&\hspace{1cm} + \int f(\mathbf{x})\mathbf{1}\left(f(\mathbf{x})<\frac{1}{Na^d}\right)d\mathbf{x}\nonumber\\
&\lesssim & \left(\frac{1}{Na^d}\right)^\beta \ln N +N^{-\frac{1}{2}}a^{-\frac{d}{2}}+k^{-\frac{1}{2}}.
\end{eqnarray}

The $\ell_1$ error of the density estimate with the truncated kNN estimator is bounded by
\begin{eqnarray}
\mathbb{E}\left[\norm{\hat{f}-f}_1\right]&\leq& \int \mathbb{E}[|I_1|]d\mathbf{x}+\int \mathbb{E}[|I_2|]d\mathbf{x}\nonumber\\
&\lesssim & \left(\frac{k}{N}\right)^\frac{2}{d}\ln^2 N+\left(\frac{k}{N}\right)^\beta a^{2-\beta d}\ln N+\left(\frac{1}{Na^d}\right)^\beta \ln N +N^{-\frac{1}{2}}a^{-\frac{d}{2}}+k^{-\frac{1}{2}}.\nonumber\\
\end{eqnarray}

If $\beta\leq 1/2$, let
\begin{eqnarray}
a\sim N^{-\frac{\beta^2}{d\beta^2+1}}, k\sim N^{\frac{2\beta}{d\beta^2+1}},
\end{eqnarray}
we have
\begin{eqnarray}
\mathbb{E}\left[\norm{\hat{f}-f}_1\right]\lesssim N^{-\frac{\beta}{d\beta^2 +1}}\ln^2 N.
\end{eqnarray}
If $\beta>1/2$, let
\begin{eqnarray}
a\sim N^{-\frac{1}{d+4}}, k\sim N^{-\frac{4}{d+4}},
\end{eqnarray}
we have
\begin{eqnarray}
\mathbb{E}[\norm{\hat{f}-f}_1]\lesssim N^{-\frac{2}{d+4}}\ln ^2 N.
\end{eqnarray}
The proof is complete.

\section{Proof of Theorem~\ref{thm:unbounded-lb}}\label{sec:minimax}

Define $f_0(\mathbf{x})$ such that
\begin{eqnarray}
f_0(\mathbf{x})=\left\{
\begin{array}{ccc}
\frac{1}{N} &\text{if} & \norm{\mathbf{x}}<r\nonumber\\
\frac{1}{2v_dR^d} &\text{if} & \norm{\mathbf{x}-\mathbf{c}}<R,
\end{array}
\right.
\end{eqnarray}
in which $R$ is fixed and $r=N^{\frac{1-\beta}{d}}$. $\norm{\mathbf{c}}$ is sufficiently large, so that $B(\mathbf{0},r)$ and $B(\mathbf{c},R)$ do not intersect. For other $\mathbf{x}$, i.e. for $\mathbf{x}\notin B(\mathbf{0},r)\cup B(\mathbf{c},R)$, $f_0$ is designed such that $f_0$ satisfies assumptions (a)-(d) with constant $C_b$, $C_c$ and $C_d/2$. 

Let $g(\mathbf{x})$ be a function supported in $B(\mathbf{0},1)$, with $\norm{g}_\infty\leq g_m$, in which
\begin{eqnarray}
g_m=\frac{\ln 2}{32v_d\ln 3},
\end{eqnarray}
and
\begin{eqnarray}
\norm{\nabla^2 g(\mathbf{x})}_{op}\leq \frac{1}{2}C_b.
\end{eqnarray} 

The above constructions are possible for sufficiently large $C_b$, $C_c$ and $C_d$. Find $\mathbf{a}_i$, $i=-n,-(n-1),\ldots, -1,1,\ldots,n$, such that $B(\mathbf{a}_i,1)$ are mutually disjoint, and $B(\mathbf{a}_i,1)\subset B(\mathbf{x},r)$ for all $i$. Define
\begin{eqnarray}
f_\mathbf{v}(\mathbf{x})=f_0(\mathbf{x})+\frac{v_i}{N}g(\mathbf{x}-\mathbf{a}_i)-\frac{v_i}{N}g(\mathbf{x}-\mathbf{a}_{-i}),
\end{eqnarray}
in which $\mathbf{v}\in \{-1,1\}^d$. 

According to Varshamov-Gilbert Lemma \cite{gilbert1952comparison}, there exists $N_G$ elements $\mathbf{v}^{(j)}$, $j=1,\ldots, N_G$, $N_G\geq 2^{n/8}$, such that $H(\mathbf{v}^{(j)},\mathbf{v}^{(k)})\geq n/8$ for all $0\leq j<k<N_G$, in which $H$ is the Hamming distance. Denote
\begin{eqnarray}
\mathcal{V}=\{\mathbf{v}^{(j)},j=1,\ldots,N_G \}.
\end{eqnarray}
Then the KL divergence between $f_{\mathbf{v}^{(j)}}$ and $f_{\mathbf{v}^{(k)}}$ is bounded by
\begin{eqnarray}
D(f_{\mathbf{v}^{(j)}}||f_{\mathbf{v}^{(k)}})&\leq& H(\mathbf{v}^{(i)},\mathbf{v}^{(j)})\int_{B(\mathbf{a}_i,1)\cup B(\mathbf{a}_{-i},1)}\left(f_0(\mathbf{x})+\frac{1}{N}g(\mathbf{x}-\mathbf{a}_i)-\frac{1}{N}g(\mathbf{x}-\mathbf{a}_{-i})\right) \nonumber\\
&&\hspace{1cm} \ln \frac{f_0(\mathbf{x})+\frac{1}{N}g(\mathbf{x}-\mathbf{a}_i)-\frac{1}{N} g(\mathbf{x}-\mathbf{a}_{-i})}{f_0(\mathbf{x})-\frac{1}{N}g(\mathbf{x}-\mathbf{a}_i)+\frac{1}{N}g(\mathbf{x}-\mathbf{a}_{-i})}d\mathbf{x}\nonumber\\
&\leq & H(\mathbf{v}^{(i)},\mathbf{v}^{(j)})\left[\int_{B(\mathbf{a}_i,1)\cup B(\mathbf{a}_{-1},1)}\frac{1}{N}(g(\mathbf{x}-\mathbf{a}_{i})-g(\mathbf{x}-\mathbf{a}_{-i}))\ln 3d\mathbf{x}\right]\nonumber\\
&\leq & 2\ln 3\frac{v_dg_m}{N}H(\mathbf{v}^{(i)},\mathbf{v}^{(j)}).
\end{eqnarray}
Since we have $N$ samples, denote $P_{\mathbf{v}^{(j)}}$ as the joint distribution of these $N$ samples, then
\begin{eqnarray}
D(P_{\mathbf{v}^{(j)}}||P_{\mathbf{v}^{(k)}})\leq 2\ln 3 v_dg_mH(\mathbf{v}^{(j)},\mathbf{v}^{(k)})\leq 2\ln 3 nv_d g_m=\frac{1}{16}n\ln 2.
\end{eqnarray}
Define
\begin{eqnarray}
\hat{\mathbf{v}}=\underset{\mathbf{v}}{\arg\min}\norm{\hat{f}-f_\mathbf{v}}_1.
\end{eqnarray}
Let $\mathbf{V}$ be a random variable that is uniformly distributed in $\mathcal{V}$, and the corresponding estimate is $\hat{\mathbf{V}}$, then from Fano's inequality,
\begin{eqnarray}
\underset{v}{\sup}\text{P}(\hat{\mathbf{V}}\neq \mathbf{V})\geq 1-\frac{\underset{j,k}{\max} D(P_{\mathbf{v}^{(j)}}||P_{\mathbf{v}^{(k)}})+\ln 2}{\ln N_G}\geq 1-\frac{\frac{1}{16}n\ln 2+\ln 2}{\frac{n}{8}\ln 2}.
\end{eqnarray}
For sufficiently large $N$,
\begin{eqnarray}
\text{P}(\hat{\mathbf{V}}\neq \mathbf{V})\geq \frac{1}{3}.
\end{eqnarray}
Note that if $\hat{\mathbf{V}}\neq \mathbf{V}$, then
\begin{eqnarray}
\norm{\hat{f}-f_\mathbf{V}}_1&\geq& \frac{1}{2}\norm{f_{\hat{\mathbf{V}}}-f_\mathbf{V}}_1\nonumber\\
&\geq & \frac{1}{2}H(\hat{\mathbf{V}},\mathbf{V})\times 4\int g(\mathbf{x})d\mathbf{x}\nonumber\\
&\geq & \frac{n}{4N}\int g(\mathbf{x})d\mathbf{x}.
\end{eqnarray}

To satisfy the assumptions, the maximum $n$ we can take is $n\sim r^d\sim N^{1-\beta}$. Then
\begin{eqnarray}
\mathbb{E}\left[\norm{\hat{f}-f_\mathbf{V}}_1\right]\geq \frac{1}{3}\frac{n}{4N}\int g(\mathbf{x})d\mathbf{x}\gtrsim N^{-\beta}.
\end{eqnarray}

Moreover, from the standard minimax analysis in \cite{tsybakov2009introduction}, it can be proved that
\begin{eqnarray}
\mathbb{E}\left[\norm{\hat{f}-f_\mathbf{V}}_1\right]\gtrsim N^{-\frac{2}{d+4}}.
\end{eqnarray}
Combine these two bounds, the proof of the minimax lower bound of density estimation with $\ell_1$ criterion is complete.

\section{Proof of Proposition \ref{prop}}\label{sec:kde}

In this appendix, we show a lower bound of the $\ell_1$ estimation error of the kernel density estimator. Recall that the kernel density estimator is defined as
\begin{eqnarray}
\hat{f}(\mathbf{x})=\frac{1}{Nh^d}\sum_{i=1}^NK\left(\frac{\mathbf{X}_i-\mathbf{x}}{h}\right),
\end{eqnarray}

in which $\int K(\mathbf{u})d\mathbf{u}=1$. For simplicity, we assume that $K$ is supported in $B(\mathbf{0},1)$.

Firstly, 
\begin{eqnarray}
\mathbb{E}\left[\norm{\hat{f}-f}_1\right]&=&\int \mathbb{E}[|\hat{f}(\mathbf{x})-f(\mathbf{x})|]d\mathbf{x}\nonumber\\
&\geq & \int |\mathbb{E}[\hat{f}(\mathbf{x})]-f(\mathbf{x})|d\mathbf{x}\nonumber\\
&=&\norm{f\star K_h-f}_1,
\label{eq:kde1}
\end{eqnarray}
in which $\star$ means convolution and $K_h(\cdot)=K(\cdot/h)/h^d$. $f\star K_h(\mathbf{x})$ is a weighted average of pdf in $B(\mathbf{x},h)$. Then there are many ways to construct $f$ so that
\begin{eqnarray}
\norm{f\star K_h-f}_1\gtrsim h^2.
\label{eq:kde2}
\end{eqnarray} 
We omit the detailed construction for simplicity. Moreover, define
\begin{eqnarray}
f_0(\mathbf{x})=\left\{
\begin{array}{ccc}
\frac{1}{Nv_dh^d} &\text{if} & \norm{\mathbf{x}}<r\\
\frac{1}{2v_dR^d} &\text{if} & \norm{\mathbf{x}-\mathbf{c}}<R,
\end{array}
\right.
\end{eqnarray}
in which $\norm{\mathbf{c}}$ is sufficiently large so that $B(\mathbf{0},r)$ and $B(\mathbf{c},R)$ do not intersect.

In order to ensure that $f_0(\mathbf{x})$ satisfies assumption (d), we set
\begin{eqnarray}
r=(Nv_dh^d)^{\frac{1-\beta}{d}},
\label{eq:r}
\end{eqnarray}
and for $\mathbf{x}\notin B(\mathbf{0},r)\cup B(\mathbf{c},R)$, $f_0$ is constructed so that assumptions (a)-(d) are satisfied.

If $B(\mathbf{x},h)\subset B(\mathbf{0},r)$, denote $n(\mathbf{x},h)$ as the number of samples in $B(\mathbf{x},h)$, then
\begin{eqnarray}
\text{P}(\hat{f}(\mathbf{x})=0)=\text{P}(n(\mathbf{x},h)=0)=\left(1-\frac{1}{N}\right)^N\rightarrow e^{-1} \text{ as } N\rightarrow \infty.
\end{eqnarray}

Thus for all $\mathbf{x}$ such that $B(\mathbf{x},h)\in B(\mathbf{0},r)$, i.e. $\mathbf{x}\in B(\mathbf{0},r-h)$,
\begin{eqnarray}
\mathbb{E}[|\hat{f}(\mathbf{x})-f(\mathbf{x})|]\geq \text{P}(\hat{f}(\mathbf{x})=0)f(\mathbf{x})=e^{-1}f(\mathbf{x}),
\end{eqnarray}
and
\begin{eqnarray}
\mathbb{E}\left[\int |\hat{f}(\mathbf{x})-f(\mathbf{x})|d\mathbf{x}\right]\geq \int_{B(\mathbf{0},r-h)}e^{-1} f(\mathbf{x})d\mathbf{x}=\frac{1}{Nev_dh^d}v_d(r-h)^d.
\end{eqnarray}
From \eqref{eq:r}, for sufficiently large $N$, $h<r/2$, hence
\begin{eqnarray}
\mathbb{E}\left[\norm{\hat{f}-f}_1\right]\gtrsim (Nh^d)^{-\beta}.
\label{eq:kde3}
\end{eqnarray}
Combining \eqref{eq:kde1}, \eqref{eq:kde2} and \eqref{eq:kde3}, we have
\begin{eqnarray}
\underset{f\in \Sigma_B}{\sup}\mathbb{E}\left[\norm{\hat{f}-f}_1\right]\gtrsim (Nh^d)^{-\beta}+h^2,
\end{eqnarray}
thus
\begin{eqnarray}
\underset{h}{\inf}\underset{f\in \Sigma_B}{\sup}\mathbb{E}\left[\norm{\hat{f}-f}_1\right]\gtrsim N^{-\frac{2\beta}{2+d\beta}}.
\end{eqnarray}

Moreover, the minimax lower bound is
\begin{eqnarray}
\underset{\hat{f}}{\inf}\underset{f\in \Sigma_B}{\sup}\mathbb{E}\left[\norm{\hat{f}-f}_1\right]\gtrsim N^{-\min\left\{\frac{2}{d+4},\beta\right\}}.
\end{eqnarray}

Kernel density estimator can not have a better convergence rate than the minimax lower bound. Therefore
\begin{eqnarray}
\underset{h}{\inf}\underset{f\in \Sigma_B}{\sup}\mathbb{E}\left[\norm{\hat{f}-f}_1\right]\gtrsim N^{-\min\left\{\frac{2\beta}{2+d\beta},\frac{2}{d+4}\right\}}.
\end{eqnarray}

\section{Proof of Theorem \ref{thm:unbounded-unif}}\label{sec:unbounded-unif}

Despite that for the $\ell_\infty$ error we use a kNN density estimator without truncation, for analysis convenience, we still define $a$ such that
\begin{eqnarray}
a&=&\min\left\{\frac{1}{2C_b(1-\ln f_c)}, \sqrt{\frac{1}{2C_c(1-\ln f_c)}}\right\},\\
f_c&=&\frac{4k}{Nv_da^d}.
\end{eqnarray}
This construction ensures that if $f(\mathbf{x})\geq f_c$, then Lemma \ref{lem:pdf2} holds for all $r\leq a$. Define
\begin{eqnarray}
S=\left\{\mathbf{x}|f(\mathbf{x})>f_c \right\},
\end{eqnarray}
and divide $S$ into two parts:
\begin{eqnarray}
S_1&=& \{\mathbf{x}|B(\mathbf{x},h)\subset S \},\label{eq:S1def}\\
S_2&=& S\setminus S_1,\label{eq:S2def}
\end{eqnarray}
in which
\begin{eqnarray}
h=\min\left\{\left(\frac{1}{16}\right)^\frac{1}{d},\frac{1}{2}\right\}a.
\label{eq:h}
\end{eqnarray}
We provide the uniform bound of the estimation error within $S$ and $S^c$ separately.

\textbf{Bound in $S$.} Similar to the case with bounded support, find $\mathbf{a}_1,\ldots,\mathbf{a}_n$, such that $\cup B(\mathbf{a}_i,r)$ covers $S$. Define $\Delta(N,k)$ such that
\begin{eqnarray}
\max\left\{D\left(\frac{k-1}{N}||\frac{k-1}{N}+\Delta(N,k)\right), D\left(\frac{k-1}{N}||\frac{k-1}{N}-\Delta(N,k)\right) \right\}=\frac{1}{N}\ln \frac{4n}{\epsilon}.
\end{eqnarray}
Then follow steps in the proof for distributions with bounded support, with probability at least $1-\epsilon/2$,
\begin{eqnarray}
\left|P(B(\mathbf{a}_i,\rho))-\frac{k-1}{N}\right|<\Delta(N,k),
\label{eq:proberr}
\end{eqnarray}
for all $i=1,\ldots,n$. Similar to Lemma \ref{lem:Delta}, it can be shown that
\begin{eqnarray}
\Delta(N,k)\leq 4\frac{k^{\frac{1}{2}}}{N}\sqrt{\ln \frac{4n}{\epsilon}}.
\label{eq:Deltabound2}
\end{eqnarray}
Consider that the condition of Lemma \ref{lem:pdf2} is satisfied for all $r\leq a$ if $f(\mathbf{x})>f_c$, follow the steps in Appendix \ref{sec:unbounded-l1}, \eqref{eq:masslb} holds, hence $P(B(\mathbf{a}_i,a))\geq 2k/N$. As long as \eqref{eq:proberr} holds, for sufficiently large $N$, $P(B(\mathbf{a}_i,\rho))<(k-1)/N+\Delta(N,k)<2k/N$. Therefore, $\rho<a$.

Then the bounds of $I_1$, $I_2$ and $I_3$ are the same as Appendix \ref{sec:bounded-unif}, except that \eqref{eq:b2} becomes
\begin{eqnarray}
&&\left|\frac{k-1}{NV(B(\mathbf{a}_i,\rho))}-\frac{k-1}{NP(B(\mathbf{a}_i,\rho))}f(\mathbf{a}_i)\right|\nonumber\\
&=&\frac{k-1}{NP(B(\mathbf{a}_i,\rho))}\left|\frac{P(B(\mathbf{a}_i,\rho))-f(\mathbf{a}_i)V(B(\mathbf{a}_i,\rho))}{V(B(\mathbf{a}_i,\rho))}\right|\nonumber\\
&\overset{(a)}{\leq}& \frac{k-1}{NP(B(\mathbf{a}_i,\rho))}C_c\rho^2 f(\mathbf{a}_i)\left(1+\ln\frac{1}{f(\mathbf{a}_i)}\right)\nonumber\\
&\overset{(b)}{\leq} & \frac{k-1}{NP(B(\mathbf{a}_i,\rho))}C_c\left(\frac{2P(B(\mathbf{a}_i,\rho))}{v_df(\mathbf{a}_i)}\right)^\frac{2}{d}f(\mathbf{a}_i)\left(1+\ln \frac{1}{f(\mathbf{a}_i)}\right)\nonumber\\
&\leq &\left\{
\begin{array}{ccc}
\left(\frac{2}{v_d}\right)^\frac{2}{d}C_c\frac{k-1}{N\left(\frac{k-1}{N}-\Delta(N,k)\right)^{1-\frac{2}{d}}} f^{1-\frac{2}{d}}(\mathbf{a}_i)\left(1+\ln \frac{1}{f(\mathbf{a}_i)}\right) &\text{if} & d\geq 2\\
\left(\frac{2}{v_d}\right)^2 C_c \frac{k-1}{N}\left(\frac{k-1}{N}+\Delta(N,k)\right)\frac{1}{f(\mathbf{a}_i)}\left(1+\ln \frac{1}{f(\mathbf{a}_i)}\right) &\text{if} & d=1
\end{array}
\right. \\
&\lesssim &\left\{
\begin{array}{ccc}
\left(\frac{k}{N}\right)^\frac{2}{d}\ln N &\text{if}& d\geq 2\\
\left(\frac{k}{N}\right)^2\ln N\frac{Na^d}{k} &\text{if} & d=1, 
\end{array}
\right.
\label{eq:ins-rate}
\end{eqnarray}

in which (a) comes from Lemma \ref{lem:pdf2}. (b) comes from \eqref{eq:masslb}.

Therefore, following the remaining steps in Appendix \ref{sec:bounded-unif}, we have
\begin{eqnarray}
\underset{\mathbf{x}\in S}{\sup}|\hat{f}(\mathbf{x})-f(\mathbf{x})|\lesssim \left\{
\begin{array}{ccc}
\left(\frac{k}{N}\right)^\frac{2}{d}+k^{-\frac{1}{2}}\sqrt{\ln \frac{N}{\epsilon}} &\text{if} & d\geq 2\\
\frac{k}{N} a^d\ln N+k^{-\frac{1}{2}}\sqrt{\ln \frac{N}{\epsilon}} &\text{if} & d=1.
\end{array}
\right. 
\end{eqnarray}

\textbf{Bound in $S^c$.} Define
\begin{eqnarray}
f_c=\frac{4k}{Nv_da^d}.
\end{eqnarray}

Recall the definition of $S_1$ in \eqref{eq:S1def}, for all $\mathbf{x}\notin S_1$, there exists a $\mathbf{x}'$ such that $\norm{\mathbf{x}'-\mathbf{x}}<h$ and $\mathbf{x}'\notin S$. Since $\mathbf{x}'\notin S$, $f(\mathbf{x}')\leq f_c$. Hence for all $\mathbf{x}\notin S_1$, 
\begin{eqnarray}
P(B(\mathbf{x},h))\leq f_+(\mathbf{x},h)V(B(\mathbf{x},h))\leq f_+(\mathbf{x}',2h)V(B(\mathbf{x},h))\leq 2f_c V(B(\mathbf{x},h)).
\end{eqnarray}
From \eqref{eq:h},
\begin{eqnarray}
2f_cV(B(\mathbf{x},h))\leq\frac{k}{2N}.
\end{eqnarray}

Define event $E_j$, such that $\mathbf{X}_j\notin S_1$ and $\rho_{k-1}(\mathbf{X}_j)<r_0$, in which $\rho_{k-1}(\mathbf{X}_j)$ is the $(k-1)$-th nearest neighbor distance of point $\mathbf{X}_j$, and $E=\cup_{j=1}^N E_j$. Then according to Chernoff inequality, 
\begin{eqnarray}
&&\text{P}(E_j)=\text{P}\left(\mathbf{X}_j\notin S_1, \rho_{k-1}(\mathbf{X}_j)<r_0\right)\nonumber
\\&\leq & \mathbb{E}\left[e^{-(N-1)P(B(\mathbf{x},r_0))}\left(\frac{e(N-1)P(B(\mathbf{X}_j,r_0))}{k-1}\right)^{k-1}\mathbf{1}(\mathbf{X}_j\notin S_1)\right]\nonumber\\
&\leq & e^{-\frac{1}{2}k}\left(\frac{1}{2}e\right)^k\nonumber\\
&=&e^{-\left(\ln 2-\frac{1}{2}\right)k}.
\end{eqnarray}
Hence
\begin{eqnarray}
\text{P}(E)=\text{P}\left(\cup_{j=1}^N E_j\right)\leq Ne^{-\left(\ln 2-\frac{1}{2}\right)k}.
\end{eqnarray}

If $k/\ln N\rightarrow \infty$, then for sufficiently large $N$, $\text{P}(E)<\epsilon/2$. The remaining proof assumes that $E$ does not happen. This condition holds with probability at least $1-\epsilon/2$. Then $\rho(\mathbf{X}_j)\geq h$ if $\mathbf{X}_j\notin S_1$. For all $\mathbf{x}\in S^c$, we have $\rho(\mathbf{x})\geq h/2$, because if $\rho(\mathbf{x})<h/2$, then there exists at least $k$ points in $B(\mathbf{x},h/2)$. According to the definition of $S$, $S_1$ and $S_2$, $B\left(\mathbf{x},h/2\right)\cap S_1=\emptyset$, thus $B(\mathbf{x},h/2)\subset S_1^c$. Therefore $\exists \mathbf{X}_j\in S_1^c$, and $\rho_{k-1}(\mathbf{X}_j)<h$, which contradicts with the assumption that $E$ does not happen. Therefore $\rho(\mathbf{x})\geq h/2$ holds for all $\mathbf{x}\in S^c$. Then
\begin{eqnarray}
V_0(B(\mathbf{x},h))\geq \frac{1}{2^d}v_dh^d,
\end{eqnarray}
and
\begin{eqnarray}
\hat{f}(\mathbf{x})\leq \frac{k-1}{NV_0(B(\mathbf{x},\rho(\mathbf{x})))}\leq \frac{k-1}{NV\left(B\left(\mathbf{x},\frac{1}{2}h\right)\right)}=\frac{2^d(k-1)}{Nv_dh^d}, \forall \mathbf{x}\in S^c.
\end{eqnarray}
From \eqref{eq:h},
\begin{eqnarray}
\hat{f}(\mathbf{x})\lesssim \frac{k}{Na^d}.
\label{eq:outs-rate}
\end{eqnarray}
From \eqref{eq:ins-rate} and \eqref{eq:outs-rate}, and note that $a\sim \ln N$ for sufficiently large $N$, with probability at least $1-\epsilon$, 
\begin{eqnarray}
\underset{\mathbf{x}}{\sup}|\hat{f}(\mathbf{x})-f(\mathbf{x})|\lesssim \left\{
\begin{array}{ccc}
\left(\frac{k}{N}\right)^\frac{2}{d}+k^{-\frac{1}{2}}\sqrt{\ln \frac{N}{\epsilon}} &\text{if} & d>2\\
\frac{k}{N}\ln^d N+k^{-\frac{1}{2}}\sqrt{\ln \frac{N}{\epsilon}} &\text{if} & d=1,2.
\end{array}
\right.
\end{eqnarray}
\section{Proof of Theorem \ref{thm:unbounded-unif-lb}}\label{sec:unbounded-unif-lb}

Define
\begin{eqnarray}
f_v(\mathbf{x}) = f_0(\mathbf{x})+vr^2 g\left(\frac{\mathbf{x}-\mathbf{a}_1}{r}\right)-vr^2 g\left(\frac{\mathbf{x}-\mathbf{a}_2}{r}\right),
\end{eqnarray}
in which $f_0$ is a fixed pdf, which ensures that $f_0(\mathbf{x})\geq m$ for $\mathbf{x}\in B(\mathbf{a}_1,r)\cap B(\mathbf{a}_2, r)$. $g(\mathbf{u})$ is an arbitrary function that supports on $B(\mathbf{0}, 1)$, has bounded Hessian and reaches its maximum $g_m$ at $\mathbf{u}=\mathbf{0}$. Then for any estimator $\hat{f}$,
\begin{eqnarray}
\underset{f\in \Sigma_C}{\sup} \mathbb{E}\left[\norm{\hat{f}-f}_\infty\right] &\geq & \underset{v\in \{-1,1\}}{\sup}\mathbb{E}\left[\norm{\hat{f}-f_v}_\infty\right]\nonumber\\
&\geq & \mathbb{E}\left[\norm{\hat{f}-f_V}_\infty\right]\nonumber\\
&\geq & \frac{1}{4}\norm{f_{v_1}-f_{v_2}}_\infty e^{-ND(f_{v_1}||f_{v_2})}\nonumber\\
&\geq & r^2 e^{-Nr^{d+4}}.
\end{eqnarray}
Let $r\sim N^{-1/(d+4)}$, then
\begin{eqnarray}
\underset{f\in \Sigma_C}{\sup} \mathbb{E}\left[\norm{\hat{f}-f}_\infty \right]\gtrsim N^{-\frac{2}{d+4}}.
\end{eqnarray}

\small \bibliography{macros,knn}
\bibliographystyle{ieeetran}
\end{document}